\newtheorem{assumption}{Assumption}
\newcommand{\blue}[1]{\textcolor{blue}{#1}}
\def \A {\mathbf{A}}
\def \R {\mathbb{R}}
\def \vbf {\mathbf{v}}
\def \x {\mathbf{x}}
\def \E {\mathbb{E}}
\def \a {\mathbf{a}}
\def \dbf {\mathbf{d}}
\def \z {\mathbf{z}}
\def \y {\mathbf{y}}
\def \K {\mathcal{K}}
\def \g {\mathbf{g}}
\def \F {\mathcal{F}}
\begin{document}

\title{Communication-Efficient Decentralized Online Continuous DR-Submodular Maximization}

\author{\name Qixin Zhang \email qxzhang4-c@my.cityu.edu.hk\\
       \addr School of Data Science\\
       City University of Hong Kong\\
       Kowloon, Hong Kong, China
      \AND
       \name Zengde Deng \email zengde.dzd@cainiao.com \\
      \addr Cainiao Network\\
      Hang Zhou, China
       \AND
       \name Xiangru Jian \email xiangjian2-c@my.cityu.edu.hk\\
      \addr School of Data Science\\
       City University of Hong Kong\\
       Kowloon, Hong Kong, China
       \AND
       \name Zaiyi Chen \email zaiyi.czy@cainiao.com \\
      \addr Cainiao Network\\
      Hang Zhou, China
      \AND
       \name Haoyuan Hu \email haoyuan.huhy@cainiao.com \\
      \addr Cainiao Network\\
      Hang Zhou, China
      \AND
      \name Yu Yang \email yuyang@cityu.edu.hk\\
      \addr School of Data Science\\
       City University of Hong Kong\\
       Kowloon, Hong Kong, China
      }

\maketitle

\begin{abstract}
Maximizing a monotone submodular function is a fundamental task in machine learning, economics, and statistics. In this paper,  we present two communication-efficient decentralized online algorithms for the monotone continuous DR-submodular maximization problem, both of which reduce the number of per-function gradient evaluations and per-round communication complexity from $T^{3/2}$ to $1$. The first one, One-shot Decentralized Meta-Frank-Wolfe~(\textbf{Mono-DMFW}), achieves a $(1-1/e)$-regret bound of $O(T^{4/5})$. As far as we know, this is the first \emph{one-shot} and \emph{projection-free} decentralized online algorithm for monotone continuous DR-submodular maximization. Next, inspired by the non-oblivious boosting function~\citep{zhang2022boosting}, we propose the Decentralized Online Boosting Gradient Ascent~(\textbf{DOBGA}) algorithm, which attains a $(1-1/e)$-regret of $O(\sqrt{T})$. To the best of our knowledge, this is the first result to obtain the optimal $O(\sqrt{T})$ against a $(1-1/e)$-approximation with only one gradient inquiry for each local objective function per step. Finally, various experimental results confirm the effectiveness of the proposed methods.
\end{abstract}

\section{Introduction}
With the rapid development of digital systems, communication, and sensing technologies~\citep{rabbat2004distributed,abu2013data}, 
numerous large-scale datasets are collected over the networked machines. 
To deal with large-scale datasets in data-intensive applications, it is urgent to design algorithms in a decentralized manner, where computing units cooperatively optimize the global objective functions throughout local computation and network communications among each other. Clearly, compared to centralized algorithms, decentralized algorithms that efficiently exploiting dispersed computational resources are more scalable. Moreover, decentralized algorithms have advantages in relieving the data privacy risk, as computing nodes often only share very limited local information with each other.


Continuous DR-submodular maximization, an important subclass of the non-convex/non-concave optimization, has received considerable attention in recent years, due to its various applications in machine learning, economics, and statistics. For example, viral marketing~\citep{kempe2003maximizing,yang2016continuous}, revenue maximization~\citep{soma2017non,bian2020continuous}, non-definite quadratic programming~\citep{ito2016large}, determinantal point processes~\citep{kulesza2012determinantal,mitra2021submodular+}, and so on. Previously, a myriad of the existing literature has studied the \emph{centralized} and \emph{static} monotone continuous DR-submodular maximization problems~\citep{bian2017guaranteed,hassani2017gradient,mokhtari2018conditional,hassani2020stochastic}. However, in many real scenarios, the objectives not only are stored in a network of computing nodes but also vary with time~\citep{hazan2016introduction}. Thus, in this paper, we focus on the decentralized online monotone continuous DR-submodular maximization, where only the stochastic gradient oracles are available. 

Recently, \citet{zhu2021projection} have proposed the first decentralized online algorithm~(i.e., the decentralized Meta-Frank-Wolfe~(\textbf{DMFW})) for monotone continuous DR-submodular maximization, which achieves $(1-1/e)$-approximation guarantee with an expected regret bound of $O(\sqrt{T})$ where $T$ is the time horizon. 
Noticeably, at each round, \textbf{DMFW} needs to inquire $T^{3/2}$ stochastic gradient estimates for each local objective function and then passes these gradient messages over the network one by one, resulting in a large computation and communication overhead when $T$ is large.
In view of this, we consider the following question:
\begin{enumerate}
	\item[] \textbf{Can we design communication-efficient algorithms for the decentralized online monotone continuous DR-submodular maximization problem which guarantee the tight $(1-1/e)$ approximation ratio?}
\end{enumerate}

This paper provides an affirmative answer to this question by presenting two communication-efficient decentralized online algorithms for monotone continuous DR-submodular maximization, which both reduce the per-round communication complexity and the gradient evaluations of each local objective from $T^{3/2}$ to $1$. The first one, \textbf{Mono-DMFW} algorithm, equipped with the blocking procedures~\citep{zhang2019online}, attains a $(1-1/e)$-regret of $O(T^{4/5})$. Then, motivated via the non-oblivious function in \citet{zhang2022boosting}, we propose the decentralized online boosting gradient ascent~(\textbf{DOBGA}) with $(1-1/e)$-regret of $O(\sqrt{T})$. The contributions of this paper are summarized as follows.
\begin{itemize}
	\item Inspired by the blocking techniques in \citet{zhang2019online}, we first propose the \textbf{Mono-DMFW} algorithm, which improves the number of per-function gradient evaluations and per-round communication complexity from $T^{3/2}$ to $1$. Moreover, the \textbf{Mono-DMFW} algorithm achieves an expected regret of $O(T^{4/5})$ against the tight $(1-1/e)$-approximation to the best feasible solution in hindsight. To the best of our knowledge, the \textbf{Mono-DMFW} is the first \emph{one-shot} and \emph{projection-free} algorithm for decentralized online monotone continuous DR-submodular maximization.
	\item Next, we present the decentralized online boosting gradient ascent (\textbf{DOBGA}), which merges
	the non-oblivious auxiliary function~\citep{zhang2022boosting} into the classical \emph{one-shot} distributed online gradient ascent framework~\citep{yan2012distributed}. We proceed to verify that the \textbf{DOBGA} attains a $(1-1/e)$-regret of $O(\sqrt{T})$. It is worth mentioning that the $(1-1/e)$-regret of $O(\sqrt{T})$ result not only achieves the tight approximation guarantee for the monotone continuous DR-submodular maximization~\citep{bian2017guaranteed}, but also matches the optimal $O(\sqrt{T})$ regret~\citep{hazan2016introduction}.
	\item Finally, we evaluate the performance of our proposed algorithms on the real-world dataset. Numerical experiments demonstrate the effectiveness of our methods.
\end{itemize}

\section{Related Works}
\begin{table}[t]
	\renewcommand\arraystretch{1.35}
	\centering
	\caption{Comparison of decentralized algorithms for online continuous monotone DR-submodular function maximization with stochastic gradient oracles. '\textbf{Ratio}' means approximation ratio; '\textbf{\# Com}' means the communication complexity for each node at each round; '\textbf{\# Grad}' means the number of stochastic gradient evaluations for each local objective function at each round, and '\textbf{Projection-free}' indicates whether the algorithm is projection-free.}
    \resizebox{\textwidth}{!}{
    \begin{tabular}{c|c|c|c|c|c}
        \toprule[1.5pt]
        \textbf{Method} &\textbf{Ratio}& \textbf{Regret}& \textbf{\# Com}&\textbf{\# Grad}&\textbf{Projection-free}\\
        \hline
        \textbf{DMFW}~\citep{zhu2021projection}&$1-1/e$&$O(\sqrt{T})$&$T^{3/2}$&$T^{3/2}$&Yes\\
        \hline 
        \textbf{Mono-DMFW}~(This paper)&$1-1/e$&$O(T^{4/5})$&$1$&$1$&Yes\\
        \hline
        \textbf{DOBGA}~(This paper)&$1-1/e$&$O(\sqrt{T})$&$1$&$1$&No\\
        \midrule[1.5pt]
    \end{tabular}
    }
	\label{tab:compare}
\end{table} 

Continuous DR-submodular maximization has been extensively investigated since it admits efficient approximate maximization routines.
First, in the deterministic setting, \citet{bian2017guaranteed} presented a variant of Frank-Wolfe for maximizing \emph{centralized} and \emph{static} monotone continuous DR-submodular function, which achieves the tight $(1-1/e)$-approximation guarantee. \citet{hassani2017gradient} proceeded to show that the canonical stochastic gradient ascent achieves $(1/2)$-approximation guarantee. Next, \citet{mokhtari2018conditional} proposed the stochastic continuous greedy algorithm, which attains a $(1-1/e)OPT-\epsilon$ after $O(1/\epsilon^{3})$ iterations where $OPT$ is the optimal value. Then, an accelerated stochastic continuous greedy algorithm is presented in \citet{hassani2020stochastic}, which guarantees a $(1-1/e)$-approximation after $O(1/\epsilon^{2})$ iterations. In the online setting, \citet{chen2018projection,chen2018online} first proposed a novel Meta-Frank-Wolfe algorithm achieving the tight $(1-1/e)$-approximation of square-root regret under both deterministic and stochastic setting. After that, \citet{zhang2019online} proposed the blocking techniques to improve the per-function gradient evaluations of Meta-Frank-Wolfe from $T^{3/2}$~\citep{chen2018projection} and $T^{1/2}$~\citep{chen2018online} to $1$. Like the offline gradient ascent algorithm~\citep{hassani2017gradient}, the classical online gradient ascent also attains the suboptimal $(1/2)$-approximation guarantee~\citep{chen2018online}. Recently, \citet{zhang2022boosting} have proposed an auxiliary function to boost the approximation ratio of the offline and online gradient ascent algorithms from $1/2$ to $1-1/e$.

A large body of literature has studied decentralized convex optimization problems. \citet{nedic2009distributed} first proposed the decentralized gradient descent algorithm, which combines the consensus technique with local gradient descent. \citet{yuan2016convergence} proceeded to derive the convergence of decentralized gradient descent. However, these works assume that the objective function is unchanged with time. To tackle the varying objectives, \citet{yan2012distributed} then proposed the framework of decentralized online optimization, where they proved that the decentralized online projected gradient descent algorithm achieves a regret of $O(\log(T))$ and $O(\sqrt{T})$ for strongly convex objectives and convex function, respectively. Next, \citet{hosseini2013online} presented the distributed online dual averaging algorithm with the regret of $O(\sqrt{T})$ for convex objective. Inspired by the online conditional gradient~\citep{hazan2016introduction}, \citet{zwp2017projection} proposed a projection-free decentralized online algorithm for convex optimization and proved a regret bound of $O(T^{3/4})$. As for the decentralized submodular maximization, in the offline setting, \citet{mokhtari2018decentralized} proposed a decentralized continuous greedy algorithm with the tight $(1-1/e)$-approximation guarantee. Next, \citet{gao2021sample} presented a sample efficient decentralized algorithm for continuous DR-submodular maximization. Under the online scenario, \citet{zhu2021projection} merged the consensus technique and variance reduction technique into the Meta-Frank-Wolfe~\citep{chen2018online} to propose the decentralized Meta-Frank-Wolfe (\textbf{DMFW}) with an expected $(1-1/e)$-regret of $O(\sqrt{T})$, which needs to inquire $T^{3/2}$ stochastic gradients for each local function and communicate these information over network.

We make a comparison between the state-of-the-art \textbf{DMFW} and our algorithms for decentralized online monotone continuous DR-submodular maximization in \cref{tab:compare}.

\section{Preliminaries}
In this section, we introduce some necessary notions.

\subsection{Notations}
In this paper, we use $\R$ and $\R_{+}$ to denote the set of real numbers and non-negative real numbers, respectively. 
Besides, the lower boldface (e.g. $\x$) denotes a column vector with suitable dimension and uppercase boldface (e.g. $\A$) for a matrix. $\x^{T}$ and $\A^{T}$ denote the transpose of the vector $\x$ and the matrix $\A$, respectively. Specially, $\mathbf{I}$ and $\mathbf{1}$ represent the identity matrix and the vector whose all entries are $1$, respectively. We denote the $i$-th element of a vector $\x$ as $(\x)_{i}$. Given two $n$-dimensional vector $\x$ and $\y$, we say $\x\ge\y$ if $(\x)_{i}\ge(\y)_{i}$ for all $i\in[n]$. The product $\langle\x,\y\rangle=\sum_{i}(\x)_{i}(\y)_{i}$ and the norm $\left\|\x\right\|=\sqrt{\langle\x,\x\rangle}$. Additionally, for any convex set $\mathcal{K}$, the radius $r(\mathcal{K})=\max_{\x\in\K} \left\|\x\right\|$ and the diameter $\mathrm{diam}(\K)=\max_{\x,\y\in\K} \left\|\x-\y\right\|$. For any positive integer $K$, $[K]$ denotes the set $\{1,\dots, K\}$. The symbol $\otimes$ denotes the Kronecker product.
\subsection{Continuous DR-Submodularity}
In this subsection, we recall the definition of continuous DR-submodular function. A continuous function $f:\mathcal{X}\rightarrow\R_{+}$ is {\it DR-submodular} if
\begin{align*}
	f(\x+z\boldsymbol{e}_{i})-f(\x)\le f(\y+z\boldsymbol{e}_{i})-f(\y),
\end{align*} where $\boldsymbol{e}_{i}$ is the $i$-th basic vector, $\x\ge\y$ and $z\in \mathbb{R}_{+}$ such that $\x+z\boldsymbol{e}_{i}, \y+z\boldsymbol{e}_{i}\in\mathcal{X}$.
Here, $\mathcal{X}=\prod_{i=1}^{n}\mathcal{X}_{i}$ where each $\mathcal{X}_{i}$ is a compact interval in $\mathbb{R}_{+}$. In this paper, we assume $\mathcal{X}_{i}=[0,a_{i}]$ where $a_{i}\in\R_{+}$. If the DR-submodular function $f$ is differentiable, we have $\nabla f(\x)\le\nabla f(\y)$ for any $\x\ge\y$. Furthermore, the DR-submodularity of function $f$ implies that $f$ is concave along the positive direction~\citep{bian2020continuous}, i.e., for any $\x,\y\in\mathcal{X}$ and $\y\ge\x$, it holds that 
\begin{equation*}
   f(\y)\le f(\x)+\langle\nabla f(\x), \y-\x\rangle. 
\end{equation*}
If $f$ is twice differentiable, the continuous DR-submodularity is equivalent to 
\begin{align*}
    \forall i,j\in[n],\forall\x\in\mathcal{X},\frac{\partial^{2}f(\x)}{\partial x_{i}\partial x_{j}}\le 0.
\end{align*}
Moreover, a differentiable function $f:\mathcal{X}\rightarrow\R_{+}$ is called $L$-$smooth$ if for any $\x,\y\in\mathcal{X}$, we have 
\begin{equation*}
    f(\y)\le f(\x)+\langle\nabla f(\x), \y-\x\rangle+\frac{L}{2}\|\y-\x\|^{2},
\end{equation*} which implies that the gradient of $f$ is $L$-lipschitz, i.e., $\left\|\nabla f(\x)-\nabla f(\y)\right\|\le L\left\|\x-\y\right\|$. Finally, we say that the function $f$ is {\it monotone} iff $f(\x)\ge f(\y)$ for any  $\x\ge\y$ and $\x,\y\in\mathcal{X}$.

\subsection{Problem Formulation}
In this subsection, we introduce the decentralized online monotone continuous DR-submodular maximization problem. 
We use the undirected connected graph $\mathcal{G}=(\mathcal{V},\mathcal{W})$ to denote the network of the computing nodes, where $\mathcal{V}=\{1,\dots,N\}$ represents the set of nodes and $\mathcal{W}\subseteq\mathcal{V}\times\mathcal{V}$ denotes the set of edges. 
The symbol $\mathcal{N}_{i}:=\{j\in\mathcal{V}|(i,j)\in\mathcal{W}\}$ denotes the set of the neighbors of node $i$. 
In this paper, we assume that each node $i\in\mathcal{V}$ only gets access to its local objective function and communicates the information with its neighbors in $\mathcal{N}_{i}$. Also, we define $a_{ij}\ge0$ to be the weight that node $i$ assigns to node $j$. If $(i,j)\notin\mathcal{W}$, $a_{ij}=0$. Moreover, the weight matrix $\A=[a_{ij}]\in\R_{+}^{N\times N}$ satisfies the following assumption.
\begin{assumption}\label{ass1}
	The matrix $\A\in\R_{+}^{N\times N}$ is symmetric and doubly stochastic, i.e., $\A^{T}=\A$ and $\A\mathbf{1}=\mathbf{1}$. Regarding the eigenvalue of $\A$, i.e., $1=\lambda_{1}(\A)\ge\lambda_{2}(\A)\dots\ge\lambda_{n}(\A)\ge-1$, the $\beta<1$, where $\beta=\max(|\lambda_{2}(\A)|,|\lambda_{n}(\A)|)$ is the second largest magnitude of the eigenvalues of $\A$.
\end{assumption}

In a $T$-$round$ decentralized online optimization, each node $i$ first chooses an action $\x_{i}(t)$ from the constraint set $\mathcal{K}$ at each round. Then, the adversary reveals a continuous DR-submodular function $f_{t,i}:\mathcal{X}\rightarrow\R_{+}$ and feeds back the reward $f_{t,i}(\x_{i}(t))$ to the node $i$. The goal of nodes is to online maximize the aggregate continuous DR-submodular function, i.e., $\max_{\x\in\mathcal{K}}\sum_{t=1}^{T}\sum_{i=1}^{N}f_{t,i}(\x)$. Note that, according to \citet{bian2017guaranteed}, maximizing a monotone continuous DR-submodular function subject to a general convex constraint is NP-hard. As a result, we turn to the $\alpha$-regret of each node $j\in\mathcal{V}$ as follows :
\begin{equation}\label{equ:regret}
	\mathcal{R}_{\alpha}(T,j)=\alpha\sup_{\x\in\mathcal{K}}\frac{1}{N}\sum_{t=1}^{T}\sum_{i=1}^{N}f_{t,i}(\x)-\frac{1}{N}\sum_{t=1}^{T}\sum_{i=1}^{N}f_{t,i}(\x_{j}(t)),
\end{equation} 
where $\alpha$ is the approximation ratio. Therefore, in this paper, we aim to design communication-efficient decentralized online algorithms such that 1)~at each round, each node $i\in\mathcal{V}$ only passes messages to its neighbors $\mathcal{N}_{i}$, and 2)~the $\alpha$-regret of each node $i\in\mathcal{V}$ is sublinear in $T$, namely, $\lim_{T\rightarrow\infty}\max_{i\in\mathcal{V}} \mathcal{R}_{\alpha}(T,i)/T=0$. In the following sections, we will consider the tight approximation ratio $\alpha=1-1/e$~\citep{bian2017guaranteed}. Furthermore, we make the following assumptions throughout this paper.
\begin{assumption}\label{ass2}
	The domain $\mathcal{K}\subseteq\mathcal{X}$ is a bounded convex set.
\end{assumption}

\begin{assumption}\label{ass3}
	Each local function $f_{t,i}:\mathcal{X}\rightarrow\R_{+}$ is differentiable, monotone, continuous DR-submodular and $L$-smooth, where $i\in[N]$ and $t\in[T]$.
\end{assumption}

\begin{assumption}\label{ass4}
	For any $t\in[T]$ and $i\in[N]$, there exists a stochastic gradient oracle $\widetilde{\nabla}f_{t,i}(\x)$ with  $\E(\widetilde{\nabla}f_{t,i}(\x)|\x)=\nabla f_{t,i}(\x)$ and $\E(\|\nabla f_{t,i}(\x)-\widetilde{\nabla}f_{t,i}(\x)\|^{2})\le\sigma^{2}$.
\end{assumption}

\section{One-Shot Decentralized Meta-Frank-Wolfe Algorithm}\label{sec:one-shot}
To begin, we review the detailed results of \textbf{DMFW} algorithm ~\citep{zhu2021projection}, which is the first algorithm for the decentralized online monotone continuous DR-submodular maximization problem. The \textbf{DMFW} needs to inquire stochastic gradients at $K$ different points for each local function and then sequentially passes these gradients information to the neighboring nodes, which will incur the $K$ amounts of communications over the network at each round. \citet{zhu2021projection} also have verified that the \textbf{DMFW} achieves a $(1-1/e)$-regret of $O(\sqrt{T}+\frac{T}{K^{1/3}})$. In order to attain the lowest $O(\sqrt{T})$-regret bound, we usually set the $K=T^{3/2}$, which will incur huge gradient evaluations and prohibitive communication complexity when $T$ is large. As a result, in this section, we propose the one-shot decentralized Meta-Frank-Wolfe algorithm~(\textbf{Mono-DMFW}), which reduces the number of both per-round communication complexity and per-function gradient evaluations from $K$ to $1$.

\begin{algorithm}[t]
	\caption{One-Shot Decentralized Meta-Frank-Wolfe~(\textbf{Mono-DMFW})}
	\begin{algorithmic}[1]\label{alg:1}
	\STATE{\bf Input:} Time horizon $T$, the number of nodes $N$, blocking parameter $Q$ and $K$ where $T=KQ$, weight matrix $\A=[a_{ij}]\in\R^{N\times N}$, step size $\eta_{k},\forall k\in[K]$, parameter $\gamma$.
	\STATE {\bf Output:} $\{\x_{i}(t): i\in[N],t\in[T]\}$.
	\STATE Initialize $K$ online linear optimization oracles, $\mathcal{E}_{i}^{(1)},\dots,\mathcal{E}_{i}^{(K)}$ for each $i\in[N]$.
	\STATE For any $q\in[Q]$, initialize $\dbf_{i}^{(0)}(q)=\x_{i}^{(0)}(q)=\g_{i}^{(0)}(q)=\mathbf{0}$.
    \FOR{$q=1,\ldots, Q$}
    \FOR{$k=1,\dots,K$}
    \FOR{$i=1,\dots,N$}
    \STATE Receive the update direction $\vbf_{i}^{(k)}(q)$ which is the output of oracle $\mathcal{E}_{i}^{(k)}$.
    \STATE $\x_{i}^{(k)}(q)=\sum_{j\in\mathcal{N}_{i}\cup\{i\}}a_{ij}\x_{j}^{(k-1)}(q)+\frac{1}{K}\vbf_{i}^{(k)}(q)$.
	\ENDFOR
	\ENDFOR
	\FOR{$i=1,\dots,N$}
	\FOR{$t=(q-1)K+1,\dots,qK$} 
	\STATE Node $i$ plays $\x_{i}(t)=\x_{i}^{(K)}(q)$ to get reward $f_{t,i}(\x_{i}(t))$ and observes the stochastic gradient information of $f_{t,i}$.
	\ENDFOR
	\STATE Generate a random permutation $\{t_{i}^{(1)}(q),\dots,t_{i}^{(K)}(q)\}$ for $\{(q-1)K+1,\dots,qK\}$.
	\FOR{$k=1,\dots,K$}
	\STATE $\g_{i}^{(k)}(q)=(1-\eta_{k})\g_{i}^{(k-1)}(q)+\eta_{k}\widetilde{\nabla}f_{t_{i}^{(k)}(q),i}(\x_{i}^{(k)}(q))$.
	\STATE $\dbf_{i}^{(k)}(q)=(1-\gamma)\sum_{j\in\mathcal{N}_{i}\cup\{i\}}a_{ij}\dbf_{j}^{(k-1)}(q)+\gamma\g_{i}^{(k)}(q)$.
	\STATE Feed back $\langle\dbf_{i}^{(k)}(q),\vbf_{i}^{(k)}(q)\rangle$ as the payoff to be received by oracle $\mathcal{E}_{i}^{(k)}$.
	\ENDFOR
	\ENDFOR
	\ENDFOR
	\end{algorithmic}
\end{algorithm}

To obtain the one-shot decentralized algorithm, we adopt the blocking procedure in \citet{zhang2019online}. For each node $i\in\mathcal{N}$, we divide the $T$ objective functions $f_{1,i},\dots,f_{T,i}$ into $Q$ equisized blocks of size $K$ where $T=QK$. For instance, $\{f_{(q-1)K+1,i},\dots,f_{qK,i}\}$ are included in the $q$-th block of node $i$. Then, we define the average function in the $q$-th block of node $i$ as $\bar{f}_{q,i}=\sum_{k=1}^{K}f_{(q-1)K+k,i}/K$ where $q\in [Q]$. If we view the $\bar{f}_{1,i},\dots,\bar{f}_{Q,i}$ as the virtual local objective functions for node $i$, we could obtain a new $Q$-round decentralized online problem. That is, after each node $i$ first chooses an action $\y_{i}(q)\in\mathcal{K}$ at each round, the environment reveals the reward function $\bar{f}_{q,i}$ and feeds back the reward $\bar{f}_{q,i}(\y_{i}(q))$ to the node $i$. The goal of us is also to minimize the $\alpha$-regret $\bar{\mathcal{R}}_{\alpha}(Q,i)$ where $\bar{\mathcal{R}}_{\alpha}(Q,i)=\alpha\sup_{\x\in\mathcal{K}}\frac{1}{N}\sum_{q=1}^{Q}\sum_{j=1}^{N}\bar{f}_{q,j}(\x)-\frac{1}{N}\sum_{q=1}^{K}\sum_{j=1}^{N}\bar{f}_{q,j}(\y_{i}(q))$.

Note that, if we play the same action $\y_{i}(q)\in\mathcal{K}$ for the all upcoming objective functions in the $q$-th block of node $i$, the $\alpha$-regret of the original $T$-round decentralized online problem is $K$ times as that of the new $Q$-round game, i.e., $\mathcal{R}_{\alpha}(T,i)=K\bar{\mathcal{R}}_{\alpha}(Q,i)$. Therefore, we have $\mathcal{R}_{\alpha}(T,i)/T=K\bar{\mathcal{R}}_{\alpha}(Q,i)/T=\bar{\mathcal{R}}_{\alpha}(Q,i)/Q$, meaning that an algorithm with sublinear regret for the new $Q$-round problem could provide a feasible solution with sublinear regret for the original $T$-round game. Since each local average function $\bar{f}_{q,i}$ is also monotone and continuous DR-submodular, a simple way to design a sublinear algorithm for the new $Q$-round problem is directly carrying the \textbf{DMFW} algorithm.

Similarly, \textbf{DMFW} algorithm needs to inquire stochastic gradients at $K$ different points for each local average function $\bar{f}_{q,i}$ and then sequentially communicates these gradients' information with the neighboring nodes. 
Noticeably, in $q$-th block of each node $i\in[N]$, there exist $K$ different unbiased gradient oracles $\{\widetilde{\nabla}f_{(q-1)K+1,i},\dots,\widetilde{\nabla}f_{qK,i}\}$. Next, we present a method to generate stochastic gradients of $\bar{f}_{q,i}$ at $K$ different points throughout these $K$ gradient oracles $\{\widetilde{\nabla}f_{(q-1)K+1,i},\dots,\widetilde{\nabla}f_{qK,i}\}$. To be precise, let $\{t_{i}^{(1)}(q),\dots,t_{i}^{(K)}(q)\}$ be a random permutation of the indices $\{(q-1)K+1,\dots,qK\}$ of node $i$. Then, for each $t_{i}^{(k)}(q)$ where $k\in[K]$, we have $\E(f_{t_{i}^{(k)}(q),i}(\x)|\x)=\bar{f}_{q,i}(\x)$ and $\E(\widetilde{\nabla}f_{t_{i}^{(k)}(q),i}(\x)|\x)=\nabla\bar{f}_{q,i}(\x)$. 
As a result, with only one gradient evaluation per function $f_{t_{i}^{(k)}(q),i}$ where $k\in[K]$, we can obtain $K$ unbiased stochastic gradients of the virtual objective function $\bar{f}_{q,i}$.
In this way, for each node $i\in[N]$, we only need to inquire one gradient estimate of the local function $f_{t,i}$ and share this message with the neighbors, which successfully reduce the required number of both per-round communication complexity and per-function gradient evaluations from $K$ to $1$. Merging this blocking technique into the \textbf{DMFW}, we get the one-shot decentralized Meta-Frank-Wolfe~(\textbf{Mono-DMFW}) in Algorithm~\ref{alg:1}.

In Algorithm~\ref{alg:1}, each node $i\in[N]$ at round $q$ keeps track of two local variables $\x_{i}^{(k)}(q)$ and $\dbf_{i}^{(k)}(q)$ which are iteratively updated using the information of the neighboring nodes. After receiving the local update direction $\vbf_{i}^{(k)}(q)$, node $i\in[N]$ updates their local variable $\x_{i}^{(k-1)}(q)$ by averaging their local and neighboring decisions and ascends in the direction $\vbf_{i}^{(k)}(q)$ with stepsize $1/K$~(see line 9), i.e., 
\begin{equation*}
    \x_{i}^{(k)}(q)=\sum_{j\in\mathcal{N}_{i}\cup\{i\}}a_{ij}\x_{j}^{(k-1)}(q)+\frac{1}{K}\vbf_{i}^{(k)}(q),
\end{equation*} where $a_{ij}$ is the weight that node $i$ assigns to node $j$. Similarly, we update local gradient approximation vector $\dbf_{i}^{(k)}(q)$ according to the following rule~(see line 19):
\begin{equation*}
    \dbf_{i}^{(k)}(q)=(1-\gamma)\sum_{j\in\mathcal{N}_{i}\cup\{i\}}a_{ij}\dbf_{j}^{(k-1)}(q)+\gamma\g_{i}^{(k)}(q),
\end{equation*} 
where we view $\g_{i}^{(k)}(q)$ as an approximation to the gradient of $\bar{f}_{q,i}$.

We will show that \textbf{Mono-DMFW} achieves a $(1-1/e)$-regret bound of $O(T ^{4/5})$. Before that, we first state the following assumption.
\begin{assumption}\label{ass5}
For any linear maximization oracle $\mathcal{E}_{i}^{(k)}$, the regret at horizon $t$ is at most $M_{0}\sqrt{t}$, where $M_{0}$ is a parameter.
\end{assumption}
\begin{theorem}[Proof in Appendix~\ref{appendix:one-shot}]\label{thm1}
Under Assumption~\ref{ass1}-\ref{ass5} and $\|\nabla f_{t,i}(\x)\|\le G$, if we set $\eta_{t}=\frac{2}{(t+3)^{2/3}}$ when $1\le t\le\frac{K}{2}+1$, and when $\frac{K}{2}+2\le t\le K$, $\eta_{t}=\frac{1.5}{(K-t+2)^{2/3}}$ in Algorithm~\ref{alg:1}, we have, for each node $j\in[N]$,
\begin{equation*}
    \begin{aligned}
    \E[\mathcal{R}_{\alpha}(T,j)]
    \le \ & NG\mathrm{diam}(\mathcal{K})\frac{Q}{\gamma}+C_{1}Q+C_{2}\frac{Q\log(K+1)}{\gamma}\\
    \ & +C_{3}QK^{2/3}+C_{4}\frac{KQ\gamma}{1-(1-\gamma)\beta}+C_{5}K\sqrt{Q},
    \end{aligned}
\end{equation*} 
where $\alpha=1-1/e$,  $C_{1}=\frac{LNr^{2}(\mathcal{K})}{2}+\frac{Nr(\mathcal{K})(L\mathrm{diam}(\mathcal{K})+GN^{1/2})}{1-\beta}$, $C_{2}=N\mathrm{diam}(\mathcal{K})(2G+Lr(\mathcal{K})$, $C_{3}=2N\mathrm{diam}(\mathcal{K})\sqrt{C_{6}}$, $C_{4}=\mathrm{diam}(\mathcal{K})N\sqrt{2(\sigma^{2}+G^{2})}$, $C_{5}=M_{0}N$, and $C_{6}=\max\{5^{2/3}G^{2},4(G^{2}+\sigma^{2})+32(2G+Lr(\mathcal{K}))^{2},2.25(G^{2}+\sigma^{2})+7(2G+Lr(\mathcal{K}))^{2}/3\}$,
\end{theorem}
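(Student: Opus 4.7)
The plan is to prove the theorem through a three-layer analysis: a blocking reduction, a per-block Frank-Wolfe / continuous greedy analysis on the averaged functions, and careful control of the two sources of error (gradient approximation and consensus) that distinguish this decentralized stochastic setting from the classical one.

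First I would formalize the reduction already sketched in the text: because the action $\x_i(t) = \x_i^{(K)}(q)$ is held fixed throughout block $q$, the original $T$-round regret of node $j$ satisfies $\mathcal{R}_{\alpha}(T,j) = K \cdot \bar{\mathcal{R}}_{\alpha}(Q,j)$, where the block-averaged functions $\bar f_{q,i}$ are again monotone $L$-smooth DR-submodular with gradients bounded by $G$. The random permutation $\{t_i^{(k)}(q)\}$ ensures $\E[\widetilde{\nabla}f_{t_i^{(k)}(q),i}(\x) \mid \x] = \nabla \bar f_{q,i}(\x)$, so each of the $K$ inner iterates in block $q$ receives an unbiased stochastic gradient of the virtual local objective $\bar f_{q,i}$, which is exactly what the Meta-Frank-Wolfe analysis needs.

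Within a single block $q$, I would follow the standard continuous-greedy template: introduce the network averages $\bar{\x}^{(k)}(q)=\frac{1}{N}\sum_i \x_i^{(k)}(q)$ and $\bar{\dbf}^{(k)}(q)=\frac{1}{N}\sum_i \dbf_i^{(k)}(q)$, then use $L$-smoothness and monotone DR-submodularity to show that, at an optimum $\x^*$,
\begin{equation*}
\bar F_q(\x^*) - \bar F_q(\bar{\x}^{(K)}(q)) \le \Big(1-\tfrac{1}{K}\Big)^{K}\bar F_q(\x^*) + \text{(errors)},
\end{equation*}
where $\bar F_q = \frac{1}{N}\sum_i \bar f_{q,i}$ and the error terms collect (i) the smoothness residual $L r(\mathcal{K})^2/(2K)$ per step (feeding the $C_1$ piece after summing over blocks), (ii) the oracle regret of each $\mathcal{E}_i^{(k)}$ with payoff $\langle \dbf_i^{(k)}(q),\vbf_i^{(k)}(q)\rangle$ (contributing $M_0 N K\sqrt{Q} = C_5 K\sqrt{Q}$ once summed over $q$), and (iii) the mismatch between the true gradient $\nabla \bar F_q$ and each local $\dbf_i^{(k)}(q)$. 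Telescoping with $(1-1/K)^K \le 1/e$ yields the $(1-1/e)$ approximation skeleton.

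The technical heart, and in my view the main obstacle, is bounding the gradient-and-consensus mismatch $\E\|\nabla \bar f_{q,i}(\bar{\x}^{(k)}) - \dbf_i^{(k)}(q)\|$. I would split this into three pieces, each handled by a dedicated lemma: (a) a variance reduction lemma for $\g_i^{(k)}(q)$ showing $\E\|\nabla \bar f_{q,i}(\x_i^{(k)}(q))-\g_i^{(k)}(q)\|^2 = O(K^{-2/3})$ — this is precisely where the peculiar two-piece schedule ($\eta_t = 2/(t+3)^{2/3}$ rising then $1.5/(K-t+2)^{2/3}$ falling) enters, since the standard Mokhtari-type recursion applied symmetrically with these rates yields the $K^{-2/3}$ factor (after multiplying by $K$ inside a block and summing over $Q$ blocks this gives $C_3 Q K^{2/3}$); (b) a consensus lemma for $\x_i^{(k)}(q)$ driven by the doubly stochastic $\A$ with spectral gap $1-\beta$, producing the $1/(1-\beta)$ terms absorbed into $C_1$; and (c) a consensus lemma for $\dbf_i^{(k)}(q)$ whose contraction factor is $(1-\gamma)\beta$ because of the $\gamma$-mixing with the fresh gradient, giving the $C_4 KQ\gamma/(1-(1-\gamma)\beta)$ term, while the error introduced by approximating $\nabla \bar F_q$ by a local $\bar f_{q,i}$ through the $\gamma$-averaging contributes the $NG\,\mathrm{diam}(\mathcal{K})\,Q/\gamma$ and $C_2 Q\log(K+1)/\gamma$ terms (the logarithm comes from bounding $\sum_{k=1}^{K} 1/k$ inside the $\dbf$ recursion when combined with the $\eta_k$ schedule).

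Finally I would assemble the pieces: multiply the per-block Frank-Wolfe bound by $K$, sum over $q \in [Q]$, take expectations, and add an $L\,\mathrm{diam}(\mathcal{K})^2/(2K)$ extra term for passing from $\bar{\x}^{(K)}(q)$ to any node's $\x_j^{(K)}(q)$ via a further consensus bound (folded into $C_1$). Collecting coefficients gives exactly the stated bound. The delicate bookkeeping is matching each constant in the claim to its source lemma; the substantive difficulty is the variance reduction argument for the momentum $\g_i^{(k)}(q)$ under the two-phase stepsize, which I expect to require treating the two halves of the block separately and chaining their recursive bounds at the midpoint $k = K/2 + 1$.
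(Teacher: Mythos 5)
Your plan follows essentially the same route as the paper's proof: the blocking reduction $\mathcal{R}_{\alpha}(T,j)=K\bar{\mathcal{R}}_{\alpha}(Q,j)$, a continuous-greedy telescoping on the block-averaged functions, the Zhang-et-al.\ two-phase variance-reduction lemma for $\g_{i}^{(k)}(q)$, separate consensus bounds for $\x_{i}^{(k)}(q)$ (contraction $\beta$) and $\dbf_{i}^{(k)}(q)$ (contraction $(1-\gamma)\beta$), and the oracle-regret term from Assumption~\ref{ass5}, with each error stream mapped to the correct constant. The one point to make precise when executing it is that the conditional target of the variance-reduction recursion is not $\nabla\bar{f}_{q,i}$ but the average gradient of the \emph{remaining} $K-k$ functions under the without-replacement permutation (the paper's $\bar{f}_{q,k,i}$), whose drift $\|\nabla\bar{f}_{q,k,i}-\nabla\bar{f}_{q,k-1,i}\|\le(2G+Lr(\mathcal{K}))/(K-k+1)$ is exactly the source of the harmonic sum behind the $C_{2}Q\log(K+1)/\gamma$ term you correctly anticipate.
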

\begin{remark}
According to Theorem~\ref{thm1}, if we set $K=T^{3/5}$, $Q=T^{2/5}$, and $\gamma=T^{-1/5}$, \textbf{Mono-DMFW} attains the $(1-1/e)$-regret of $O(T^{4/5})$.
As far as we know, this is the first projection-free decentralized online algorithm to achieve $(1-1/e)$ approximation ratio with communication complexity $1$ for each node at each round, which greatly reduce the $T^{3/2}$ communication complexity of \textbf{DMFW}~\citep{zhu2021projection}.
\end{remark}

\section{Decentralized Online Boosting Gradient Ascent}\label{sec:boosting}
In sharp contrast with \textbf{DMFW} algorithm~\citep{zhu2021projection}, Algorithm~\ref{alg:1} trades the convergence rate for the lower communication overheads. 
In this section, we will present a faster one-shot decentralized online algorithm for continuous DR-submodular maximization problem, i.e., the decentralized online boosting gradient ascent~(\textbf{DOBGA}). 

We begin by reviewing the online projected gradient algorithms. In the \emph{centralized} setting, \citet{zinkevich2003online} first proposed the online gradient descent algorithm and derived the corresponding regret bounds for (strongly) convex objective functions. Then, \citet{chen2018online} proved that the classical online projected gradient method achieves a suboptimal $(1/2)$-approximation guarantee for online continuous DR-submodular maximization problem. Based on a novel auxiliary function, \citet{zhang2022boosting} proceeded to present a variant of the online gradient ascent algorithm to boost the approximation ratio from $1/2$ to $1-1/e$. As for the \emph{decentralized} scenarios, \citet{yan2012distributed} first exhibited a decentralized online projected gradient method, which achieves a square-root regret for convex objectives.

As we know, the decentralized online projected gradient method is naturally one-shot, meaning that every node only needs to inquire one gradient evaluation and communicate with neighbors once at each round. 
However, we cannot directly apply the decentralized online projected gradient method~\citep{yan2012distributed} for the continuous DR-submodular maximization problem, since it will suffer the same suboptimal $(1/2)$-approximation guarantee as its centralized counterpart~\citep{chen2018online}.
To achieve the optimal $(1-1/e)$ approximation ratio, we will design a new decentralized online gradient ascent algorithm based on the boosting auxiliary function.

\subsection{Boosting Auxiliary Function}\label{sec:auxiliary}
In this subsection, we first review some concepts and results about the novel boosting auxiliary function in \cite{zhang2022boosting}. For each monotone continuous DR-submodular function $f:\mathcal{X}\rightarrow\R_{+}$ with $f(\mathbf{0})=0$, we define its boosting auxiliary function as:
\begin{equation}\label{equ:non-oblivious}
    F(\x)=\int_{0}^{1}\frac{e^{z-1}}{z}f(z*\x)dz.
\end{equation} 
Here, for any fixed $\x\in\mathcal{X}$, the boosting function $F$ allocates different weights $\frac{e^{z-1}}{z}$ to the function values $f(z*\x)$. Next, we demonstrate some important properties of $F$.
\begin{lemma}[\citet{zhang2022boosting}]\label{lemma:bianfen}
If the monotone continuous DR-submodular $f$ is differentiable, for any $\x,\y\in\mathcal{X}$, we have $\langle\y-\x,\nabla F(\x)\rangle\ge(1-1/e)f(\y)-f(\x)$.
\end{lemma}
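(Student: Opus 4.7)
The plan is to work directly from the definition of $F$ and express the desired inequality as an integral identity plus a pointwise estimate.

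First, I would differentiate under the integral sign in the definition of $F$. The factor $\frac{e^{z-1}}{z}$ conveniently cancels against the $z$ that appears when differentiating $f(z*\x)$ with respect to $\x$, giving
\begin{equation*}
\nabla F(\x) = \int_{0}^{1} e^{z-1}\, \nabla f(z\x)\, dz.
\end{equation*}
Then I would split $\langle \y - \x, \nabla F(\x)\rangle$ into two pieces: the $\y$-piece $\int_0^1 e^{z-1}\langle \y, \nabla f(z\x)\rangle dz$ and the $\x$-piece $\int_0^1 e^{z-1}\langle \x, \nabla f(z\x)\rangle dz$. The goal is to lower-bound the first piece by something involving $f(\y)$ and to express the second piece in a form whose nuisance terms cancel those produced from the first.

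For the $\y$-piece, the key pointwise estimate I need is $\langle \y, \nabla f(z\x)\rangle \ge f(\y) - f(z\x)$ for every $z\in[0,1]$. I would obtain this by considering the coordinatewise maximum $\y \vee z\x$: the concavity-along-positive-direction property stated in the preliminaries (applied at $z\x$ with $\y\vee z\x \ge z\x$) yields $f(\y\vee z\x) \le f(z\x) + \langle \nabla f(z\x), (\y - z\x)_+\rangle$, and then monotonicity of $f$ (which makes $\nabla f \ge 0$) lets me replace $(\y-z\x)_+$ by $\y$ on the right and bound $f(\y\vee z\x) \ge f(\y)$ on the left. Integrating against $e^{z-1}$ and using $\int_0^1 e^{z-1}\,dz = 1-1/e$ gives
\begin{equation*}
\int_0^1 e^{z-1}\langle \y, \nabla f(z\x)\rangle\, dz \ge (1-1/e)\, f(\y) - \int_0^1 e^{z-1} f(z\x)\, dz.
\end{equation*}

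For the $\x$-piece, I would recognize $\langle \x, \nabla f(z\x)\rangle = \frac{d}{dz} f(z\x)$ and integrate by parts:
\begin{equation*}
\int_0^1 e^{z-1}\frac{d}{dz} f(z\x)\, dz = \bigl[e^{z-1} f(z\x)\bigr]_0^1 - \int_0^1 e^{z-1} f(z\x)\, dz = f(\x) - \int_0^1 e^{z-1} f(z\x)\, dz,
\end{equation*}
where the boundary term at $z=0$ vanishes because of the assumption $f(\mathbf{0})=0$. Subtracting the $\x$-piece from the $\y$-piece, the two integrals of $e^{z-1} f(z\x)$ cancel exactly, leaving $(1-1/e)f(\y) - f(\x)$, which is the claim.

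The main obstacle, and the only nontrivial step, is the pointwise lower bound $\langle \y, \nabla f(z\x)\rangle \ge f(\y) - f(z\x)$; everything else is calculus. The subtlety is that $\y$ and $z\x$ are not comparable in general, so one cannot directly apply concavity along positive directions between them — going through $\y \vee z\x$ and then using monotonicity to both extend the gradient pairing from $(\y-z\x)_+$ up to $\y$ and to lower-bound $f(\y\vee z\x)$ by $f(\y)$ is what makes the argument work, and this is exactly where both DR-submodularity and monotonicity of $f$ are essential. Once that estimate is in hand, the integration-by-parts cancellation delivers the stated inequality cleanly.
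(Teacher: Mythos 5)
Your proof is correct, and it is essentially the standard derivation of this inequality: the paper itself does not prove Lemma~\ref{lemma:bianfen} but imports it from \citet{zhang2022boosting}, whose argument is the same decomposition you use (the pointwise bound $\langle\y,\nabla f(z\x)\rangle\ge f(\y)-f(z\x)$ via $\y\vee z\x$, monotonicity, and concavity along nonnegative directions, followed by integration by parts on the $\x$-piece with the boundary term killed by $f(\mathbf{0})=0$). All the individual steps check out, including the subtle ones: $(\y-z\x)_+\le\y$ holds because $\mathcal{X}\subseteq\R_+^n$ is a box, and $\nabla f\ge 0$ follows from monotonicity, so the replacement of $(\y-z\x)_+$ by $\y$ in the inner product is valid.
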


\begin{remark} 
First, we review the definition of the stationary point on domain $\mathcal{K}$. A point $\x\in\mathcal{K}$ is called a stationary point for function $g$ over the domain $\mathcal{K}$ iff $\max_{\y\in\mathcal{K}}\langle\nabla g(\x),\y-\x\rangle\le 0$.
As a result, this lemma demonstrates that any stationary point of $F$ on the domain $\mathcal{K}$ can attain $(1-1/e)$-approximation of the global maximum of $f$ over $\mathcal{K}$. However, according to \citet{hassani2017gradient}, $\langle\y-\x,\nabla f(\x)\rangle\ge\frac{1}{2}f(\y)-f(\x)$ which implies that the stationary point of $f$ itself only provides a $(1/2)$-approximation guarantee.
\end{remark}

\begin{remark}
Moreover, the projected gradient ascent method with small step size usually
converges to a stationary point, which sheds light on the possibility to use auxiliary function $F$ to improve
the performance of the classical gradient ascent algorithm for monotone DR-submodular
maximization. Similarly, in this paper, we hope to use the auxiliary function to boost the approximation ratio of the decentralized online gradient ascent algorithm.
\end{remark}

From Equation~\eqref{equ:non-oblivious}, we know that $\nabla F(\x)=\int_{0}^{1}e^{z-1}\nabla f(z*\x)\mathrm{d}z$. Generally speaking, it is impossible to directly compute the $\nabla F(\x)$ by this equation. Therefore, we proceeded by showing how to estimate $\nabla F(\x)$ using an unbiased stochastic oracle  $\widetilde{\nabla}f(\x)$. We first generate a sample $z$ from a random variable $\mathbf{Z}\in[0,1]$ where Pr$(\mathbf{Z}\le z)=\frac{ e^{z-1}-1/e}{1-1/e}$. Next, we inquire the stochastic gradient at point $z*\x$ of the original function $f$, i.e., $\widetilde{\nabla}f(z*\x)$. Then, we use the $(1-1/e)\widetilde{\nabla}f(z*\x)$ to estimate the $\nabla F(\x)$. In Appendix, we will show the estimate $(1-1/e)\widetilde{\nabla}f(z*\x)$ is unbiased under Assumption~\ref{ass4}.

\begin{algorithm}[t]
	\caption{Decentralized Online Boosting Gradient Ascent~(\textbf{DOBGA})}\label{alg:2}
	\begin{algorithmic}[1]
	\STATE{\bf Input:} Time horizon $T$, the number of nodes $N$, weight matrix $\A=[a_{ij}]\in\R^{N\times N}$, step size $\eta_{t},\forall t\in[T]$
	\STATE {\bf Output:} $\{\x_{i}(t): i\in[N],t\in[T]\}$.
	\STATE Initialize a point $\x_{i}(1)\in\K$ for any $i\in[N]$.
	\FOR{$t=1,\dots,T$}
	\FOR{$i=1,\dots,N$}
		\STATE Node $i$ plays $\x_{i}(t)$ to get reward $f_{t,i}(\x_{i}(t))$ and observes the stochastic gradient information of $f_{t,i}$.
		\STATE Generate a sample $z_{i}(t)$ of the random variable $\mathbf{Z}\in[0,1]$ where $\mathrm{Pr}(\mathbf{Z}\le z)=\frac{e^{z-1}-1/e}{1-1/e}$.
		\STATE Query the stochastic gradient $\widetilde{\nabla}f_{t,i}(z_{i}(t)*\x_{i}(t))$.
		\STATE $\y_{i}(t+1)=\sum_{j\in\mathcal{N}_{i}\cup\{i\}}a_{ij}\x_{j}(t)+\eta_{t}(1-1/e)\widetilde{\nabla}f_{t,i}(z_{i}(t)*\x_{i}(t))$.
		\STATE $\x_{i}(t+1)=\arg\min_{\z\in\K}\left\|\z-\y_{i}(t+1)\right\|$.
		\ENDFOR
		\ENDFOR
	\end{algorithmic}
\end{algorithm}

\subsection{Decentralized Online Boosting Gradient Ascent}
In this subsection, we first assume that, for each local function $f_{t,i}$, $f_{t,i}(\mathbf{0})=0$. Note that the shift of each local function will not affect our theoretical analysis and algorithm. Also, we set the boosting auxiliary function of each local $f_{t,i}$ as $F_{t,i}(\x)=\int_{0}^{1}\frac{e^{z-1}}{z}f_{t,i}(z*\x)dz$. Next, we introduce our proposed decentralized online boosting gradient ascent~(\textbf{DOBGA}) as shown in Algorithm~\ref{alg:2}.

In Algorithm~\ref{alg:2}, each node $i\in[N]$ first generates a sample $z_{i}(t)$ and inquires the stochastic gradient at point $z_{i}(t)*\x_{i}(t)$ to compute the stochastic gradient of boosting auxiliary function $F_{t,i}$ (See line $7$-$8$). Then, \textbf{DOBGA} algorithm aggregates the actions of neighboring nodes and its own decision, namely, $\sum_{j\in\mathcal{N}_{i}\cup\{i\}}a_{ij}\x_{j}(t)$, and pushes the aggregated information along the stochastic boosting gradient with stepsize $\eta_{t}$ (See line $9$). Finally, the node $i$ updates its action via the projection operation (See line $10$). Obviously, in Algorithm~\ref{alg:2}, node $i$ only inquires one stochastic gradient evaluation and communicates with neighboring nodes once at each round. We proceed to show the regret bound for Algorithm~\ref{alg:2}.
\begin{theorem}[Proof in Appendix~\ref{appendix:boosting}]\label{thm:2}
Under Assumption~\ref{ass1}-\ref{ass4}, if $\eta_{t}\ge\eta_{t+1}$ for each $1\le t<[T]$ in Algorithm~\ref{alg:2} and $\|\widetilde{\nabla}f_{t,i}(\x)\|\le G_{1}$, then we have, for each node $j\in[N]$,
\begin{equation*}
    \begin{aligned}
     \E[\mathcal{R}_{\alpha}(T,j)]\le\frac{N\mathrm{diam}(\K)}{2\eta_{T}}+C_{7}\sum_{t=1}^{T}\eta_{t},
    \end{aligned}
\end{equation*} 
where $\alpha=1-1/e$, and $C_{7}=2NG_{1}^{2}+\frac{4NG_{1}^{2}}{1-\beta}+\frac{2G^{2}_{1}(N+N^{3/2})}{(1-\beta)\beta}$.
\end{theorem}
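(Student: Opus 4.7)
The plan is to combine the classical decentralized online projected gradient analysis of \citet{yan2012distributed} with Lemma~\ref{lemma:bianfen}, treating the stochastic boosting gradient $\g_{i}(t):=(1-1/e)\widetilde{\nabla}f_{t,i}(z_{i}(t)\ast\x_{i}(t))$ as an unbiased estimator of $\nabla F_{t,i}(\x_{i}(t))$ (by the change-of-variable sketched in Section~\ref{sec:auxiliary}). Fix a comparator $\x^{*}\in\arg\max_{\x\in\K}\sum_{t,i}f_{t,i}(\x)$. First, Lemma~\ref{lemma:bianfen} applied at $(\x,\y)=(\x_{i}(t),\x^{*})$ yields $(1-1/e)f_{t,i}(\x^{*})-f_{t,i}(\x_{i}(t))\le\langle\nabla F_{t,i}(\x_{i}(t)),\x^{*}-\x_{i}(t)\rangle$. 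Summing over $(t,i)$ and swapping the evaluation point $\x_{i}(t)$ for the observer's action $\x_{j}(t)$ via $|f_{t,i}(\x_{j}(t))-f_{t,i}(\x_{i}(t))|\le G_{1}\|\x_{i}(t)-\x_{j}(t)\|$ (itself a consequence of Jensen applied to $\|\widetilde{\nabla}f_{t,i}\|\le G_{1}$) rewrites $N\,\E[\mathcal{R}_{\alpha}(T,j)]$ as $\sum_{t,i}\langle\nabla F_{t,i}(\x_{i}(t)),\x_{i}(t)-\x^{*}\rangle$ plus a consensus residual $G_{1}\sum_{t,i}\|\x_{i}(t)-\bar{\x}(t)\|$ (using $\bar{\x}(t)=\frac{1}{N}\sum_{i}\x_{i}(t)$ and one triangle inequality).

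Second, the nonexpansiveness of the projection in line~10 of Algorithm~\ref{alg:2} combined with the doubly-stochastic inequality $\sum_{i}\|\sum_{j}a_{ij}\x_{j}(t)-\x^{*}\|^{2}\le\sum_{i}\|\x_{i}(t)-\x^{*}\|^{2}$ (Assumption~\ref{ass1}) gives
\begin{equation*}
\sum_{i}\|\x_{i}(t+1)-\x^{*}\|^{2}\le\sum_{i}\|\x_{i}(t)-\x^{*}\|^{2}+2\eta_{t}\sum_{i}\langle\g_{i}(t),\textstyle\sum_{j}a_{ij}\x_{j}(t)-\x^{*}\rangle+\eta_{t}^{2}NG_{1}^{2}.
\end{equation*}
I split $\sum_{j}a_{ij}\x_{j}(t)-\x^{*}=(\x_{i}(t)-\x^{*})+(\sum_{j}a_{ij}\x_{j}(t)-\x_{i}(t))$, telescope in $t$, take conditional expectation so that $\g_{i}(t)$ is replaced by $\nabla F_{t,i}(\x_{i}(t))$, and use monotonicity of $\eta_{t}$ to convert the initial-distance term into the leading $\frac{N\,\mathrm{diam}(\K)}{2\eta_{T}}$ and the quadratic-gradient term into the $2NG_{1}^{2}\sum_{t}\eta_{t}$ piece of $C_{7}$. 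The leftover cross term $\sum_{t,i}\langle\g_{i}(t),\sum_{j}a_{ij}\x_{j}(t)-\x_{i}(t)\rangle$ will be controlled by the consensus estimate in the next step.

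Third, and crucially, I need to bound $\|\x_{i}(t)-\bar{\x}(t)\|$. I would run the spectral-gap argument on the \emph{unprojected} iterates: since $\y_{i}(t+1)=\sum_{j}a_{ij}\x_{j}(t)+\eta_{t}\g_{i}(t)$ is linear in the $\x_{j}(t)$ and $\|(\A-\frac{1}{N}\mathbf{1}\mathbf{1}^{T})^{s}\|_{2}\le\beta^{s}$ by Assumption~\ref{ass1}, unrolling gives $\|\y_{i}(t)-\bar{\y}(t)\|\le\sum_{s<t}\beta^{t-1-s}\cdot O(\eta_{s}G_{1}\sqrt{N})$; the subsequent projection is coordinate-wise nonexpansive onto $\K$, so it contaminates the consensus error only additively by at most $\eta_{t}G_{1}$. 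With $\eta_{s}\ge\eta_{t}$ for $s\le t$ and a geometric summation, this yields a bound of order $\eta_{t}G_{1}\sqrt{N}/(1-\beta)$ on $\|\x_{i}(t)-\bar{\x}(t)\|$. Plugging this into the consensus residual from Step~1 delivers the $\frac{4NG_{1}^{2}}{1-\beta}\sum_{t}\eta_{t}$ piece of $C_{7}$, and applying Cauchy--Schwarz on the Step~2 cross term, split into a per-node contribution and an aggregate contribution, delivers the $\frac{2G_{1}^{2}(N+N^{3/2})}{(1-\beta)\beta}\sum_{t}\eta_{t}$ piece. The main obstacle is this last accounting: the extra $1/\beta$ only appears once one carefully closes the geometric recursion mixing $\beta^{s}$ against the weights $\eta_{s}$, and the projection step in Algorithm~\ref{alg:2} destroys the clean linear iteration for $\bar{\x}(t)$, which is why I set up the consensus recursion on $\y_{i}(t)$ (where linearity is genuine) and absorb the projection discrepancy additively.
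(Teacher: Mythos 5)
Your argument is sound and reaches the same bound, but it organizes the projection step differently from the paper. The paper's proof (Lemmas~\ref{lemma:thm2.1}--\ref{lemma:thm2.4}) telescopes the squared distance of the \emph{network-average} iterate $\|\bar{\x}(t)-\x^{*}\|^{2}$; since the average of projected points is not itself a projection, this forces the introduction of the residuals $\mathbf{r}_{i}(t+1)=\x_{i}(t+1)-\y_{i}(t+1)$, which are controlled via the projection optimality condition $\langle\x_{i}(t+1)-\x^{*},\mathbf{r}_{i}(t+1)\rangle\le 0$ together with the norm bound $\|\mathbf{r}_{i}(t+1)\|\le\eta_{t}(1-1/e)G_{1}$. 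You instead telescope the node-wise sum $\sum_{i}\|\x_{i}(t)-\x^{*}\|^{2}$, where nonexpansiveness of $\Pi_{\K}$ and Jensen's inequality applied to the doubly stochastic mixing dispose of the projection in one line; the price is the extra cross term $\sum_{i}\langle\g_{i}(t),\x_{i}(t)-\sum_{j}a_{ij}\x_{j}(t)\rangle$, which you correctly route to the same spectral-gap consensus estimate. The remaining ingredients — Lemma~\ref{lemma:bianfen} to convert $\langle\nabla F_{t,i}(\x_{i}(t)),\x^{*}-\x_{i}(t)\rangle$ into the $(1-1/e)$-gap, the unbiasedness of $(1-1/e)\widetilde{\nabla}f_{t,i}(z_{i}(t)*\x_{i}(t))$ (Lemma~\ref{boost1}), the geometric summation $\sum_{t}\sum_{k\le t}\beta^{t-k}\eta_{k}\le\frac{1}{1-\beta}\sum_{k}\eta_{k}$, and the final swap from $\x_{i}(t)$ to $\x_{j}(t)$ — coincide with the paper's, and the constants come out of the same order. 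Two small corrections: in your Step~1 the comparator inequality should read $\langle\nabla F_{t,i}(\x_{i}(t)),\x^{*}-\x_{i}(t)\rangle$ (your sign is flipped), and in Step~3 the recursion for $\y_{i}(t+1)$ is \emph{not} genuinely linear either, since the mixing acts on the projected $\x_{j}(t)=\y_{j}(t)+\mathbf{r}_{j}(t)$; the residuals $\mathbf{r}_{j}(t)$ still enter as forcing terms, exactly as in the paper's Lemma~\ref{lemma:thm2.4}, though your plan to absorb them additively using $\|\mathbf{r}_{j}(t)\|\le\eta_{t-1}(1-1/e)G_{1}$ is the right fix.
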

\begin{remark}If we set stepsize $\eta_{t}=\frac{1}{\sqrt{t}}$, \textbf{DOBGA} achieves the $(1-1/e)$-regret of $O(\sqrt{T})$. To the best of our knowledge, this is the first decentralized online algorithm to obtain $O(\sqrt{T})$ regret bound against a $(1-1/e)$-approximation with $O(1)$ gradient inquiry \blue{and communication complexity} for each node $i\in[N]$ per step.
\end{remark}

\section{Numerical Experiments}
In this section, we will empirically compare the following algorithms:

\noindent\textbf{Decentralized Meta Frank-Wolfe~(\textbf{DMFW})}: Algorithm $1$ in \citet{zhu2021projection} with $K=T^{3/2}$, $\eta_{k}=2/K^{2/3}$ and $\gamma_{k}=1/K^{1/2}$.\\
\noindent\textbf{One-Shot Decentralized Meta-Frank-Wolfe~(\textbf{Mono-DMFW})}: Our proposed Algorithm~\ref{alg:1} with $\eta_{k}=\frac{2}{(k+3)^{2/3}}$ for any $1\le k\le\frac{K}{2}+1$ and $\eta_{k}=\frac{1.5}{(K-k+2)^{2/3}}$ for any $\frac{K}{2}+2\le k\le K$, $\gamma=1/T^{1/5}$, $K=T^{3/5}$ as well as $Q=T^{2/5}$.\\
\noindent\textbf{Decentralized Online Boosting Gradient Ascent~(\textbf{DOBGA})}: Our proposed Algorithm~\ref{alg:2} with $\eta_{t}=1/\sqrt{t}$. Furthermore, we use the average of $5$ independent stochastic gradients to estimate each $\nabla F_{t,i}$ at every iteration.


\begin{figure*}[t]
	\centering
	\subfigure[Complete graph \label{graph1}]{\includegraphics[width=0.28\linewidth]{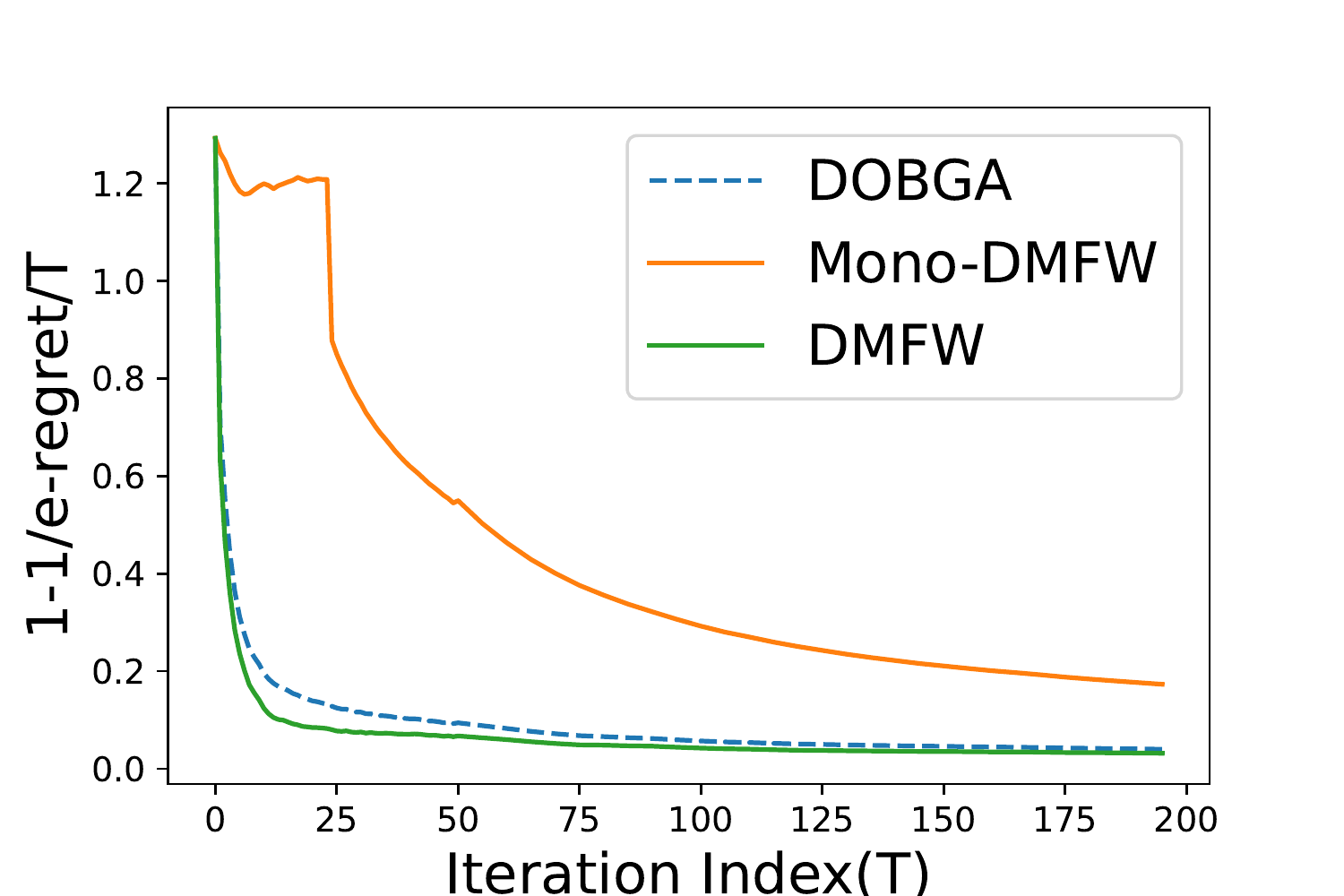}}
	\subfigure[Cycle graph\label{graph2}]{\includegraphics[width=0.28\linewidth]{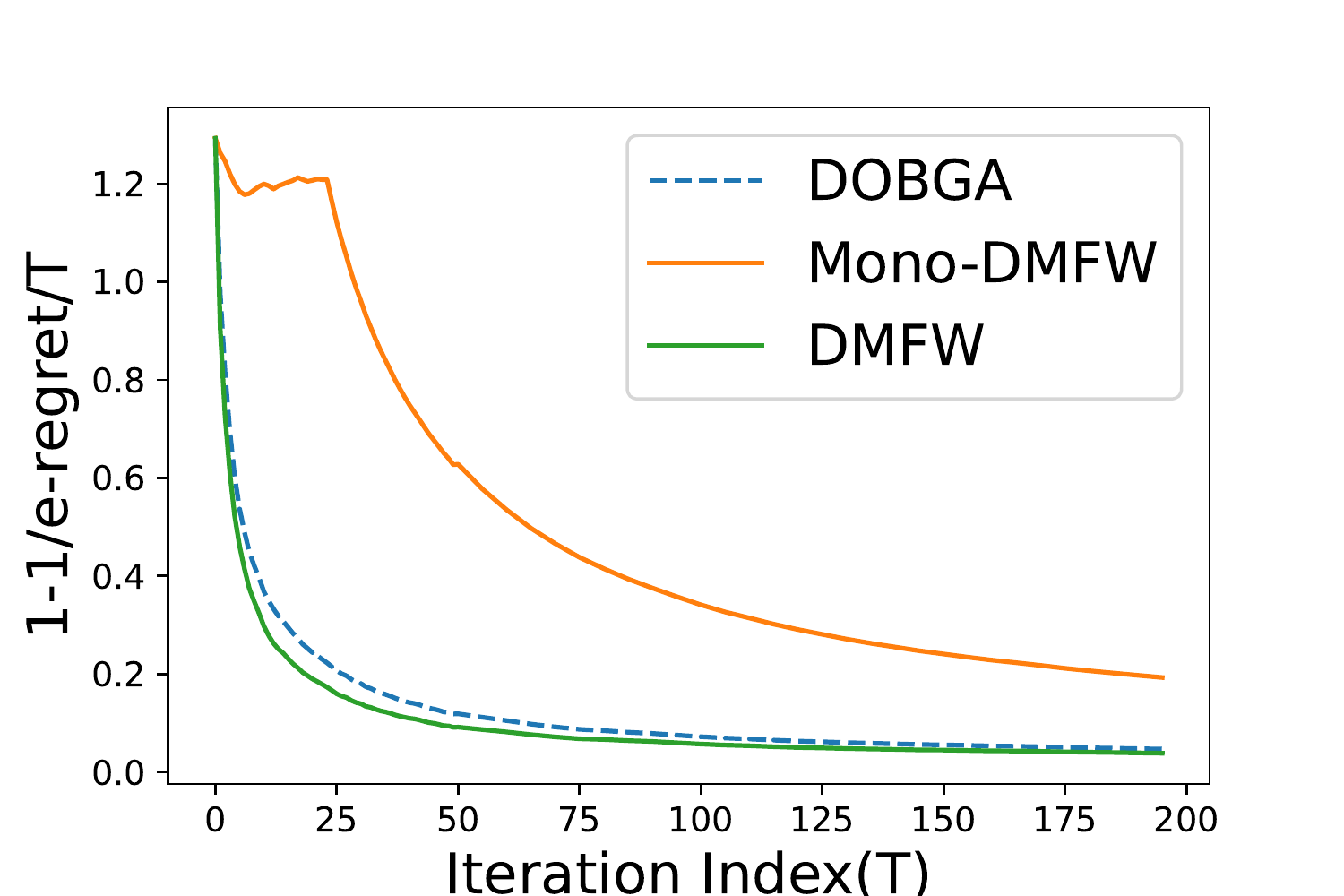}}
	\subfigure[ER graph\label{graph3}]{\includegraphics[width=0.28\linewidth]{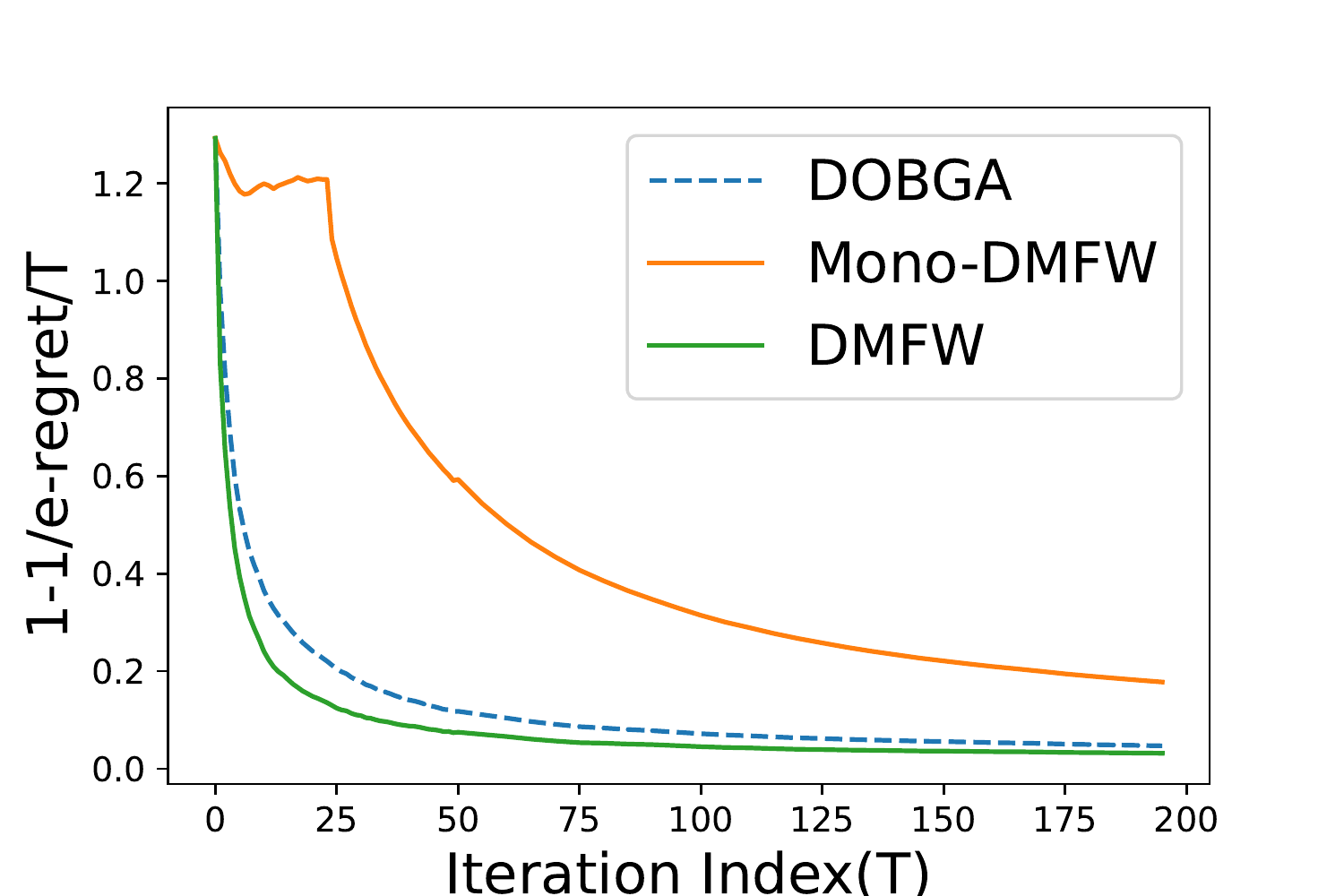}}
	\subfigure[\textbf{Mono-DMFW}\label{graph4}]{\includegraphics[width=0.24\linewidth]{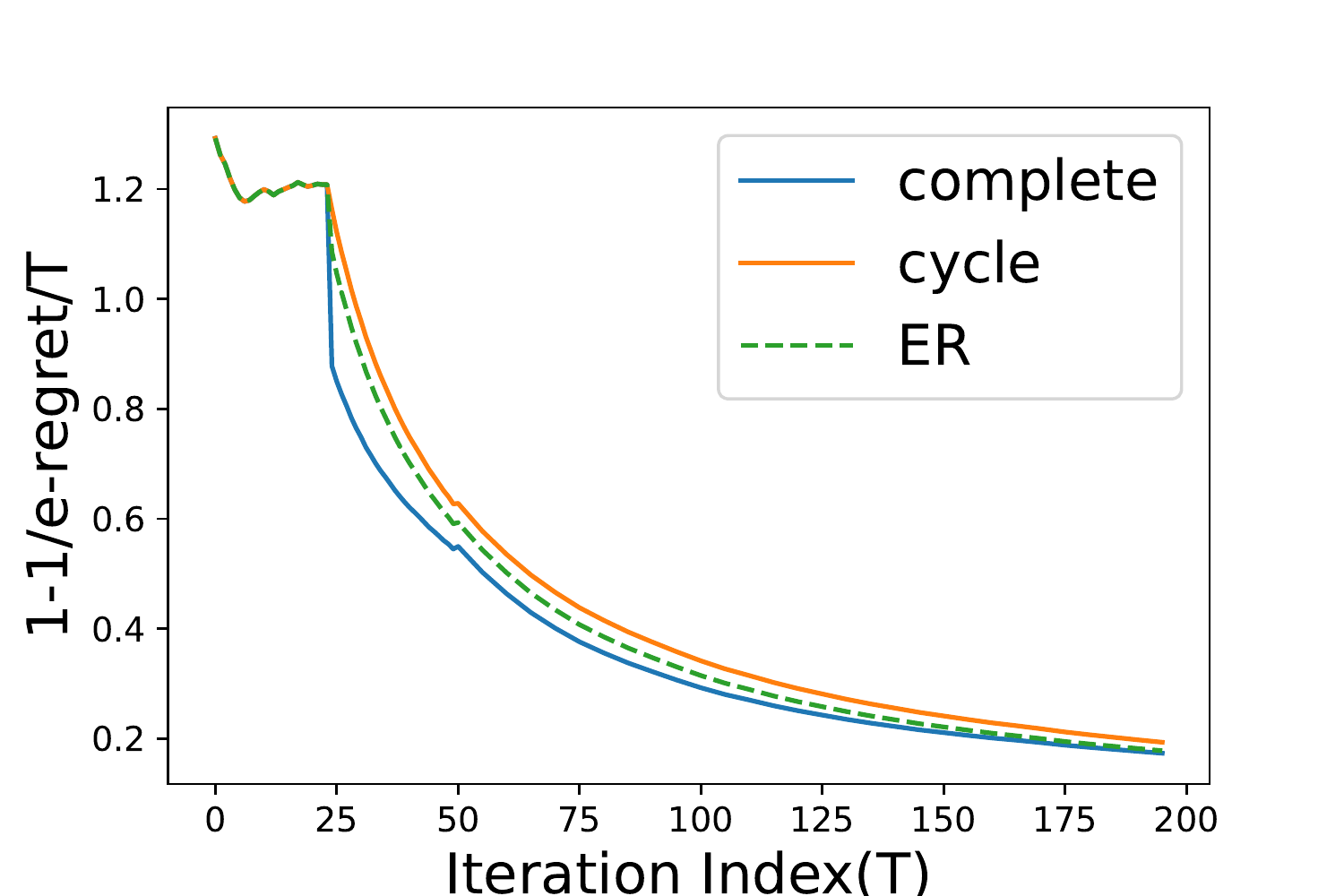}}
	\subfigure[\textbf{DOBGA}\label{graph5}]{\includegraphics[width=0.24\linewidth]{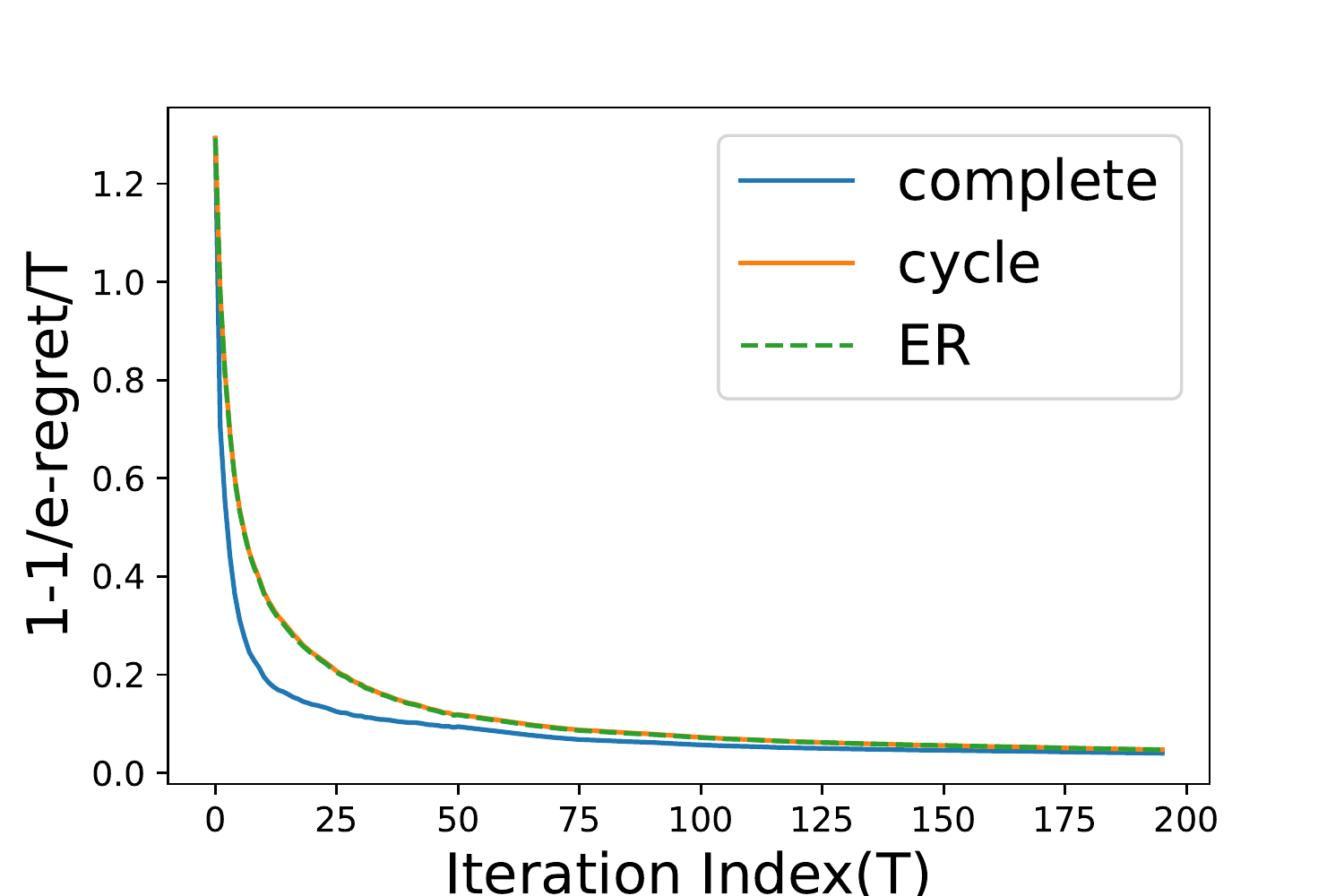}}
	\subfigure[\textbf{Mono-DMFW}\label{graph6}]{\includegraphics[width=0.24\linewidth]{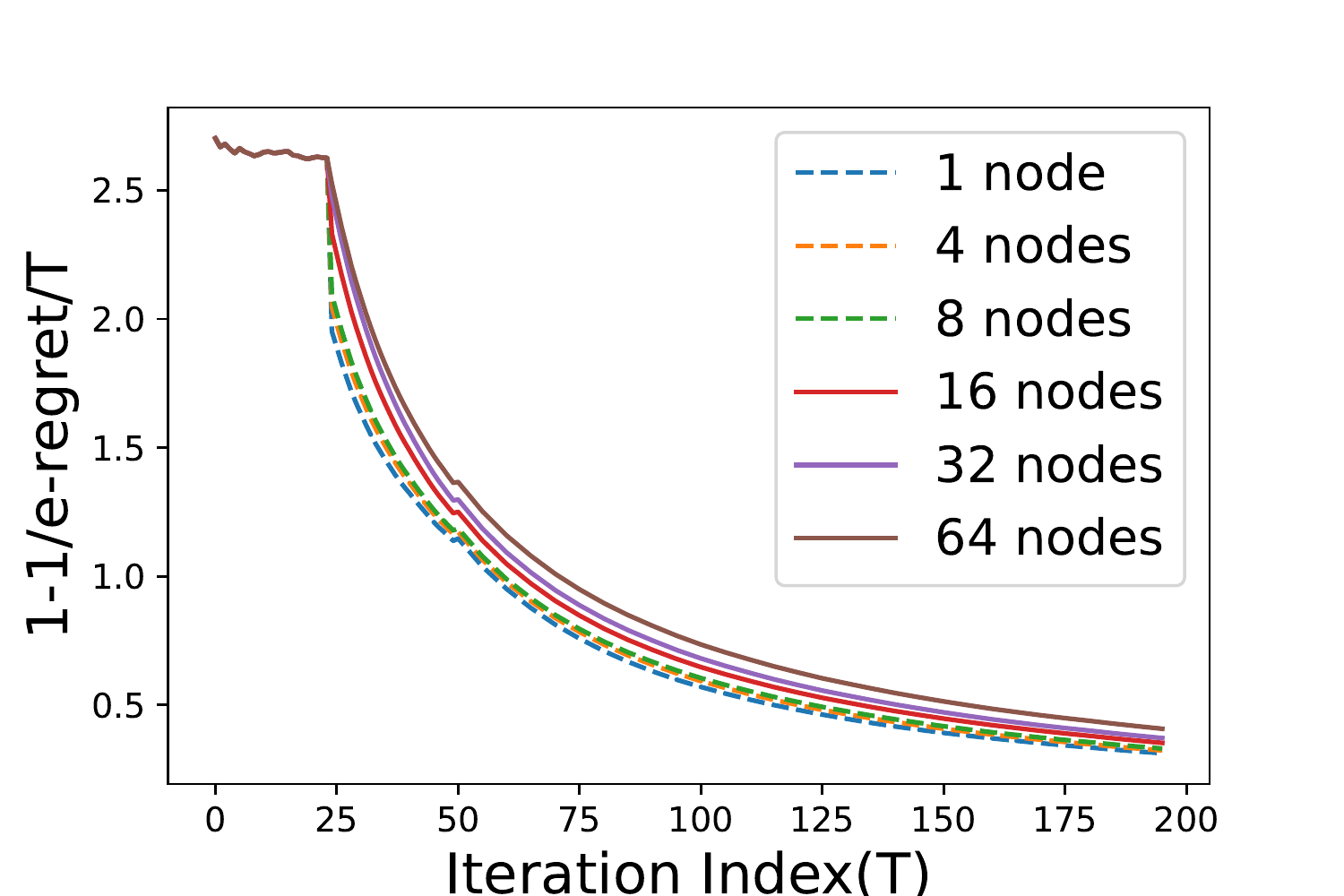}}
	\subfigure[\textbf{DOBGA}\label{graph7}]{\includegraphics[width=0.24\linewidth]{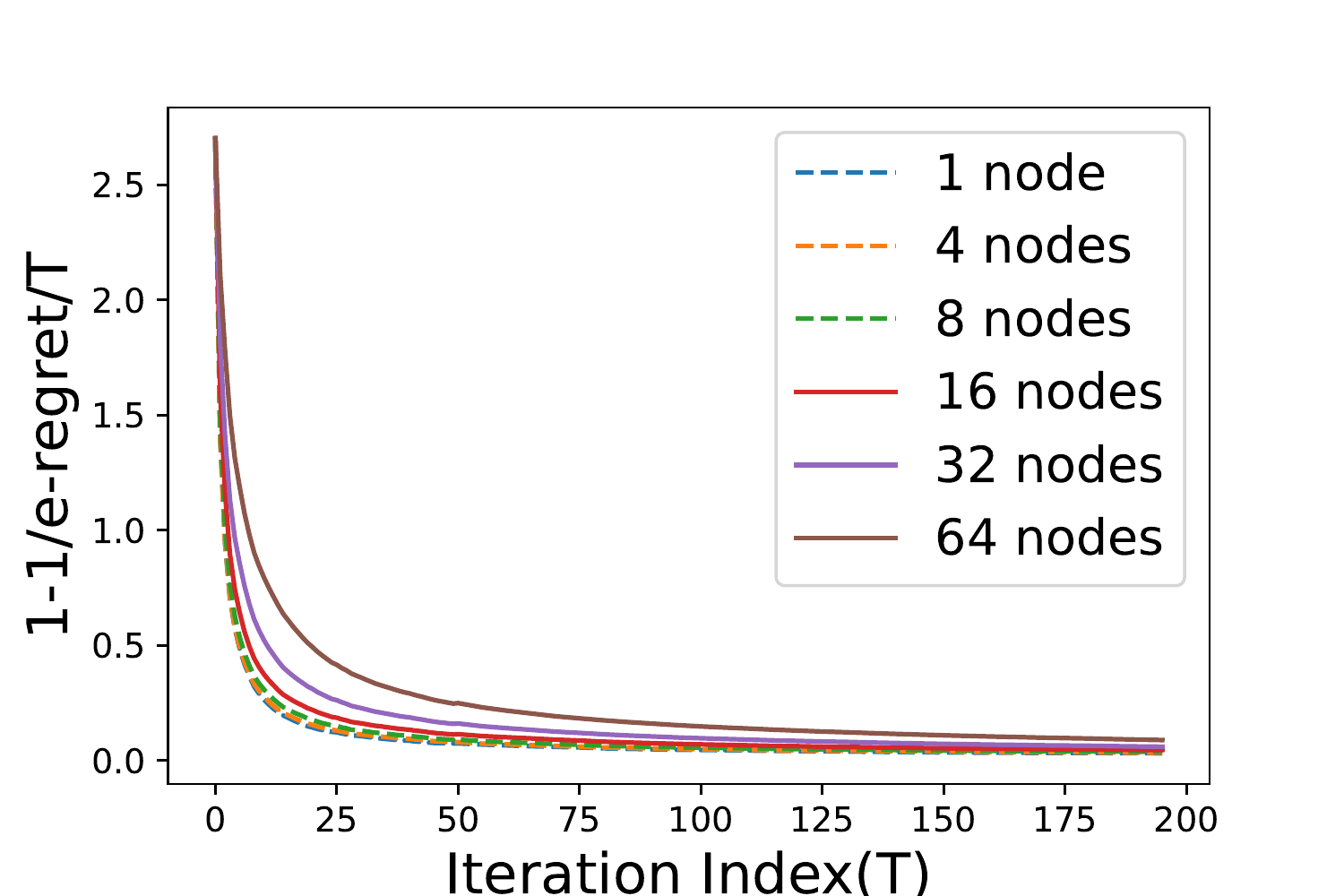}}
	\caption{Figure~\ref{graph1}-\ref{graph3} compares the performance of \textbf{DMFW} and our proposed algorithms(\textbf{Mono-DMFW} and \textbf{DOBGA}) in complete graph, cycle graph and Erdos-Renyi random graph. In Figure~\ref{graph4}, we focus on \textbf{Mono-DMFW} algorithm and compare its trend of the ratio between $(1-1/e)$-regret and iteration index in three graphs. Figure~\ref{graph5} compares the performance of \textbf{DOBGA} in three graphs. Figure~\ref{graph6}-\ref{graph7} show the performance of \textbf{Mono-DMFW} and \textbf{DOBGA} on complete graphs with different number of nodes.}
\end{figure*}

We evaluate these algorithms on the MovieLens dataset~\citep{harper2015movielens} for the movie recommendation task. This dataset consists of $10$ million $5$-star ratings by $72,000$ users for $10,000$ movies. All Ratings are made with half-star increment. Note that all experiments are performed in Python 3.6.5 using CVX optimization tool~\citep{grant2014cvx} on a Windows 10 machine with 32GB RAM and Intel(R)255
Core(TM) i7-9700K CPU.

To begin, we split the first $T*b$ users into disjoint and equally-sized sets $\mathcal{U}_{1},\dots,\mathcal{U}_{T}$, i.e., $|\mathcal{U}_{t}|=b$ for any $t\in[T]$. At the $t$-th round, we assume the data of the users $\mathcal{U}_{t}$ is distributed equally over the network of nodes, which means that, at each round $t$, each node $i\in[N]$ only can get access to the data of a partition of $\mathcal{U}_{t}$. Furthermore, we use $\mathcal{U}_{t,i}$ to denote the users stored in node $i$ at round $t$. In this experiment, our goal is to find a set of $k$ movies that are satisfactory to all considered users.

Precisely, let $r_{u,m}$ denote the rating of user $u$ for movie $m$. For each user $u$, we consider a well-motivated facility location objective function $g_{u}(S)=\max_{m\in S}r_{u,m}$ where $S$ is a subset of the movies. Such a function shows how much the user $u$ is satisfied with the movies in the subset $S$. Due to that each $\mathcal{U}_{t,i}$ contains multiple users, the facility location function associated with node $i$ at round $t$ is $g_{t,i}(S)=\sum_{u\in\mathcal{U}_{t,i}}g_{u}(S)$. Following \citet{zhu2021projection}, at each round $t$, we also consider setting the objective function of node $i$ as the multilinear extension of the facility location function $g_{t,i}$, i.e., $f_{t,i}(\x)=\sum_{S\subseteq[n]}g_{t,i}(S)\prod_{j\in S}(\x)_{j}\prod_{j\in[n]\setminus S}(1-(\x)_{j})$, where we encode the movies as the set $\{1,2,\dots,n\}$. According to \citet{zhu2021projection}, $f_{t,i}$ is a monotone continuous DR-submodular function. 

In the first experiment, we plan to recommend a set of ten movies with high objective value from the first $200$ movies at each round, so $n=200$ and the constraint set is $\K=\{\x\in\R^{n}|\mathbf{0}\le\x\le\mathbf{1},\mathbf{1}^{T}\x\le 10\}$. We also consider a network of $30$ nodes and set $b=60$ and $T=200$. The stochastic gradient is obtained by imposing the Gaussian noise, i.e., $\widetilde{\nabla}f_{t,i}(\x)=\nabla f_{t,i}(\x)+0.1*\mathcal{N}(0,\mathbf{I})$ for any $t\in[T]$, $i\in[N]$ and $\x\in[0,1]^{n}$, where $\mathcal{N}(0,\mathbf{I})$ is standard multivariate normal distribution. As for the network topology, we consider three different choices: A complete graph, a cycle graph, and an Erdos-Renyi random graph (with an average degree of $3$). The matrix $\A$ is chosen in the following manner. If the edge $(i,j)$ is an edge of the graph, let $a_{ij}=1/(1+\max(d_{i},d_{j}))$ where $d_{i}$ and $d_{j}$ are the degree of node $i$ and $j$, respectively. If $(i,j)$ is not an edge of the graph and $i\neq j$, then $a_{ij}=0$. Finally, we set $a_{ii}=1-\sum_{j\in\mathcal{N}_{i}}a_{ij}$. According to \citet{mokhtari2018decentralized}, $\A$ is satisfied with Assumption~\ref{ass1}.


We present the trend of the ratio between $(1-1/e)$-regret and timestamp in the Figure~\ref{graph1}-\ref{graph5} and report the running time in Table~\ref{tab:2}. From Figure~\ref{graph1}-\ref{graph3}, we observe that the \textbf{Mono-DMFW} and \textbf{DMFW} algorithm exhibit the lowest and fastest convergence rate in all three graphs, respectively. This phenomenon is in accord with our theoretical analysis~(See Table~\ref{tab:compare}). When the iteration index is large, the \textbf{DOBGA} achieves nearly the same $(1-1/e)$-regret with \textbf{DMFW}. Despite the better empirical performance of \textbf{DMFW}, both \textbf{Mono-DMFW} and \textbf{DOBGA} efficiently speed up the computation.
As shown in Table~\ref{tab:2}, our proposed \textbf{DOBGA} and \textbf{Mono-DMFW} can be $400$ and $3000$ times faster than \textbf{DMFW}, respectively, in all three graphs.

In Figure~\ref{graph4}-\ref{graph5}, we present how the network topology affects the performance of the \textbf{Mono-DMFW} and \textbf{DOBGA} algorithms. 
We can observe that both algorithms converges faster on the complete graph than the cycle graph and ER graph.

Following \citet{zhu2021projection}, we also explore how the number of nodes influences Algorithm~\ref{alg:1} and Algorithm~\ref{alg:2}. In this experiment, we consider the complete graph with different number of nodes and set $b=128$. As shown in Figure~\ref{graph6}-\ref{graph7}, when the number of nodes increases, the $(1-1/e)$-regret of both \textbf{Mono-DMFW} and \textbf{DOBGA} decrease more slowly, which confirms our theoretical results.

\begin{table}[t]
	\centering
	\caption{Running time~(in seconds)}
    \begin{tabular}{c|c|c|c}
    \toprule[1.5pt]
     Method& Complete&Cycle&ER\\
    \hline 
    \textbf{DMFW} & $824438.1$ & $815161.5$ & $818915.5$\\
    \hline
    \hline
    \textbf{Mono-DMFW} & $275.7$ & $260.7$ & $264.0$\\
    \hline 
    \textbf{DOBGA} & $1898.0$ & $1865.7$ & $1885.7$\\
    \midrule[1.5pt]
    \end{tabular}
	\label{tab:2}
\end{table}

\section{Conclusion}
In this paper, we propose two communication-efficient decentralized online algorithms for optimizing the monotone continuous DR-submodular maximization problem over the network, both of which improve the communication complexity and the number of per-function gradient evaluations from $T^{3/2}$ to $1$. First, we develop the \textbf{Mono-MFW} algorithm, achieving a $(1-1/e)$-regret bound  of $O(T^{4/5})$.  Then, we present a variant of the decentralized online gradient ascent, namely, \textbf{DOBGA} algorithm, which attains a $(1-1/e)$-regret of $O(\sqrt{T})$. Numerical experiments demonstrate the effectiveness of the proposed algorithms.
\vskip 0.2in

\bibliography{references}
\appendix

\section{Proofs in Section~\ref{sec:one-shot}}\label{appendix:one-shot}
In this section, we present the proof of Theorem~\ref{thm1}. To begin, we define some auxiliary vectors.
\begin{equation*}
    \begin{aligned}
    &\bar{\x}^{(k)}(q)=\frac{\sum_{i=1}^{N}\x_{i}^{(k)}(q)}{N},\\
    &\bar{\dbf}^{(k)}(q)=\frac{\sum_{i=1}^{N}\dbf_{i}^{(k)}(q)}{N}.\\
    \end{aligned}
\end{equation*}
First, we establish an upper bound between $\bar{\x}^{(k+1)}(q)$ and $\bar{\x}^{(k)}(q)$.
\begin{lemma}\label{thm1.1}
Under Assumption~\ref{ass1}, for each $0\le k\le K-1$ and $q\in[Q]$, we have
$\|\bar{\x}^{(k+1)}(q)-\bar{\x}^{(k)}(q)\|\le\frac{r(\K)}{K}$.
\end{lemma}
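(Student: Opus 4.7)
The plan is to directly unfold the averaging update rule (line 9 of Algorithm~\ref{alg:1}) and exploit the doubly stochastic property of $\A$ from Assumption~\ref{ass1}. Writing the update for each node and then averaging over $i \in [N]$, I would expect a large amount of cancellation because the mixing step is consensus-preserving: the only surviving contribution to the difference $\bar{\x}^{(k+1)}(q) - \bar{\x}^{(k)}(q)$ should be the $\frac{1}{K} \vbf_i^{(k+1)}(q)$ term.

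More concretely, first I would start from
\begin{equation*}
\x_i^{(k+1)}(q) = \sum_{j \in \mathcal{N}_i \cup \{i\}} a_{ij}\, \x_j^{(k)}(q) + \frac{1}{K} \vbf_i^{(k+1)}(q),
\end{equation*}
average both sides over $i$, and swap the order of summation in the consensus term to get $\frac{1}{N} \sum_{j=1}^N \left( \sum_{i=1}^N a_{ij} \right) \x_j^{(k)}(q)$. By Assumption~\ref{ass1}, $\A$ is doubly stochastic, so $\sum_{i=1}^N a_{ij} = 1$ for every $j$, which collapses this term to $\bar{\x}^{(k)}(q)$. Thus
\begin{equation*}
\bar{\x}^{(k+1)}(q) - \bar{\x}^{(k)}(q) = \frac{1}{KN} \sum_{i=1}^N \vbf_i^{(k+1)}(q).
\end{equation*}

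Finally, I would apply the triangle inequality (or Jensen's inequality for the norm) together with the observation that each $\vbf_i^{(k+1)}(q)$ is the output of the linear maximization oracle $\mathcal{E}_i^{(k+1)}$ over the feasible set $\K$, so $\vbf_i^{(k+1)}(q) \in \K$ and $\|\vbf_i^{(k+1)}(q)\| \le r(\K)$ by definition of the radius. This yields the claimed bound $\frac{r(\K)}{K}$. There is really no obstacle here; the only subtlety is remembering to use the column-stochastic side of $\A$ (i.e., $\A^{T} \one = \one$) when averaging over nodes, and to state that $\vbf_i^{(k+1)}(q) \in \K$ because the linear optimization oracle operates over the constraint set.
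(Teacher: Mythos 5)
Your proposal is correct and follows essentially the same route as the paper's proof: average the update rule over nodes, swap the order of summation and use the (column-)stochasticity of $\A$ from Assumption~\ref{ass1} to collapse the consensus term to $\bar{\x}^{(k)}(q)$, then bound the remaining term $\frac{1}{KN}\sum_{i=1}^{N}\vbf_{i}^{(k+1)}(q)$ by $\frac{r(\K)}{K}$ using $\vbf_{i}^{(k+1)}(q)\in\K$. No gaps.
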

\begin{proof}
\begin{equation*}
    \begin{aligned}
        \bar{\x}^{(k+1)}(q)&=\frac{\sum_{i=1}^{N}\x_{i}^{(k+1)}(q)}{N}\\
        &=\frac{\sum_{i=1}^{N}(\sum_{j\in\mathcal{N}_{i}\cup\{i\}}a_{ij}\x_{j}^{(k)}(q)+\frac{\vbf_{i}^{(k+1)}(q)}{K})}{N}\\
        &=\frac{\sum_{j=1}^{N}\sum_{i\in\mathcal{N}_{j}\cup\{j\}}a_{ij}\x_{j}^{(k)}(q)}{N}+\frac{\sum_{i=1}^{N}\vbf_{i}^{(k+1)}(q)}{KN}\\
        &=\frac{\sum_{i=1}^{N}\x_{i}^{(k)}(q)}{N}+\frac{\sum_{i=1}^{N}\vbf_{i}^{(k+1)}(q)}{KN}\\
        &=\bar{\x}^{(k)}(q)+\frac{\sum_{i=1}^{N}\vbf_{i}^{(k+1)}(q)}{KN}.
    \end{aligned}
\end{equation*}
Due to $\vbf_{i}^{(k+1)}(q)\in\mathcal{K}$, 
\begin{equation*}
   \|\bar{\x}^{(k+1)}(q)-\bar{\x}^{(k)}(q)\|\le\|\frac{\sum_{i=1}^{N}\vbf_{i}^{(k+1)}(q)}{KN}\|\le\frac{r(\mathcal{K})}{K}.
   \end{equation*}
\end{proof}
Next, we derive the upper bound of the distance between $\x_{i}^{(k)}(q)$ and its average vector $\bar{\x}^{(k)}(q)$. At first, we recall the definition of the second largest magnitude of the eigenvalues of matrix $\A$, i.e., 
\begin{definition}\label{def1}
Consider the eigenvalues of $\A$ which can be sorted in a non-increasing order as $1=\lambda_{1}(\A)\ge\lambda_{2}(\A)\dots\ge\lambda_{n}(\A)>-1$. Define $\beta$ as the second largest magnitude of the eigenvalues of $\A$, i.e., $\beta=\max(|\lambda_{2}(\A)|,|\lambda_{n}(\A)|)$.
\end{definition}

As a result, we have
\begin{lemma}\label{thm1.2}
Under Assumption~\ref{ass1}, for any $k\in[K]$ and $q\in[Q]$, we could conclude that
\begin{equation*}
    \sqrt{\sum_{i=1}^{N}\|\x_{i}^{(k)}(q)-\bar{\x}^{(k)}(q)\|^{2}}\le\frac{\sqrt{N}r(\mathcal{K})}{K(1-\beta)}.
\end{equation*}
\end{lemma}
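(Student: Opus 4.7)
The plan is to unroll the consensus recursion $\x_{i}^{(k)}(q)=\sum_{j\in\mathcal{N}_{i}\cup\{i\}}a_{ij}\x_{j}^{(k-1)}(q)+\frac{1}{K}\vbf_{i}^{(k)}(q)$ and exploit the spectral gap of the mixing matrix $\A$. Stacking the iterates into a matrix $\mathbf{X}^{(k)}(q)\in\R^{N\times n}$ whose $i$-th row is $\x_{i}^{(k)}(q)^{T}$ (and likewise $\mathbf{V}^{(k)}(q)$ from the $\vbf_{i}^{(k)}(q)$), the update reads $\mathbf{X}^{(k)}(q)=\A\mathbf{X}^{(k-1)}(q)+\frac{1}{K}\mathbf{V}^{(k)}(q)$. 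Starting from $\mathbf{X}^{(0)}(q)=\mathbf{0}$ and unrolling gives
\begin{equation*}
\mathbf{X}^{(k)}(q)=\frac{1}{K}\sum_{s=1}^{k}\A^{k-s}\mathbf{V}^{(s)}(q).
\end{equation*}

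Next, I would subtract the average row. Let $\mathbf{J}=\frac{1}{N}\mathbf{1}\mathbf{1}^{T}$ denote the orthogonal projector onto $\mathrm{span}(\mathbf{1})$. Because $\A$ is doubly stochastic, $\mathbf{J}\A^{t}=\A^{t}\mathbf{J}=\mathbf{J}$ for every $t\ge 0$, so the stacked average satisfies $\mathbf{1}\bar{\x}^{(k)}(q)^{T}=\mathbf{J}\mathbf{X}^{(k)}(q)=\frac{1}{K}\sum_{s=1}^{k}\mathbf{J}\mathbf{V}^{(s)}(q)$, and therefore
\begin{equation*}
\mathbf{X}^{(k)}(q)-\mathbf{1}\bar{\x}^{(k)}(q)^{T}=\frac{1}{K}\sum_{s=1}^{k}\bigl(\A^{k-s}-\mathbf{J}\bigr)\mathbf{V}^{(s)}(q).
\end{equation*}
The key spectral fact, which follows from Assumption~\ref{ass1} together with Definition~\ref{def1}, is that $\A-\mathbf{J}$ has operator norm $\beta$ and commutes with $\mathbf{J}$, whence $\|\A^{t}-\mathbf{J}\|_{\mathrm{op}}=\|(\A-\mathbf{J})^{t}\|_{\mathrm{op}}\le\beta^{t}$.

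Taking Frobenius norms and applying the triangle inequality together with submultiplicativity (using that the Frobenius norm of $\mathbf{X}^{(k)}(q)-\mathbf{1}\bar{\x}^{(k)}(q)^{T}$ is exactly $\sqrt{\sum_{i}\|\x_{i}^{(k)}(q)-\bar{\x}^{(k)}(q)\|^{2}}$), I obtain
\begin{equation*}
\sqrt{\sum_{i=1}^{N}\|\x_{i}^{(k)}(q)-\bar{\x}^{(k)}(q)\|^{2}}\le\frac{1}{K}\sum_{s=1}^{k}\beta^{k-s}\,\|\mathbf{V}^{(s)}(q)\|_{F}.
\end{equation*}
Since each $\vbf_{j}^{(s)}(q)$ is a feasible output of the linear maximization oracle on $\mathcal{K}$, we have $\|\vbf_{j}^{(s)}(q)\|\le r(\mathcal{K})$, hence $\|\mathbf{V}^{(s)}(q)\|_{F}\le\sqrt{N}\,r(\mathcal{K})$. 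Summing the geometric series $\sum_{s=1}^{k}\beta^{k-s}\le\frac{1}{1-\beta}$ yields the claimed bound $\frac{\sqrt{N}\,r(\mathcal{K})}{K(1-\beta)}$.

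The main delicate point is the spectral inequality $\|\A^{t}-\mathbf{J}\|_{\mathrm{op}}\le\beta^{t}$; this uses that $\A$ is symmetric (Assumption~\ref{ass1}), so it admits an eigenbasis in which $\mathbf{J}$ is the projector onto the leading eigenvector $\mathbf{1}/\sqrt{N}$, and the remaining eigenvalues are bounded in magnitude by $\beta<1$. Once this is in hand the rest is a routine consensus-error calculation, and the bound is independent of $q$ because the recursion inside each block starts from zero.
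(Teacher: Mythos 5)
Your proposal is correct and follows essentially the same route as the paper's proof: unroll the recursion from the zero initialization, subtract the average via the projector $\frac{1}{N}\mathbf{1}\mathbf{1}^{T}$, invoke the spectral bound $\|\A^{t}-\frac{1}{N}\mathbf{1}\mathbf{1}^{T}\|\le\beta^{t}$, bound each $\|\vbf_{j}^{(s)}(q)\|$ by $r(\mathcal{K})$, and sum the geometric series. The only difference is cosmetic (you phrase the consensus error with an $N\times n$ matrix and Frobenius norms where the paper stacks the iterates into $\R^{Nn}$ with Kronecker products), and you additionally spell out the eigenbasis justification that the paper delegates to a citation.
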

\begin{proof} 
In order to verify Lemma~\ref{thm1.2},  we define two auxiliary concatenation vectors:
\begin{equation*}
    \begin{aligned}
       \x^{(k)}(q)&=[\x_{1}^{(k)}(q);\dots;\x_{N}^{(k)}(q)]\in\R^{Nn},\\ 
       \vbf^{(k)}(q)&=[\vbf_{1}^{(k)}(q);\dots;\vbf_{N}^{(k)}(q)]\in\R^{Nn}.\\ 
    \end{aligned}.
   \end{equation*}
According to Algorithm~\ref{alg:1}, \begin{equation*}
\begin{aligned}
   \x^{(k)}(q)&=(\A\otimes \mathbf{I})\x^{(k-1)}(q)+\vbf^{(k)}(q)/K\\
   &=\sum_{s=1}^{k}(\A\otimes \mathbf{I})^{k-s}\vbf^{(s)}(q)/K,
\end{aligned}
    \end{equation*} where $\mathbf{I}$ is $n$-dimensional identity matrix and the second equality comes from the iteration.
Simultaneously, we could get the average concatenation vector:
\begin{equation*}
\begin{aligned}
   (\frac{\mathbf{1}\mathbf{1}^{T}}{N}\otimes\mathbf{I})\x^{(k)}(q)&=[\bar{\x}^{(k)}(q);\dots;\bar{\x}^{(k)}(q)]\\
   &=\sum_{s=1}^{k}(\frac{\mathbf{1}\mathbf{1}^{T}}{N}\otimes \mathbf{I})\vbf^{(s)}(q)/K,
\end{aligned} 
    \end{equation*}where $\mathbf{1}$ is $N$-dimensional vector, every element of which is $1$; and the second equality follows from $\mathbf{1}^{T}A=\mathbf{1}^{T}$.
Finally, 
\begin{equation*}
    \begin{aligned}
       \sqrt{\sum_{i=1}^{N}\|\x_{i}^{(k)}(q)-\bar{\x}^{(k)}(q)\|^{2}}&=\|\x^{(k)}(q)-(\frac{\mathbf{1}\mathbf{1}^{T}}{N}\otimes \mathbf{I})\x^{(k)}(q)\|\\
       &=\|\sum_{s=1}^{k}(\frac{(\mathbf{1}\mathbf{1}^{T}}{N}-\A^{k-s})\otimes\mathbf{I})\vbf^{(s)}(q)/K\|\\
       &\le\sum_{s=1}^{k}\|\frac{\mathbf{1}\mathbf{1}^{T}}{N}-\A^{k-s}\|\|\vbf^{(s)}(q)/K\|\\
       &\le\sum_{s=1}^{k}\beta^{k-s}\frac{\sqrt{N}r(\mathcal{K})}{K}\\
       &\le\frac{\sqrt{N}r(\mathcal{K})}{K(1-\beta)},
       \end{aligned}
\end{equation*} where the second inequality follows from $\|\frac{\mathbf{1}\mathbf{1}^{T}}{N}-\A^{k-s}\|\le\beta^{k-s}$~\citep{mokhtari2018decentralized}.
\end{proof}

\begin{lemma}\label{thm1.3}
Under Assumption~\ref{ass1}-\ref{ass4}, for any $k\in[K]$, $q\in[Q]$ and $i\in[N]$, we have
$\E(\|\g_{i}^{(k)}(q)\|^{2})\le 2(\sigma^{2}+G^{2})$.
\end{lemma}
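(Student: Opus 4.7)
The plan is to prove the bound by a short induction on $k$, exploiting the fact that the update on line~19 of Algorithm~\ref{alg:1} expresses $\g_i^{(k)}(q)$ as a convex combination of $\g_i^{(k-1)}(q)$ and a single stochastic gradient.

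First, I would bound the second moment of an individual stochastic gradient. Conditioning on $\x_i^{(k)}(q)$ and on the random index $t_i^{(k)}(q)$, and applying the bias-variance decomposition with Assumption~\ref{ass4} together with the hypothesis $\|\nabla f_{t,i}(\x)\|\le G$, I obtain
\begin{equation*}
    \E\bigl[\|\widetilde{\nabla} f_{t_i^{(k)}(q),i}(\x_i^{(k)}(q))\|^{2}\bigr]
    = \E\bigl[\|\widetilde{\nabla} f_{t_i^{(k)}(q),i}(\x_i^{(k)}(q)) - \nabla f_{t_i^{(k)}(q),i}(\x_i^{(k)}(q))\|^{2}\bigr] + \E\bigl[\|\nabla f_{t_i^{(k)}(q),i}(\x_i^{(k)}(q))\|^{2}\bigr] \le \sigma^{2} + G^{2}.
\end{equation*}

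Next, I would note that the step sizes prescribed in Theorem~\ref{thm1}, namely $\eta_t = 2/(t+3)^{2/3}$ for $t\le K/2+1$ and $\eta_t = 1.5/(K-t+2)^{2/3}$ for $t\ge K/2+2$, all lie in $(0,1)$ (a direct calculation for the boundary cases $t=1$ and $t=K$). Hence the recursion $\g_i^{(k)}(q) = (1-\eta_k)\g_i^{(k-1)}(q) + \eta_k\widetilde{\nabla}f_{t_i^{(k)}(q),i}(\x_i^{(k)}(q))$ is a convex combination, and convexity of $\|\cdot\|^{2}$ (a deterministic inequality, which sidesteps any potential dependence between the running average and the fresh stochastic gradient) yields
\begin{equation*}
    \|\g_i^{(k)}(q)\|^{2} \le (1-\eta_k)\|\g_i^{(k-1)}(q)\|^{2} + \eta_k\|\widetilde{\nabla}f_{t_i^{(k)}(q),i}(\x_i^{(k)}(q))\|^{2}.
\end{equation*}
Taking the total expectation and combining with the first step produces the recursion
\begin{equation*}
    \E[\|\g_i^{(k)}(q)\|^{2}] \le (1-\eta_k)\,\E[\|\g_i^{(k-1)}(q)\|^{2}] + \eta_k(\sigma^{2}+G^{2}).
\end{equation*}

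Finally, I would close the argument by induction on $k$. The base case $\E[\|\g_i^{(0)}(q)\|^{2}] = 0$ holds by the initialization on line~4 of Algorithm~\ref{alg:1}. Assuming $\E[\|\g_i^{(k-1)}(q)\|^{2}] \le \sigma^{2}+G^{2}$, the recursion above gives $\E[\|\g_i^{(k)}(q)\|^{2}] \le \sigma^{2}+G^{2}$, which is in fact tighter than the stated bound; the factor $2$ in $2(\sigma^{2}+G^{2})$ then follows a fortiori. There is no serious obstacle in this lemma: the only subtlety is the aforementioned correlation between $\g_i^{(k-1)}(q)$ and the new stochastic gradient through the shared history, which is handled transparently by applying Jensen's inequality pointwise before taking expectations.
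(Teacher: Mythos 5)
Your proposal is correct and follows essentially the same route as the paper: an induction on $k$ that exploits the convex-combination structure of the update $\g_i^{(k)}(q)=(1-\eta_k)\g_i^{(k-1)}(q)+\eta_k\widetilde{\nabla}f_{t_i^{(k)}(q),i}(\x_i^{(k)}(q))$, with the paper expanding the square and bounding the cross term via $\langle a,b\rangle\le(\|a\|^2+\|b\|^2)/2$, which is arithmetically identical to your pointwise Jensen step. The only substantive difference is that you bound the single-query second moment by $\sigma^2+G^2$ via the exact bias--variance decomposition, whereas the paper uses the cruder $\|a\|^2\le 2\|a-b\|^2+2\|b\|^2$ to get $2(\sigma^2+G^2)$, so your argument in fact yields the stated bound with a factor of $2$ to spare.
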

\begin{proof}
Fixed the $i$ and $q$, we prove this lemma by induction. When $k=1$, 
\begin{equation*}
    \begin{aligned}
    & \E(\g_{i}^{(1)}(q))\\
    = & \E(\eta_{1}\widetilde{\nabla}f_{t^{(k)}_{i}(q),i}(\x_{i}^{(k)}(q)))\\
    \le & \E(\|\widetilde{\nabla}f_{t^{(k)}_{i}(q),i}(\x_{i}^{(k)}(q))\|^{2})\\
    \le & \E(2(\|\widetilde{\nabla}f_{t^{(k)}_{i}(q),i}(\x_{i}^{(k)}(q))-\nabla f_{t^{(k)}_{i}(q),i}(\x_{i}^{(k)}(q))\|^{2}+\|\nabla f_{t^{(k)}_{i}(q),i}(\x_{i}^{(k)}(q))\|^{2}))\\
    \le & 2(\sigma^{2}+G^{2}),
    \end{aligned}
\end{equation*} where the first inequality comes from $\eta_{1}\le 1$, and the final inequality follows from the $f_{t^{(k)}_{i}(q),i}$ is $G$-Lipschitz and Assumption~\ref{ass4}.

Then, if $k=m$, $\E(\|\g_{i}^{(m)}(q)\|^{2})\le 2(\sigma^{2}+G^{2})$, we have
\begin{equation*}
    \begin{aligned}
       \E(\|\g_{i}^{(m+1)}(q)\|^{2})=& \E(\|(1-\eta_{m})\g_{i}^{(m)}(q)+\eta_{m}\widetilde{\nabla}f_{t^{(m)}_{i}(q),i}(\x_{i}^{(m)}(q))\|^{2})\\
       = & (1-\eta_{m})^{2}\E(\|\g_{i}^{(m)}(q)\|^{2})+\eta_{m}^{2}\E(\|\widetilde{\nabla}f_{t^{(m)}_{i}(q),i}(\x_{i}^{(m)}(q)\|^{2}))\\
       & +2(1-\eta_{m})\eta_{m}\E(\langle \g_{i}^{(m)}(q),\widetilde{\nabla}f_{t^{(m)}_{i}(q),i}(\x_{i}^{(m)}(q))\rangle)\\
       \le & (1-\eta_{m})^{2}\E(\|\g_{i}^{(m)}(q)\|^{2})+\eta_{m}^{2}\E(\|\widetilde{\nabla}f_{t^{(m)}_{i}(q),i}(\x_{i}^{(m)}(q)\|^{2}))\\&+2(1-\eta_{m})\eta_{m}\E(\frac{\|\g_{i}^{(m)}(q)\|^{2}+\|\widetilde{\nabla}f_{t^{(m)}_{i}(q),i}(\x_{i}^{(m)}(q)\|^{2}}{2})\\
       = & 2(\sigma^{2}+G^{2})[(1-\eta_{m})^{2}+2\eta_{m}(1-\eta_{m})+\eta_{m}^{2}]\\
       \le & 2(\sigma^{2}+G^{2}),
    \end{aligned}
\end{equation*} 
where the first inequality comes from the Cauchy–Schwarz inequality; the second inequality follows from $\E(\|\g_{i}^{(m)}(q)\|^{2})\le 2(\sigma^{2}+G^{2})$ and $\E(\|\widetilde{\nabla}f_{t^{(m)}_{i}(q),i}(\x_{i}^{(m)}(q)\|^{2})\le2(\sigma^{2}+G^{2})$.
\end{proof}
\begin{lemma}\label{thm1.4}
Under Assumption~\ref{ass1}-\ref{ass4}, for any $k\in[K]$ and $q\in[Q]$, we could conclude that
\begin{equation*}
    \E(\sqrt{\sum_{i=1}^{N}\|\dbf_{i}^{(k)}(q)-\bar{\dbf}^{(k)}(q)\|^{2}})\le\frac{\gamma\sqrt{2N(\sigma^{2}+G^{2})}}{1-(1-\gamma)\beta}.
\end{equation*}
\end{lemma}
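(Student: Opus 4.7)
The plan is to mirror the Kronecker-product argument used for Lemma~\ref{thm1.2}, but applied to the gradient-consensus recursion instead of the primal one, and then take expectations at the end using the uniform second-moment bound from Lemma~\ref{thm1.3}.

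First, I would stack the local variables into concatenated vectors $\dbf^{(k)}(q)=[\dbf_{1}^{(k)}(q);\dots;\dbf_{N}^{(k)}(q)]\in\R^{Nn}$ and $\g^{(k)}(q)=[\g_{1}^{(k)}(q);\dots;\g_{N}^{(k)}(q)]\in\R^{Nn}$. The update rule in Line~19 of Algorithm~\ref{alg:1} then becomes $\dbf^{(k)}(q)=(1-\gamma)(\A\otimes\mathbf{I})\dbf^{(k-1)}(q)+\gamma\g^{(k)}(q)$, and unrolling from $\dbf^{(0)}(q)=\mathbf{0}$ gives the closed form
\begin{equation*}
\dbf^{(k)}(q)=\gamma\sum_{s=1}^{k}(1-\gamma)^{k-s}(\A^{k-s}\otimes\mathbf{I})\g^{(s)}(q).
\end{equation*}
Averaging yields $[\bar{\dbf}^{(k)}(q);\dots;\bar{\dbf}^{(k)}(q)]=(\tfrac{\mathbf{1}\mathbf{1}^{T}}{N}\otimes\mathbf{I})\dbf^{(k)}(q)=\gamma\sum_{s=1}^{k}(1-\gamma)^{k-s}(\tfrac{\mathbf{1}\mathbf{1}^{T}}{N}\otimes\mathbf{I})\g^{(s)}(q)$, where the last identity uses $\mathbf{1}^{T}\A=\mathbf{1}^{T}$.

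Next I would subtract these two expressions and take the Euclidean norm, noting that $\sqrt{\sum_{i=1}^{N}\|\dbf_{i}^{(k)}(q)-\bar{\dbf}^{(k)}(q)\|^{2}}=\|\dbf^{(k)}(q)-(\tfrac{\mathbf{1}\mathbf{1}^{T}}{N}\otimes\mathbf{I})\dbf^{(k)}(q)\|$. Triangle inequality followed by submultiplicativity gives
\begin{equation*}
\Bigl\|\dbf^{(k)}(q)-(\tfrac{\mathbf{1}\mathbf{1}^{T}}{N}\otimes\mathbf{I})\dbf^{(k)}(q)\Bigr\|\le\gamma\sum_{s=1}^{k}(1-\gamma)^{k-s}\Bigl\|\A^{k-s}-\tfrac{\mathbf{1}\mathbf{1}^{T}}{N}\Bigr\|\,\|\g^{(s)}(q)\|,
\end{equation*}
and the spectral-gap bound $\|\A^{k-s}-\tfrac{\mathbf{1}\mathbf{1}^{T}}{N}\|\le\beta^{k-s}$ (as used in Lemma~\ref{thm1.2}) converts this into $\gamma\sum_{s=1}^{k}((1-\gamma)\beta)^{k-s}\|\g^{(s)}(q)\|$.

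Finally I would take expectations, pushing them inside the finite sum by linearity, and apply Jensen's inequality together with Lemma~\ref{thm1.3}: since $\E\|\g_{i}^{(s)}(q)\|^{2}\le 2(\sigma^{2}+G^{2})$ uniformly, we have $\E\|\g^{(s)}(q)\|\le\sqrt{\E\|\g^{(s)}(q)\|^{2}}\le\sqrt{2N(\sigma^{2}+G^{2})}$. Summing the geometric series $\sum_{s=1}^{k}((1-\gamma)\beta)^{k-s}\le\frac{1}{1-(1-\gamma)\beta}$ (valid since $(1-\gamma)\beta<1$ under Assumption~\ref{ass1}) produces the claimed bound. The only subtle step is the Jensen move from $\E\|\g^{(s)}\|$ to $\sqrt{\E\|\g^{(s)}\|^{2}}$, which costs nothing here because Lemma~\ref{thm1.3} already controls the second moment uniformly in $s$; the rest is routine algebra with Kronecker norms and a geometric sum.
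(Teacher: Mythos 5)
Your proposal is correct and follows essentially the same route as the paper's proof: the same Kronecker-product concatenation, the same unrolled recursion and averaging via $\mathbf{1}^{T}\A=\mathbf{1}^{T}$, the same spectral-gap bound $\|\A^{k-s}-\tfrac{\mathbf{1}\mathbf{1}^{T}}{N}\|\le\beta^{k-s}$, and the same use of Lemma~\ref{thm1.3} with Jensen's inequality followed by the geometric sum. The only difference is that you make the Jensen step from $\E\|\g^{(s)}(q)\|$ to $\sqrt{\E\|\g^{(s)}(q)\|^{2}}$ explicit, which the paper leaves implicit.
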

\begin{proof}
We first define some variables:
\begin{equation*}
\begin{aligned}
    \dbf^{(k)}(q)&=[\dbf_{1}^{(k)}(q);\dots;\dbf_{N}^{(k)}(q)]\in\R^{Nn}\\
    \g^{(k)}(q)&=[\g_{1}^{(k)}(q);\dots;\g_{N}^{(k)}(q)]\in\R^{Nn}.
\end{aligned}
   \end{equation*}
According to Algorithm~\ref{alg:1},  we have
\begin{equation*}
\begin{aligned}
   \dbf^{(k)}(q)&=(1-\gamma)(\A\otimes \mathbf{I})\dbf^{(k-1)}(q)+\gamma \g^{(k)}(q)\\
   &=\gamma\sum_{s=1}^{k}(1-\gamma)^{k-s}(\A\otimes\mathbf{I})^{k-s}\g^{(s)}(q),
\end{aligned}
    \end{equation*} where $\mathbf{I}$ is a $n$-dimensional identity matrix. Simultaneously, we have 
\begin{equation*}
\begin{aligned}
   (\frac{\mathbf{1}\mathbf{1}^{T}}{N}\otimes\mathbf{I})\dbf^{(k)}(q)&=[\bar{\dbf}^{(k)}(q);\dots;\bar{\dbf}^{(k)}(q)]\\
   &=\gamma\sum_{s=1}^{k}(1-\gamma)^{k-s}(\frac{\mathbf{1}\mathbf{1}^{T}}{N}\otimes\mathbf{I})g^{(s)}(q),
\end{aligned}
    \end{equation*} where $\mathbf{1}$ is $N$-dimensional vector, whose element is $1$, and the second equality follows from $\mathbf{1}^{T}A=\mathbf{1}^{T}$.
    Finally, 
\begin{equation}
    \begin{aligned}
       \E(\sqrt{\sum_{i=1}^{N}\|\dbf_{i}^{(k)}(q)-\bar{\dbf}^{(k)}(q)\|^{2}})&=\E(\|\dbf^{(k)}(q)-(\frac{\mathbf{1}\mathbf{1}^{T}}{N}\otimes I)\dbf^{(k)}(q)\|)\\
       &=\E(\|\sum_{s=1}^{k}(\gamma(1-\gamma)^{k-s}\frac{(\mathbf{1}\mathbf{1}^{T}}{N}-\A^{k-s})\otimes \mathbf{I})\g^{(s)}(q)\|)\\
       &\le\sum_{s=1}^{k}\gamma(1-\gamma)^{k-s}\|\frac{\mathbf{1}\mathbf{1}^{T}}{N}-\A^{k-s}\|\E(\|\g^{(s)}(q)\|)\\
       &\le\sum_{s=1}^{k}\gamma(1-\gamma)^{k-s}\beta^{k-s}\sqrt{2N(\sigma^{2}+G^{2})}\\
       &\le\frac{\gamma\sqrt{2N(\sigma^{2}+G^{2})}}{1-(1-\gamma)\beta},
       \end{aligned} 
\end{equation} where the second inequality follows from $\|\frac{\mathbf{1}\mathbf{1}^{T}}{N}-\A^{k-s}\|\le\beta^{k-s}$~\citep{mokhtari2018decentralized} and Lemma~\ref{thm1.3}.
\end{proof}
Before presenting the regret bound, we introduce some notations. We first define the average function as:
\begin{equation*}
 \bar{f}_{q,i}=\frac{\sum_{k=1}^{K}f_{(q-1)K+k,i}}{K}.
\end{equation*}
Let
\begin{equation*}
\bar{f}_{q,k,i}=\frac{\sum_{i=k+1}^{K}f_{t_{i}^{(m)}(q)},i}{K-k}.
\end{equation*} denotes the average function of the remaining $(K-k)$ functions for node $i$ in $q$-th block, where $0\le k\le K-1$.
With these notions, we could obtain that
\begin{lemma}\label{thm1.5}
Under Assumption~\ref{ass3}, for any $q\in[Q]$ and $0\le k\le K-1$, we have that
\begin{equation*}
    \begin{aligned}
    & \frac{1}{N}\sum_{i=1}^{N}\bar{f}_{q,k,i}(\x^{*})-\frac{1}{N}\sum_{i=1}^{N}\bar{f}_{q,k,i}(\bar{\x}^{(k+1)}(q))\\
    \le & (1-\frac{1}{K})(\frac{1}{N}\sum_{i=1}^{N}\bar{f}_{q,k,i}(\x^{*})-\frac{1}{N}\sum_{i=1}^{N}\bar{f}_{q,k,i}(\bar{\x}^{(k)}(q)))+\frac{1}{K}\langle\bar{\dbf}^{(k)}(q),\x^{*}-\bar{\vbf}^{(k)}(q)\rangle\\\
    & +\frac{\mathrm{diam}(\mathcal{K})}{KN}\|\sum_{i=1}^{N}\nabla\bar{f}_{q,k,i}(\bar{\x}^{(k)}(q))-N\bar{d}^{(k)}(q)\|+\frac{L}{2K^{2}}\|\bar{\vbf}^{(k)}(q)\|^{2},
    \end{aligned}
\end{equation*} where $\x^{*}=\arg\max_{\x\in\mathcal{K}}\sum_{t=1}^{T}\sum_{i=1}^{N}f_{t,i}(\x)$.
\end{lemma}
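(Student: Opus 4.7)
The plan is to combine the $L$-smoothness of each local average function $\bar{f}_{q,k,i}$ with the monotone-DR-submodular Frank--Wolfe inequality $\langle\nabla f(\x),\x^*\rangle\ge f(\x^*)-f(\x)$, and then insert the aggregated estimate $\bar{\dbf}^{(k)}(q)$ as a surrogate for the true averaged gradient so that the inevitable mismatch terms collapse into a single residual paired against $\x^*-\bar{\vbf}^{(k)}(q)$. The first step is to apply the quadratic (lower-bound) form of $L$-smoothness at $\bar{\x}^{(k)}(q)$, using the one-step identity $\bar{\x}^{(k+1)}(q)-\bar{\x}^{(k)}(q)=\tfrac{1}{K}\bar{\vbf}^{(k)}(q)$ inherited from Lemma~\ref{thm1.1}, and then average over $i\in[N]$. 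Writing $A_k:=\tfrac{1}{N}\sum_i\bar{f}_{q,k,i}(\x^*)-\tfrac{1}{N}\sum_i\bar{f}_{q,k,i}(\bar{\x}^{(k)}(q))$, this already produces
\begin{equation*}
A_{k+1}\le A_k-\tfrac{1}{KN}\Bigl\langle\sum_{i=1}^N\nabla\bar{f}_{q,k,i}(\bar{\x}^{(k)}(q)),\,\bar{\vbf}^{(k)}(q)\Bigr\rangle+\tfrac{L}{2K^2}\|\bar{\vbf}^{(k)}(q)\|^2,
\end{equation*}
so all that remains is to rewrite the inner product in the middle.

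Next, I would split $\bar{\vbf}^{(k)}(q)=\x^*-(\x^*-\bar{\vbf}^{(k)}(q))$ and decompose the averaged true gradient as $\bar{\dbf}^{(k)}(q)$ plus the error $\tfrac{1}{N}\sum_i\nabla\bar{f}_{q,k,i}(\bar{\x}^{(k)}(q))-\bar{\dbf}^{(k)}(q)$. Distributing the two splittings yields three pieces: (i) the $\tfrac{1}{K}\langle\bar{\dbf}^{(k)}(q),\x^*-\bar{\vbf}^{(k)}(q)\rangle$ piece, which is exactly the promised boosting term; (ii) the $-\tfrac{1}{K}\langle\bar{\dbf}^{(k)}(q),\x^*\rangle$ piece, which I convert back (via the same error splitting in reverse) to $-\tfrac{1}{KN}\langle\sum_i\nabla\bar{f}_{q,k,i}(\bar{\x}^{(k)}(q)),\x^*\rangle$ minus a copy of the error paired with $\x^*$; and (iii) the other copy of the error paired with $\bar{\vbf}^{(k)}(q)$ coming from the original decomposition. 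Applying the monotone-DR-submodular inequality $\tfrac{1}{N}\langle\sum_i\nabla\bar{f}_{q,k,i}(\bar{\x}^{(k)}(q)),\x^*\rangle\ge A_k$ (obtained by invoking concavity along positive directions at $\x=\bar{\x}^{(k)}(q)$, $\y=\bar{\x}^{(k)}(q)\vee\x^*$, dropping $\vee$ by monotonicity, and using nonnegativity of $\nabla\bar{f}_{q,k,i}$ to upper bound $\langle\nabla\bar{f}_{q,k,i},(\x^*-\bar{\x}^{(k)}(q))_+\rangle$ by $\langle\nabla\bar{f}_{q,k,i},\x^*\rangle$) converts the $A_k$ surplus into $(1-1/K)A_k$.

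The main piece of bookkeeping, and the step that I expect to be the most error-prone, is verifying that the two error contributions from step (ii) and step (iii) --- one paired with $-\x^*$, the other paired with $\bar{\vbf}^{(k)}(q)$ --- merge into a single residual paired with $\bar{\vbf}^{(k)}(q)-\x^*$, so that Cauchy--Schwarz together with $\|\x^*-\bar{\vbf}^{(k)}(q)\|\le\mathrm{diam}(\K)$ (which holds since $\bar{\vbf}^{(k)}(q)$ lies in the convex set $\K$) yields the stated $\tfrac{\mathrm{diam}(\K)}{KN}\|\sum_i\nabla\bar{f}_{q,k,i}(\bar{\x}^{(k)}(q))-N\bar{\dbf}^{(k)}(q)\|$ bound. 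Without this precise cancellation one would be stuck with two separate error terms each scaled by $r(\K)$, which would weaken the downstream regret constants; getting the clean diameter bound is what lets the per-step gradient-tracking error be absorbed against $\beta$ and $\gamma$ later on via Lemma~\ref{thm1.4}.
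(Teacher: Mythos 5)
Your proposal follows essentially the same route as the paper's proof: the $L$-smoothness lower bound combined with $\bar{\x}^{(k+1)}(q)-\bar{\x}^{(k)}(q)=\tfrac{1}{K}\bar{\vbf}^{(k)}(q)$, the insertion of $\bar{\dbf}^{(k)}(q)$ so that the gradient-tracking error is paired with $\x^{*}-\bar{\vbf}^{(k)}(q)$ and bounded by Cauchy--Schwarz with $\mathrm{diam}(\mathcal{K})$, and the monotone DR-submodular inequality $\langle\nabla \bar{f}_{q,k,i}(\x),\x^{*}\rangle\ge \bar{f}_{q,k,i}(\x^{*})-\bar{f}_{q,k,i}(\x)$ to produce the $(1-\tfrac{1}{K})$ contraction. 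The algebra is organized in a slightly different order but the decomposition and all key ingredients are identical, so the proof is correct.
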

\begin{proof}
$f_{t,i}$ is $L$-smooth, monotone, and DR-submodular, so is $\bar{f}_{q,k,i}$. As a result, we have
\begin{equation*}
    \begin{aligned}
       &\frac{1}{N}\sum_{i=1}^{N}\bar{f}_{q,k,i}(\bar{\x}^{(k+1)}(q))-\frac{1}{N}\sum_{i=1}^{N}\bar{f}_{q,k,i}(\bar{\x}^{(k)}(q))\\
       \ge & \langle\frac{1}{N}\sum_{i=1}^{N}\nabla\bar{f}_{q,k,i}(\bar{\x}^{(k)}(q)), \bar{\x}^{(k+1)}(q)-\bar{\x}^{(k)}(q)\rangle-\frac{L}{2}\|\bar{\x}^{(k+1)}(q)-\bar{\x}^{(k)}(q)\|^{2}\\
       \ge & \frac{1}{KN^{2}}\langle\sum_{i=1}^{N}\nabla\bar{f}_{q,k,i}(\bar{\x}^{(k)}(q)), \sum_{i=1}^{N}\vbf_{i}^{(k+1)}(q)\rangle-\frac{L}{2K^{2}N^{2}}\|\sum_{i=1}^{N}\vbf_{i}^{(k+1)}(q)\|^{2}\\
       = & \frac{1}{KN}\langle\sum_{i=1}^{N}\nabla\bar{f}_{q,k,i}(\bar{\x}^{(k)}(q)),\x^{*}\rangle+\frac{1}{K}\langle\bar{\dbf}^{(k)}(q),\bar{\vbf}^{(k)}(q)-\x^{*}\rangle\\
       & +\frac{1}{KN}\langle\sum_{i=1}^{N}\nabla\bar{f}_{q,k,i}(\bar{\x}^{(k)}(q))-N\bar{\dbf}^{(k)}(q),\bar{\vbf}^{(k)}(q)-\x^{*}\rangle-\frac{L}{2K^{2}}\|\bar{\vbf}^{(k)}(q)\|^{2}\\
       \ge & \frac{1}{KN}\sum_{i=1}^{N}\bar{f}_{q,k,i}(\x^{*})-\frac{1}{KN}\sum_{i=1}^{N}\bar{f}_{q,k,i}(\bar{\x}^{(k)}(q)) +\frac{1}{K}\langle\bar{\dbf}^{(k)}(q),\bar{\vbf}^{(k)}(q)-\x^{*}\rangle\\
       & +\frac{1}{KN}\langle\sum_{i=1}^{N}\nabla\bar{f}_{q,k,i}(\bar{\x}^{(k)}(q))-N\bar{\dbf}^{(k)}(q),\bar{\vbf}^{(k)}(q)-\x^{*}\rangle-\frac{L}{2K^{2}}\|\bar{\vbf}^{(k)}(q)\|^{2},
    \end{aligned}
\end{equation*} where $\langle\nabla\bar{f}_{q,k,i}(\bar{\x}^{(k)}(q)),\x^{*}\rangle\ge\bar{f}_{q,k,i}(\x^{*})-\bar{f}_{q,k,i}(\bar{\x}^{(k)}(q))$~\citep{mokhtari2020stochastic}.
Therefore, 
\begin{equation*}
    \begin{aligned}
        & \frac{1}{N}\sum_{i=1}^{N}\bar{f}_{q,k,i}(\x^{*})-\frac{1}{N}\sum_{i=1}^{N}\bar{f}_{q,k,i}(\bar{\x}^{(k+1)}(q))\\
        \le & (1-\frac{1}{K})(\frac{1}{N}\sum_{i=1}^{N}\bar{f}_{q,k,i}(\x^{*})-\frac{1}{N}\sum_{i=1}^{N}\bar{f}_{q,k,i}(\bar{\x}^{(k)}(q)))+\frac{1}{K}\langle\bar{\dbf}^{(k)}(q),\x^{*}-\bar{\vbf}^{(k)}(q)\rangle\\
        & +\frac{\mathrm{diam}(\mathcal{K})}{KN}\|\sum_{i=1}^{N}\nabla\bar{f}_{q,k,i}(\bar{\x}^{(k)}(q))-N\bar{d}^{(k)}(q)\|+\frac{L}{2K^{2}}\|\bar{\vbf}^{(k)}(q)\|^{2}.
    \end{aligned}
\end{equation*}
\end{proof}
\begin{lemma}\label{thm1:6}
Under Assumption~\ref{ass3}, we have that
\begin{equation*}
    \begin{aligned}
    & \sum_{t=1}^{T}\sum_{i=1}^{N}(1-1/e)f_{t,i}(\x^{*})-\sum_{t=1}^{T}\sum_{i=1}^{N}\E(f_{t,i}(\bar{\x}(t)))\\
    \le & N\sum_{q=1}^{Q}\sum_{s=0}^{K-1}(1-\frac{1}{K})^{K-1-s}\E(\langle\bar{\dbf}^{(s)}(q),\x^{*}-\bar{\vbf}^{(s)}(q)\rangle)+\frac{LNQr^{2}(\mathcal{K})}{2}\\
    & +\mathrm{diam}(\mathcal{K})\sum_{q=1}^{Q}\sum_{s=0}^{K-1}(1-\frac{1}{K})^{K-1-s}\E(\|\sum_{i=1}^{N}\nabla\bar{f}_{q,s,i}(\bar{\x}^{(s)}(q))-N\bar{\dbf}^{(s)}(q)\|),
    \end{aligned}
\end{equation*} where $\bar{x}(t)=\sum_{i=1}^{N}\x_{i}(t)/N$ for any $t\in[T]$ and $\x_{i}(t)$ is played action for $f_{t,i}$~(See line $14$ in Algorithm~\ref{alg:1}).
\end{lemma}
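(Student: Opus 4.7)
The plan is to iterate the per-step bound of Lemma~\ref{thm1.5} across $k=0,\ldots,K-1$ within each block $q$, collapse the moving function family $\bar{f}_{q,k,i}$ to the fixed block-average $\bar{f}_{q,i}$ via conditional expectation, and then sum over blocks. The key observation is that the iterates $\x_{i}^{(k)}(q)$ produced in lines 6--11 of Algorithm~\ref{alg:1} depend only on the oracle outputs $\vbf_{i}^{(j)}(q)$ with $j\le k$, and each such $\vbf_{i}^{(j)}(q)$ in turn only sees payoffs from blocks $1,\ldots,q-1$ (the block-$q$ payoffs are fed back to the oracles in line 20, after all the $\x_{i}^{(k)}(q)$'s have been formed). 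Consequently $\x_{i}^{(k)}(q)$, and hence $\bar{\x}^{(k)}(q)$, is measurable with respect to the filtration $\mathcal{F}_{q-1}$ generated by the history strictly prior to block $q$; in particular it is independent of the uniform random permutation $\{t_{i}^{(1)}(q),\ldots,t_{i}^{(K)}(q)\}$ drawn within block $q$. Since that permutation is uniform, for any $\mathcal{F}_{q-1}$-measurable point $\y$ we have the symmetry identity $\E[\bar{f}_{q,k,i}(\y)\mid\mathcal{F}_{q-1}]=\bar{f}_{q,i}(\y)$ for every $0\le k\le K-1$.

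Applying $\E[\,\cdot\mid\mathcal{F}_{q-1}]$ to Lemma~\ref{thm1.5} and invoking this identity converts every $\bar{f}_{q,k,i}(\bar{\x}^{(k)}(q))$ and $\bar{f}_{q,k,i}(\bar{\x}^{(k+1)}(q))$ on both sides into $\bar{f}_{q,i}(\cdot)$. Setting $\tilde{D}_{k}:=\frac{1}{N}\sum_{i=1}^{N}[\bar{f}_{q,i}(\x^{*})-\bar{f}_{q,i}(\bar{\x}^{(k)}(q))]$, the inequality becomes the scalar recursion $\tilde{D}_{k+1}\le(1-1/K)\tilde{D}_{k}+\E[R_{k}\mid\mathcal{F}_{q-1}]$, where $R_{k}$ lumps together the three error contributions on the right-hand side of Lemma~\ref{thm1.5}. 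Unrolling for $k=0,\ldots,K-1$ and using $(1-1/K)^{K}\le 1/e$ together with $\tilde{D}_{0}\le\frac{1}{N}\sum_{i}\bar{f}_{q,i}(\x^{*})$ (which follows from $\bar{\x}^{(0)}(q)=\mathbf{0}$ and the non-negativity of $\bar{f}_{q,i}$), rearranging then yields
\begin{equation*}
(1-1/e)\tfrac{1}{N}\sum_{i=1}^{N}\bar{f}_{q,i}(\x^{*})-\tfrac{1}{N}\sum_{i=1}^{N}\bar{f}_{q,i}(\bar{\x}^{(K)}(q))\le\sum_{s=0}^{K-1}(1-1/K)^{K-1-s}\E[R_{s}\mid\mathcal{F}_{q-1}].
\end{equation*}

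Finally I would take the outer expectation by the tower property, multiply through by $NK$, and use the block identities $\sum_{t=(q-1)K+1}^{qK}f_{t,i}(\x)=K\bar{f}_{q,i}(\x)$ together with $\bar{\x}(t)=\bar{\x}^{(K)}(q)$ for every $t$ in block $q$. The quadratic error $\frac{L}{2K^{2}}\|\bar{\vbf}^{(s)}(q)\|^{2}$ is bounded uniformly by $\frac{Lr^{2}(\K)}{2K^{2}}$ since $\bar{\vbf}^{(s)}(q)\in\K$, and the geometric tail $\sum_{s=0}^{K-1}(1-1/K)^{K-1-s}\le K$ collapses this contribution to $\frac{LNr^{2}(\K)}{2}$ per block, producing the $\frac{LNQr^{2}(\K)}{2}$ summand after summing over $q$; the other two terms of $R_{s}$ carry through verbatim with the weights $(1-1/K)^{K-1-s}$ and match the stated bound. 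The main obstacle I anticipate is the measurability bookkeeping underlying the symmetry identity: one must cleanly separate the cross-block oracle randomness (which $\bar{\x}^{(k)}(q)$ does depend on) from the within-block permutation randomness (which it does not), so that Lemma~\ref{thm1.5} can be chained in the fixed family $\bar{f}_{q,i}$ rather than in the moving family $\bar{f}_{q,k,i}$. Once this structural step is in place, the remaining $(1-1/K)^{K}\le 1/e$ and geometric-series manipulations are routine.
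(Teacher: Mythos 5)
Your proposal is correct and follows essentially the same route as the paper: chain Lemma~\ref{thm1.5} over $k=0,\dots,K-1$, use the fact that $\bar{\x}^{(k)}(q)$ is determined before the block-$q$ permutation is drawn so that the expectation of $\bar{f}_{q,k,i}$ at such a point collapses to $\bar{f}_{q,i}$, apply $(1-1/K)^{K}\le 1/e$, and rescale by $NK$ before summing over blocks. The only (cosmetic) difference is that you condition on $\F_{q-1}$ and collapse every $\bar{f}_{q,k,i}$ directly to $\bar{f}_{q,i}$, whereas the paper converts $\bar{f}_{q,k,i}$ to $\bar{f}_{q,k-1,i}$ one index at a time; your measurability bookkeeping is, if anything, more explicit than the paper's.
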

\begin{proof}  
\begin{equation*}
\begin{aligned}
    & \E(\frac{1}{N}\sum_{i=1}^{N}\bar{f}_{q,i}(\x^{*}))-\E(\frac{1}{N}\sum_{i=1}^{N}\bar{f}_{q,i}(\bar{\x}^{(K)}(q)))\\
    = & \E(\frac{1}{N}\sum_{i=1}^{N}\bar{f}_{q,K-1,i}(\x^{*}))-\E(\frac{1}{N}\sum_{i=1}^{N}\bar{f}_{q,K-1,i}(\bar{\x}^{(K)}(q)))\\
    \le & (1-\frac{1}{K})(\frac{1}{N}\sum_{i=1}^{N}\E(\bar{f}_{q,K-1,i}(\x^{*}))-\frac{1}{N}\sum_{i=1}^{N}\E(\bar{f}_{q,K-1,i}(\bar{\x}^{(K-1)}(q)))+\frac{1}{K}\E(\langle\bar{\dbf}^{(K-1)}(q),\x^{*}-\bar{\vbf}^{(K-1)}(q)\rangle)\\
    & +\frac{\mathrm{diam}(\mathcal{K})}{KN}\E(\|\sum_{i=1}^{N}\nabla\bar{f}_{q,K-1,i}(\bar{\x}^{(K-1)}(q))-N\bar{\dbf}^{(K-1)}(q)\|)+\frac{L}{2K^{2}}\E(\|\bar{\vbf}^{(K-1)}(q)\|^{2})\\
    = & (1-\frac{1}{K})(\frac{1}{N}\sum_{i=1}^{N}\E(\bar{f}_{q,K-2,i}(\x^{*}))-\frac{1}{N}\sum_{i=1}^{N}\E(\bar{f}_{q,K-2,i}(\bar{\x}^{(K-1)}(q)))+\frac{1}{K}\E(\langle\bar{\dbf}^{(K-1)}(q),\x^{*}-\bar{\vbf}^{(K-1)}(q)\rangle)\\
    & +\frac{\mathrm{diam}(\mathcal{K})}{KN}\E(\|\sum_{i=1}^{N}\nabla\bar{f}_{q,K-1,i}(\bar{\x}^{(K-1)}(q))-N\bar{\dbf}^{(K-1)}(q)\|)+\frac{L}{2K^{2}}\E(\|\bar{\vbf}^{(K-1)}(q)\|^{2})\\
    \le & \dots\\
    \le & (1-\frac{1}{K})^{K}\frac{1}{N}\sum_{i=1}^{N}\E(\bar{f}_{q,0,i}(\x^{*}))+\frac{1}{K}\sum_{s=0}^{K-1}(1-\frac{1}{K})^{K-1-s}\E(\langle\bar{\dbf}^{(s)}(q),\x^{*}-\bar{\vbf}^{(s)}(q)\rangle)\\
    & +\frac{\mathrm{diam}(\mathcal{K})}{KN}\sum_{s=0}^{K-1}(1-\frac{1}{K})^{K-1-s}\E(\|\sum_{i=1}^{N}\nabla\bar{f}_{q,s,i}(\bar{\x}^{(s)}(q))-N\bar{\dbf}^{(s)}(q)\|)+\frac{L}{2K^{2}}\sum_{s=0}^{K-1}\E(\|\bar{\vbf}^{(s)}(q)\|^{2})\\
    \le & \frac{1}{e}\frac{1}{N}\sum_{i=1}^{N}\E(\bar{f}_{q,i}(\x^{*}))+\frac{1}{K}\sum_{s=0}^{K-1}(1-\frac{1}{K})^{K-1-s}\E(\langle\bar{\dbf}^{(s)}(q),\x^{*}-\bar{\vbf}^{(s)}(q)\rangle)\\
    & +\frac{\mathrm{diam}(\mathcal{K})}{KN}\sum_{s=0}^{K-1}(1-\frac{1}{K})^{K-1-s}\E(\|\sum_{i=1}^{N}\nabla\bar{f}_{q,s,i}(\bar{\x}^{(s)}(q))-N\bar{\dbf}^{(s)}(q)\|)+\frac{L}{2K^{2}}\sum_{s=0}^{K-1}\E(\|\bar{\vbf}^{(s)}(q)\|^{2}),
    \end{aligned}
\end{equation*} 
where the first equality follows from the $\E(\bar{f}_{q,K-1,i}(\x))|\x)=\bar{f}_{q,i}(\x)$; the first inequality comes from Lemma~\ref{thm1.5}; the second equality follows from the $\E(\bar{f}_{q,K-1,i}(\x))|\x)=\E(\bar{f}_{q,K-2,i}(\x))|\x)$; the final inequality follows from $(1-\frac{1}{K})^{K}\le\frac{1}{e}$ and $\E(\bar{f}_{q,0,i}(\x^{*}))=\E(\bar{f}_{q,i}(\x^{*}))$. 
Finally, we could conclude that 
\begin{equation*}
    \begin{aligned}
    &\sum_{t=1}^{T}\sum_{i=1}^{N}f_{t,i}(\x^{*})-\sum_{t=1}^{T}\sum_{i=1}^{N}\E(f_{t,i}(\bar{\x}(t)))\\
    = & NK(\sum_{q=1}^{Q}\sum_{i=1}^{N}\frac{1}{N}\bar{f}_{q,i}(\x^{*})-\sum_{q=1}^{Q}\sum_{i=1}^{N}\frac{1}{N}\E(\bar{f}_{q,i}(\bar{\x}^{(K)}(q))))\\
    \le & \frac{1}{e}\sum_{t=1}^{T}\sum_{i=1}^{N}f_{t,i}(\x^{*})+N\sum_{q=1}^{Q}\sum_{s=0}^{K-1}(1-\frac{1}{K})^{K-1-s}\E(\langle\bar{\dbf}^{(s)}(q),\x^{*}-\bar{\vbf}^{(s)}(q)\rangle)\\
    & +\mathrm{diam}(\mathcal{K})\sum_{q=1}^{Q}\sum_{s=0}^{K-1}(1-\frac{1}{K})^{K-1-s}\E(\|\sum_{i=1}^{N}\nabla\bar{f}_{q,s,i}(\bar{\x}^{(s)}(q))-N\bar{\dbf}^{(s)}(q)\|)+\frac{LNQr^{2}(\mathcal{K})}{2}.
    \end{aligned}
\end{equation*}
\end{proof}
\begin{lemma}\label{thm1:7}
Under Assumption~\ref{ass1}-\ref{ass5}, we have 
\begin{equation*}
    \sum_{q=1}^{Q}\sum_{s=0}^{K-1}(1-\frac{1}{K})^{K-1-s}\E(\langle\bar{\dbf}^{(s)}(q),\x^{*}-\bar{\vbf}^{(s)}(q)\rangle)\le\frac{\mathrm{diam}(\mathcal{K})KQ\gamma\sqrt{2(\sigma^{2}+G^{2})}}{1-(1-\gamma)\beta}+M_{0}K\sqrt{Q}.
\end{equation*} 
\end{lemma}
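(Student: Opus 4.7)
The plan is to decompose the inner product $\langle\bar{\dbf}^{(s)}(q),\x^{*}-\bar{\vbf}^{(s)}(q)\rangle$ into two contributions: one reflecting the regret of each local linear maximization oracle $\mathcal{E}_{i}^{(s)}$, and one reflecting the consensus error between each local tracker $\dbf_{i}^{(s)}(q)$ and its network average $\bar{\dbf}^{(s)}(q)$. Using $\bar{\vbf}^{(s)}(q)=\frac{1}{N}\sum_{i}\vbf_{i}^{(s)}(q)$, I would first rewrite
\begin{equation*}
\langle\bar{\dbf}^{(s)}(q),\x^{*}-\bar{\vbf}^{(s)}(q)\rangle=\frac{1}{N}\sum_{i=1}^{N}\langle\dbf_{i}^{(s)}(q),\x^{*}-\vbf_{i}^{(s)}(q)\rangle+\frac{1}{N}\sum_{i=1}^{N}\langle\bar{\dbf}^{(s)}(q)-\dbf_{i}^{(s)}(q),\x^{*}-\vbf_{i}^{(s)}(q)\rangle.
\end{equation*}

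For the first (oracle-regret) piece, I would invoke the definition of $\vbf_{i}^{(s)}(q)$ as the play of the online linear optimization oracle $\mathcal{E}_{i}^{(s)}$ against the sequence of payoff vectors $\{\dbf_{i}^{(s)}(q)\}_{q=1}^{Q}$ (each oracle is queried exactly once per block, so it sees $Q$ rounds in total). By \cref{ass5}, for every fixed $i$ and $s$,
\begin{equation*}
\sum_{q=1}^{Q}\langle\dbf_{i}^{(s)}(q),\x^{*}-\vbf_{i}^{(s)}(q)\rangle\le M_{0}\sqrt{Q}.
\end{equation*}
Averaging over $i\in[N]$ and then summing over $s=0,\dots,K-1$, using that the weights $(1-1/K)^{K-1-s}\le 1$, this piece contributes at most $M_{0}K\sqrt{Q}$.

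For the second (consensus-error) piece, I would apply Cauchy--Schwarz to each summand, bounding $\|\x^{*}-\vbf_{i}^{(s)}(q)\|\le\mathrm{diam}(\mathcal{K})$, and then apply Cauchy--Schwarz across $i$ to obtain
\begin{equation*}
\frac{1}{N}\sum_{i=1}^{N}\langle\bar{\dbf}^{(s)}(q)-\dbf_{i}^{(s)}(q),\x^{*}-\vbf_{i}^{(s)}(q)\rangle\le\frac{\mathrm{diam}(\mathcal{K})}{\sqrt{N}}\sqrt{\sum_{i=1}^{N}\|\dbf_{i}^{(s)}(q)-\bar{\dbf}^{(s)}(q)\|^{2}}.
\end{equation*}
Taking expectations and plugging in \cref{thm1.4} bounds each such term by $\mathrm{diam}(\mathcal{K})\,\gamma\sqrt{2(\sigma^{2}+G^{2})}/(1-(1-\gamma)\beta)$. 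Summing over $q\in[Q]$ and $s\in\{0,\dots,K-1\}$ (again absorbing the weights, which are at most $1$) yields the first term in the claimed bound.

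The proof is essentially a clean bookkeeping argument, so no single step should pose real difficulty; the one point that requires care is the oracle interpretation in the first piece: one must recognize that for each pair $(i,s)$, the sequence $\{\dbf_{i}^{(s)}(q)\}_{q=1}^{Q}$ is precisely the adversarial sequence seen by the single oracle $\mathcal{E}_{i}^{(s)}$, so \cref{ass5} applies with horizon $Q$ (not $KQ$). Once this is correctly identified, combining the two bounds yields the stated inequality.
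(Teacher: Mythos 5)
Your proposal is correct and follows essentially the same route as the paper's proof: the identical decomposition of $\langle\bar{\dbf}^{(s)}(q),\x^{*}-\bar{\vbf}^{(s)}(q)\rangle$ into an oracle-regret term controlled by Assumption~\ref{ass5} with horizon $Q$ and a consensus-error term controlled via Cauchy--Schwarz and Lemma~\ref{thm1.4}, with the weights $(1-1/K)^{K-1-s}$ bounded by $1$. No meaningful differences.
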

\begin{proof}
\begin{equation*}
    \begin{aligned}
    &\sum_{q=1}^{Q}\sum_{s=0}^{K-1}(1-\frac{1}{K})^{K-1-s}\E(\langle\bar{\dbf}^{(s)}(q),\x^{*}-\bar{\vbf}^{(s)}(q)\rangle)\\
    = & \frac{1}{N}\sum_{q=1}^{Q}\sum_{s=0}^{K-1}\sum_{i=1}^{N}(1-\frac{1}{K})^{K-1-s}\E(\langle\bar{\dbf}^{(s)}(q),\x^{*}-\vbf_{i}^{(s)}(q)\rangle)\\
    = & \frac{1}{N}\sum_{q=1}^{Q}\sum_{s=0}^{K-1}\sum_{i=1}^{N}(1-\frac{1}{K})^{K-1-s}\E(\langle\bar{\dbf}^{(s)}(q)-\dbf_{i}^{(s)}(q),\x^{*}-\vbf_{i}^{(s)}(q)\rangle+\langle\dbf_{i}^{(s)}(q),\x^{*}-\vbf_{i}^{(s)}(q)\rangle)\\
    \le & \sum_{q=1}^{Q}\sum_{s=0}^{K-1}(1-\frac{1}{K})^{K-1-s}\E(\frac{\mathrm{diam}(\mathcal{K})}{N}\sum_{i=1}^{N}\|\bar{\dbf}^{(s)}(q)-\dbf_{i}^{(s)}(q)\|+\frac{1}{N}\sum_{i=1}^{N}\langle\dbf_{i}^{(s)}(q),\x^{*}-\vbf_{i}^{(s)}(q)\rangle)\\
    \le & \sum_{q=1}^{Q}\sum_{s=0}^{K-1}(1-\frac{1}{K})^{K-1-s}\frac{\mathrm{diam}(\mathcal{K})}{N}\sqrt{N}\E(\sqrt{\sum_{i=1}^{N}\|\bar{\dbf}^{(s)}(q)-\dbf_{i}^{(s)}(q)\|^{2}})\\
    & +\sum_{i=1}^{N}\sum_{s=0}^{K-1}(1-\frac{1}{K})^{K-1-s}\frac{1}{N}\sum_{q=1}^{Q}\E(\langle\dbf_{i}^{(s)}(q),\x^{*}-\vbf_{i}^{(s)}(q)\rangle)\\
    \le & \sum_{q=1}^{Q}\sum_{s=0}^{K-1}(1-\frac{1}{K})^{K-1-s}\frac{\mathrm{diam}(\mathcal{K})}{N}\frac{N\gamma\sqrt{2(\sigma^{2}+G^{2})}}{1-(1-\gamma)\beta}+\sum_{i=1}^{N}\sum_{s=0}^{K-1}(1-\frac{1}{K})^{K-1-s}\frac{1}{N}M_{0}\sqrt{Q}\\
    \le & \frac{\mathrm{diam}(\mathcal{K})KQ\gamma\sqrt{2(\sigma^{2}+G^{2})}}{1-(1-\gamma)\beta}+M_{0}K\sqrt{Q},
    \end{aligned}
\end{equation*} 
where the third inequality follow from Lemma~\ref{thm1.4} and Assumption~\ref{ass5}.
\end{proof}

Next, we derive the upper bound of $\|\sum_{i=1}^{N}\nabla\bar{f}_{q,s,i}(\bar{\x}^{(s)}(q))-N\bar{\dbf}^{(s)}(q)\|$. Before that, we investigate the gap between $\nabla\bar{f}_{q,k,i}(\bar{\x}_{i}^{(k)}(q))$ and $\nabla\bar{f}_{i,q,k-1}(\bar{\x}_{i}^{(k-1)}(q))$ for any $0\le k\le K-1$.
\begin{lemma}\label{thm1:8}
Under Assumption~\ref{ass3}, we have
\begin{equation*}
    \|\nabla\bar{f}_{q,k,i}(\bar{\x}_{i}^{(k)}(q))-\nabla\bar{f}_{q,k-1,i}(\bar{\x}_{i}^{(k-1)}(q))\|\le\frac{2G+Lr(\mathcal{K})}{K-k+1},
\end{equation*} where $0\le k\le K-1$.
\end{lemma}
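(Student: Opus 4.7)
The plan is to exploit the fact that $\bar{f}_{q,k,i}$ and $\bar{f}_{q,k-1,i}$ differ only by a single summand (the function indexed by $t_i^{(k)}(q)$) and that $\bar{\x}^{(k)}(q)$ and $\bar{\x}^{(k-1)}(q)$ differ only by the one-step averaging update bounded in Lemma~\ref{thm1.1}. So I will decouple the ``function change'' from the ``point change'' via a single add-and-subtract, then handle each piece by an elementary bound.

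Concretely, I would first write the key identity
\begin{equation*}
(K-k+1)\,\bar{f}_{q,k-1,i}=(K-k)\,\bar{f}_{q,k,i}+f_{t_i^{(k)}(q),i},
\end{equation*}
which, after differentiating, yields
\begin{equation*}
\nabla\bar{f}_{q,k-1,i}=\tfrac{K-k}{K-k+1}\nabla\bar{f}_{q,k,i}+\tfrac{1}{K-k+1}\nabla f_{t_i^{(k)}(q),i}.
\end{equation*}
Then I would insert $\pm\nabla\bar{f}_{q,k-1,i}(\bar{\x}^{(k)}(q))$ to split
\begin{equation*}
\nabla\bar{f}_{q,k,i}(\bar{\x}^{(k)}(q))-\nabla\bar{f}_{q,k-1,i}(\bar{\x}^{(k-1)}(q))=A+B,
\end{equation*}
where $A=\nabla\bar{f}_{q,k,i}(\bar{\x}^{(k)}(q))-\nabla\bar{f}_{q,k-1,i}(\bar{\x}^{(k)}(q))$ isolates the function change at a common point, and $B=\nabla\bar{f}_{q,k-1,i}(\bar{\x}^{(k)}(q))-\nabla\bar{f}_{q,k-1,i}(\bar{\x}^{(k-1)}(q))$ isolates the point change under a fixed function.

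For $A$, plugging in the identity above collapses it to $\tfrac{1}{K-k+1}[\nabla\bar{f}_{q,k,i}(\bar{\x}^{(k)}(q))-\nabla f_{t_i^{(k)}(q),i}(\bar{\x}^{(k)}(q))]$, so the assumption $\|\nabla f_{t,i}(\x)\|\le G$ (combined with the triangle inequality on the averaged gradient) gives $\|A\|\le\tfrac{2G}{K-k+1}$. For $B$, $\bar{f}_{q,k-1,i}$ inherits $L$-smoothness from each $f_{t,i}$ (since averages of $L$-smooth functions are $L$-smooth), so $\|B\|\le L\|\bar{\x}^{(k)}(q)-\bar{\x}^{(k-1)}(q)\|\le\tfrac{Lr(\mathcal{K})}{K}$ by Lemma~\ref{thm1.1}. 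Finally, since $k\ge 1$ we have $K-k+1\le K$ and hence $\tfrac{Lr(\mathcal{K})}{K}\le\tfrac{Lr(\mathcal{K})}{K-k+1}$, which combines with the bound on $\|A\|$ to give the desired $\tfrac{2G+Lr(\mathcal{K})}{K-k+1}$.

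There is no real obstacle here beyond bookkeeping; the only subtlety is the weight $\tfrac{K-k}{K-k+1}$ in the gradient identity, which must be tracked carefully so that the $2G$ numerator emerges cleanly. Once that identity is in place, the rest is a one-line application of smoothness plus Lemma~\ref{thm1.1}.
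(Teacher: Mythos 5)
Your proof is correct and takes essentially the same approach as the paper: both arguments decouple the change of averaging window from the change of evaluation point, bound the former by $\tfrac{2G}{K-k+1}$ using $\|\nabla f_{t,i}(\x)\|\le G$, and bound the latter by $L$-smoothness together with Lemma~\ref{thm1.1}. The only difference is cosmetic bookkeeping --- the paper evaluates the dropped summand $\nabla f_{t_i^{(k)}(q),i}$ at $\bar{\x}^{(k-1)}(q)$ and excludes it from the smoothness term, while you evaluate it at $\bar{\x}^{(k)}(q)$ and fold it into the smoothness term --- and both routes yield the identical final bound.
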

\begin{proof}
\begin{equation*}
    \begin{aligned}
       & \nabla\bar{f}_{q,k,i}(\bar{\x}_{i}^{(k)}(q))-\nabla\bar{f}_{q,k-1,i}(\bar{\x}_{i}^{(k-1)}(q))\\
       = & \frac{\sum_{m=k+1}^{K}\nabla f_{t_{i}^{(m)}(q),i}(\bar{\x}_{i}^{(k)}(q))}{K-k}-\frac{\sum_{m=k}^{K}\nabla f_{t_{i}^{(m)}(q),i}(\bar{\x}_{i}^{(k-1)}(q))}{K-k+1}\\
      = & \frac{\sum_{m=k+1}^{K}\nabla f_{t_{i}^{(m)}(q),i}(\bar{\x}_{i}^{(k)}(q))}{(K-k)(K-k+1)}+\frac{\sum_{m=k+1}^{K}\nabla f_{t_{i}^{(m)}(q),i}(\bar{\x}_{i}^{(k)}(q)) -\nabla f_{t_{i}^{(m)}(q),i}(\bar{\x}_{i}^{(k-1)}(q))}{K-k+1}\\
      & -\frac{\nabla f_{t_{i}^{(k)}(q),i}(\bar{\x}_{i}^{(k-1)}(q))}{K-k+1}.
    \end{aligned}
\end{equation*}
Therefore,
\begin{equation*}
    \begin{aligned}
       & \|\nabla\bar{f}_{q,k,i}(\bar{\x}_{i}^{(k)}(q))-\nabla\bar{f}_{q,k-1,i}(\bar{\x}_{i}^{(k-1)}(q))\|\\
       \le & \|\frac{\sum_{m=k+1}^{K}\nabla f_{t_{i}^{(m)}(q),i}(\bar{\x}_{i}^{(k)}(q))-\nabla f_{t_{i}^{(m)}(q),i}(\bar{\x}_{i}^{(k-1)}(q))}{K-k+1}\|\\
       & + \|\frac{\sum_{m=k+1}^{K}\nabla f_{t_{i}^{(m)}(q),i}(\bar{\x}_{i}^{(k)}(q))}{(K-k)(K-k+1)}\|+\|\frac{\nabla f_{t_{i}^{(k)}(q),i}(\bar{\x}_{i}^{(k-1)}(q))}{K-k+1}\|\\
       \le & \frac{(K-k)L\|\bar{\x}_{i}^{(k-1)}(q)-\bar{\x}_{i}^{(k)}(q)\|}{K-k+1}+\frac{(K-k)G}{(K-k)(K-k+1)}+\frac{G}{K-k+1}\\
       \le & \frac{(K-k)Lr(\mathcal{K})}{K(K-k+1)}+\frac{2G}{K-k+1}\\
       \le & \frac{2G+Lr(\mathcal{K})}{K-k+1},
       \end{aligned} 
\end{equation*}
where the  third inequality follows from $\|\bar{\x}_{i}^{(k-1)}(q)-\bar{\x}_{i}^{(k)}(q)\|\le\frac{r(\mathcal{K})}{K}$~(Lemma~\ref{thm1.1}).
\end{proof}

\begin{lemma}\label{thm1:9}
Under Assumption~\ref{ass1}-\ref{ass5}, we could conclude that
\begin{equation}\label{thm1:9.equ}
    \begin{aligned}
    & \|\sum_{i=1}^{N}\nabla\bar{f}_{q,s,i}(\bar{\x}^{(s)}(q))-N\bar{\dbf}^{(s)}(q)\|\\
    \le & (1-\gamma)^{s}NG+\sum_{m=1}^{s}(1-\gamma)^{s-m}\frac{N(2G+Lr(\mathcal{K}))}{K-m+1}\\
    & +\gamma\sum_{m=1}^{s}(1-\gamma)^{s-m}\|\sum_{i=1}^{N}\g_{i}^{(m)}(q)-\sum_{i=1}^{N}\nabla\bar{f}_{q,m-1,i}(\x_{i}^{(m)}(q))\|+\frac{LNr(\mathcal{K})}{K(1-\beta)}.
    \end{aligned}
\end{equation}
\end{lemma}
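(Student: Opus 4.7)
\textbf{Proof plan for Lemma~\ref{thm1:9}.} The strategy is to derive a one-step recursion for the quantity
\begin{equation*}
\Phi_{s}:=\Bigl\|\sum_{i=1}^{N}\nabla\bar{f}_{q,s,i}(\bar{\x}^{(s)}(q))-N\bar{\dbf}^{(s)}(q)\Bigr\|
\end{equation*}
and then unroll it. The key preliminary observation is that summing the update of $\dbf_{i}^{(s)}(q)$ in line 19 of Algorithm~\ref{alg:1} over $i$ and exploiting the double stochasticity of $\A$ (Assumption~\ref{ass1}) produces the clean averaged recursion $N\bar{\dbf}^{(s)}(q)=(1-\gamma)N\bar{\dbf}^{(s-1)}(q)+\gamma\sum_{i}\g_{i}^{(s)}(q)$, which eliminates all cross-node interactions from the dynamics of $\bar{\dbf}^{(s)}$.

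The main decomposition inserts $\pm(1-\gamma)\sum_{i}\nabla\bar{f}_{q,s-1,i}(\bar{\x}^{(s-1)}(q))$ and $\pm\gamma\sum_{i}\nabla\bar{f}_{q,s-1,i}(\x_{i}^{(s)}(q))$ inside the norm defining $\Phi_{s}$ and then applies the triangle inequality together with Assumption~\ref{ass3} to obtain
\begin{align*}
\Phi_{s}\le\;& (1-\gamma)\Phi_{s-1}+\Bigl\|\sum_{i}\bigl[\nabla\bar{f}_{q,s,i}(\bar{\x}^{(s)}(q))-\nabla\bar{f}_{q,s-1,i}(\bar{\x}^{(s-1)}(q))\bigr]\Bigr\|\\
&+\gamma L\sum_{i}\|\bar{\x}^{(s-1)}(q)-\x_{i}^{(s)}(q)\|+\gamma\Bigl\|\sum_{i}\bigl[\nabla\bar{f}_{q,s-1,i}(\x_{i}^{(s)}(q))-\g_{i}^{(s)}(q)\bigr]\Bigr\|.
\end{align*}
Lemma~\ref{thm1:8} controls the second (time-shift) term by $N(2G+Lr(\K))/(K-s+1)$; the third term is the spatial consensus error bounded via $L$-smoothness; and the fourth term is the stochastic-gradient noise that survives verbatim to the right-hand side of the lemma. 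Iterating this one-step inequality from the base case $\Phi_{0}\le NG$ (which follows from $\dbf_{i}^{(0)}=0$ together with $\|\nabla\bar{f}_{q,0,i}\|\le G$) produces exactly the four groups of terms on the right-hand side of the claim.

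For the consensus term I would use line 9 of the algorithm together with $\sum_{j}a_{ij}=1$ to rewrite $\bar{\x}^{(s-1)}(q)-\x_{i}^{(s)}(q)=\sum_{j}a_{ij}\bigl(\bar{\x}^{(s-1)}(q)-\x_{j}^{(s-1)}(q)\bigr)-\vbf_{i}^{(s)}(q)/K$; summing over $i$, invoking the doubly-stochastic property together with Cauchy--Schwarz and Lemma~\ref{thm1.2}, and using $\|\vbf_{i}^{(s)}\|\le r(\K)$ yields a uniform bound of order $Nr(\K)/(K(1-\beta))$ on $\sum_{i}\|\bar{\x}^{(s-1)}(q)-\x_{i}^{(s)}(q)\|$. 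After unrolling, the telescoping estimate $\gamma\sum_{m=1}^{s}(1-\gamma)^{s-m}\le 1$ cancels the $\gamma$ prefactor, collapsing this contribution to the single constant $LNr(\K)/(K(1-\beta))$. The genuinely delicate step is precisely this consensus piece: since $\bar{\x}^{(s-1)}(q)$ and $\x_{i}^{(s)}(q)$ live at different time indices and at different nodes, neither Lemma~\ref{thm1.1} nor Lemma~\ref{thm1.2} applies directly, and the trick of pulling $\x_{i}^{(s)}$ back one consensus step to the $\x_{j}^{(s-1)}$'s, only valid because $\A\one=\one$, is what prevents an extra $1/\gamma$ blow-up from coupling with the geometric series.
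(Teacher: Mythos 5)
Your proposal is correct and follows essentially the same route as the paper: sum the $\dbf$-update over nodes using double stochasticity to get the averaged recursion, insert the same two intermediate gradient terms, bound the time-shift piece with Lemma~\ref{thm1:8}, the consensus piece with $L$-smoothness and Lemma~\ref{thm1.2}, and unroll from $\Phi_{0}\le NG$. The only difference is that you anchor the consensus term at $\bar{\x}^{(s-1)}(q)$ instead of $\bar{\x}^{(s)}(q)$ (the paper splits so that Lemma~\ref{thm1.2} applies directly at step $s$), which costs you the extra pull-back step through $\A$ and an additional $O(Nr(\K)/K)$ contribution that perturbs the final constant by at most a factor of two without affecting the regret order.
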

\begin{proof}
\begin{equation*}
    \begin{aligned}
    & \|\sum_{i=1}^{N}\nabla\bar{f}_{q,s,i}(\bar{\x}^{(s)}(q))-N\bar{\dbf}^{(s)}(q)\|=\|\sum_{i=1}^{N}\nabla\bar{f}_{q,s,i}(\bar{\x}^{(s)}(q))-\sum_{i=1}^{N}\dbf_{i}^{(s)}(q)\|\\
    = & \|\sum_{i=1}^{N}\nabla\bar{f}_{q,s,i}(\bar{\x}^{(s)}(q))-(1-\gamma)\sum_{i=1}^{N}\dbf_{i}^{(s-1)}(q)-\gamma\sum_{i=1}^{N}\g_{i}^{(s)}(q)\|\\
    = & \|(1-\gamma)(\sum_{i=1}^{N}\dbf_{i}^{(s-1)}(q)-\sum_{i=1}^{N}\nabla\bar{f}_{q,s-1,i}(\bar{\x}^{(s-1)}(q)))\\
    & +(\sum_{i=1}^{N}\nabla\bar{f}_{q,s-1,i}(\bar{\x}^{(s-1)}(q))-\sum_{i=1}^{N}\nabla\bar{f}_{q,s,i}(\bar{\x}^{(s)}(q)))\\
    & +\gamma(\sum_{i=1}^{N}\g_{i}^{(s)}(q)
    -\sum_{i=1}^{N}\nabla\bar{f}_{q,s-1,i}(\x_{i}^{(s)}(q)))\\
    & +\gamma(\sum_{i=1}^{N}\nabla\bar{f}_{q,s-1,i}(\x_{i}^{(s)}(q))-\sum_{i=1}^{N}\nabla\bar{f}_{q,s-1,i}(\bar{\x}^{(s)}(q)))\|\\
    \le & (1-\gamma)\|\sum_{i=1}^{N}\dbf_{i}^{(s-1)}(q)-\sum_{i=1}^{N}\nabla\bar{f}_{q,s-1,i}(\bar{\x}^{(s-1)}(q))\|+\frac{N(2G+Lr(\mathcal{K}))}{K-s+1}\\
    & +\gamma\|\sum_{i=1}^{N}\g_{i}^{(s)}(q)
    -\sum_{i=1}^{N}\nabla\bar{f}_{q,s-1,i}(\x_{i}^{(s)}(q))\|+L\gamma\sum_{i=1}^{N}\|\x_{i}^{(s)}(q)-\bar{\x}^{(s)}(q)\|\\
    \le & (1-\gamma)\|\sum_{i=1}^{N}\dbf_{i}^{(s-1)}(q)-\sum_{i=1}^{N}\nabla\bar{f}_{q,s-1,i}(\bar{\x}^{(s-1)}(q))\|+\frac{N(2G+Lr(\mathcal{K}))}{K-s+1}\\
    & +\gamma\|\sum_{i=1}^{N}\g_{i}^{(s)}(q)
    -\sum_{i=1}^{N}\nabla\bar{f}_{q,s-1,i}(\x_{i}^{(s)}(q))\|+L\sqrt{N}\gamma\sqrt{\sum_{i=1}^{N}\|\x_{i}^{(s)}(q)-\bar{\x}^{(s)}(q)\|^{2}}\\
    \le & (1-\gamma)\|\sum_{i=1}^{N}\dbf_{i}^{(s-1)}(q)-\sum_{i=1}^{N}\nabla\bar{f}_{q,s-1,i}(\bar{\x}^{(s-1)}(q))\|+\frac{N(2G+Lr(\mathcal{K}))}{K-s+1}\\
    & +\gamma\|\sum_{i=1}^{N}\g_{i}^{(s)}(q)
    -\sum_{i=1}^{N}\nabla\bar{f}_{q,s-1,i}(\x_{i}^{(s)}(q))\|+\frac{LNr(\mathcal{K})\gamma}{K(1-\beta)},
    \end{aligned}
\end{equation*} where the first inequality follows from Lemma~\ref{thm1:8} and $L$-smoothness of each $\bar{f}_{q,s-1,i}$; the second inequality comes from the Cauchy–Schwarz inequality; the final inequality comes from Lemma~\ref{thm1.2}.

If we set the $\triangle_{s}=\|\sum_{i=1}^{N}\nabla\bar{f}_{q,s,i}(\bar{\x}^{(s)}(q))-N\bar{\dbf}^{(s)}(q)\|$, we have $\triangle_{s}\le\triangle_{s-1}+\gamma\|\sum_{i=1}^{N}\g_{i}^{(s)}(q)-\sum_{i=1}^{N}\nabla\bar{f}_{q,s-1,i}(\x_{i}^{(s)}(q))\|+\frac{LNr(\mathcal{K})\gamma}{K(1-\beta)}+\frac{N(2G+Lr(\mathcal{K}))}{K-s+1}$. By iteration, we have
\begin{equation*}
    \begin{aligned}
    \triangle_{s}\le & (1-\gamma)^{s}NG+\sum_{m=1}^{s}(1-\gamma)^{s-m}\frac{N(2G+Lr(\mathcal{K}))}{K-m+1}\\
    & +\gamma\sum_{m=1}^{s}(1-\gamma)^{s-m}\|\sum_{i=1}^{N}\g_{i}^{(m)}(q)-\sum_{i=1}^{N}\nabla\bar{f}_{q,m-1,i}(\x_{i}^{(m)}(q))\|+\frac{LNr(\mathcal{K})}{K(1-\beta)}.
    \end{aligned}
\end{equation*}
\end{proof}

From Equation~(\ref{thm1:9.equ}), we know that the upper bound of $\|\sum_{i=1}^{N}\nabla\bar{f}_{q,s,i}(\bar{\x}^{(s)}(q))-N\bar{\dbf}^{(s)}(q)\|$ is related with the value of $\|\sum_{i=1}^{N}\g_{i}^{(m)}(q)-\sum_{i=1}^{N}\nabla\bar{f}_{q,m-1,i}(\x_{i}^{(m)}(q))\|$ for any $m\le s$. Next, we derive how each $\g_{i}^{(k)}(q)$ approximate the gradient $\nabla\bar{f}_{q,m-1,i}(\x_{i}^{(k)}(q)$. First, we recall the variance reduction result in \citet{zhang2019online}.
\begin{lemma}[\citet{zhang2019online}]\label{lemma:vr2}
Let $\{a_{k}\}_{k=0}^{K}$ be a sequence of points in $\R^{n}$ such that $\|\a_{k}-\a_{k-1}\|\le\frac{D}{K+2-k}$ for all $1\le k\le K$ with fixed constant $D\ge0$. Let $\{\widetilde{\a}_{k}\}_{k=0}^{K}$ be a sequence of random variables such that $\E(\widetilde{\a}_{k}|\F_{k-1})=\a_{k}$ and $\E(\|\widetilde{\a}_{k}-\a_{k}\|^{2}|\F_{k-1})\le\sigma_{1}^{2}$ for every $k\ge0$, where $\F_{k-1}$ is the $\sigma$-field generated by $\{\widetilde{\a}_{m}\}_{m=0}^{k-1}$ and $\F_{0}=\emptyset$. Let $\{\mathbf{b}_{k}\}_{k=0}^{K}$ be a sequence of random variables where $\mathbf{b}_{0}$ is fixed and subsequent $\mathbf{b}_{k}$ are obtained by $\mathbf{b}_{k}=(1-m_{k})\mathbf{b}_{k-1}+m_{k}\widetilde{a}_{k}$. If we set $m_{k}=\frac{2}{(k+3)^{2/3}}$, when $1\le k\le\frac{K}{2}+1$, and when $\frac{K}{2}+2\le k\le K$, $m_{k}=\frac{1.5}{(K-k+2)^{2/3}}$, we have
\begin{equation*}
    \E(\|\mathbf{b}_{k}-\a_{k}\|^{2})\le\left\{\begin{aligned}
       &\frac{C}{(k+4)^{2/3}}& 1\le k\le\frac{K}{2}+1\\
       &\frac{C}{(K-k+1)^{2/3}}& \frac{K}{2}+2\le k\le K
    \end{aligned}\right.
\end{equation*} where $C=\max\{5^{2/3}\|\a_{0}-\mathbf{b}_{0}\|^{2},4\sigma_{1}^{2}+32D^{2},2.25\sigma_{1}^{2}+7D^{2}/3\}$.
\end{lemma}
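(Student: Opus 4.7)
The plan is to work with the error $e_k := \E\|\mathbf{b}_k - \a_k\|^2$ and derive a one-step recursion from the update rule, then prove the claimed bound by induction separately on the two regimes of $k$. First I would decompose
\[
\mathbf{b}_k - \a_k = (1-m_k)(\mathbf{b}_{k-1} - \a_{k-1}) + (1-m_k)(\a_{k-1} - \a_k) + m_k(\widetilde{\a}_k - \a_k).
\]
Taking the conditional expectation of the squared norm given $\F_{k-1}$, the cross term involving $\widetilde{\a}_k - \a_k$ vanishes because $\E[\widetilde{\a}_k \mid \F_{k-1}] = \a_k$. Expanding the surviving deterministic piece via Young's inequality with a parameter tuned so that the coefficient in front of $e_{k-1}$ becomes $(1-m_k)$ rather than $(1-m_k)^2$, and then using the two hypotheses $\|\a_k - \a_{k-1}\| \le D/(K+2-k)$ and $\E\|\widetilde{\a}_k - \a_k\|^2 \le \sigma_1^2$, I arrive at a recursion of the form
\[
e_k \;\le\; (1-m_k)\, e_{k-1} \;+\; \frac{D^2}{m_k\, (K+2-k)^2} \;+\; m_k^2\, \sigma_1^2.
\]

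I would then prove the first-phase bound $e_k \le C/(k+4)^{2/3}$ by induction on $k$ for $1 \le k \le K/2+1$. Substituting $m_k = 2/(k+3)^{2/3}$ and the inductive hypothesis, the dominant term becomes $(1 - 2/(k+3)^{2/3})\,C/(k+3)^{2/3}$. The elementary Taylor comparison $(k+4)^{2/3} - (k+3)^{2/3} \le \tfrac{2}{3}(k+3)^{-1/3}$ shows that this contraction dominates $C/(k+4)^{2/3}$ with a surplus of order $C/(k+3)^{4/3}$. The remaining noise and bias terms, namely $m_k^2 \sigma_1^2 = 4\sigma_1^2/(k+3)^{4/3}$ and $D^2(k+3)^{2/3}/[2(K+2-k)^2]$ (which in the first phase is at most a constant multiple of $D^2/(k+3)^{4/3}$ because $k \le K/2+1$ forces $K+2-k$ to be comparable to $k+3$ in order), are then absorbed into the surplus provided $C \ge 4\sigma_1^2 + 32D^2$. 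The base case at $k=1$ is handled by the requirement $C \ge 5^{2/3}\|\a_0 - \mathbf{b}_0\|^2$, which gives $e_1 \le C/5^{2/3}$ directly.

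For the second phase $K/2+2 \le k \le K$, I would run an analogous induction with the ansatz $e_k \le C/(K-k+1)^{2/3}$ and step size $m_k = 1.5/(K-k+2)^{2/3}$. The transition at the midpoint is seamless because $(k+4)^{2/3}$ and $(K-k+1)^{2/3}$ are comparable there. The inductive step uses a first-order Taylor comparison, now of the form $(K-k+2)^{2/3} - (K-k+1)^{2/3} \ge \tfrac{2}{3}(K-k+2)^{-1/3}$, to show that $(1 - 1.5/(K-k+2)^{2/3})\,C/(K-k+2)^{2/3}$ leaves enough surplus to swallow $m_k^2 \sigma_1^2 = 2.25\sigma_1^2/(K-k+2)^{4/3}$ and the bias term, which now scales like $D^2/[1.5(K-k+2)^{2/3}(K+2-k)^2]$. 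The constant $2.25\sigma_1^2 + 7D^2/3$ is precisely the threshold that makes this final inequality close.

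The main obstacle is purely algebraic bookkeeping: choosing the Young's parameter consistently across the two phases so that the same recursion drives both inductions, and checking that the bias contribution $D^2/[m_k(K+2-k)^2]$, which is most dangerous near $k = K$ where the ansatz $1/(K-k+1)^{2/3}$ also blows up, matches in order with the ansatz at every $k$. The reason the step size is enlarged from $2t^{-2/3}$ (indexed forward) to $1.5 t^{-2/3}$ (indexed backward) at the midpoint is to strike this bias-variance balance, and verifying the three candidate values defining $C$ precisely close the base case and both inductive steps is the bulk of the work; once that is done, no further ingredients are needed.
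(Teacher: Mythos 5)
The paper does not prove this lemma at all: it is imported verbatim from \citet{zhang2019online} and used as a black box in Lemma~\ref{thm1:10}, so there is no in-paper proof to compare against. Your blind reconstruction follows the standard route for such variance-reduction lemmas (the three-term decomposition, killing the cross term by the martingale property, Young's inequality tuned so the contraction factor is $(1-m_k)$ rather than $(1-m_k)^2$, yielding $e_k\le(1-m_k)e_{k-1}+D^2/\bigl(m_k(K+2-k)^2\bigr)+m_k^2\sigma_1^2$, then a two-phase induction), and this is essentially the argument in the cited source; the orders and the three candidate constants in $C$ are all consistent with your accounting. Two soft spots worth tightening if you write this out in full. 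First, the base case is not as immediate as you state: $e_1$ is not $\|\a_0-\mathbf{b}_0\|^2$; you must run the recursion once from $e_0=\|\a_0-\mathbf{b}_0\|^2$ and check that $(1-m_1)e_0$ plus the bias and noise terms is below $C/5^{2/3}$, which is exactly why the first candidate in $C$ carries the factor $5^{2/3}$ rather than $4^{2/3}$. Second, in the backward phase the inductive target $C/(K-k+1)^{2/3}$ is \emph{larger} than the quantity $C/(K-k+2)^{2/3}$ fed in from the inductive hypothesis, so no Taylor comparison is needed there at all --- one only needs $1.5\,C\ge 2.25\sigma_1^2+2D^2/3$; the inequality $(K-k+2)^{2/3}-(K-k+1)^{2/3}\ge\tfrac23(K-k+2)^{-1/3}$ you invoke is in the wrong direction (the mean-value bound gives $\le$ with that right-hand side evaluated at the smaller point) but is harmless since it is not actually used. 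You should also make explicit the handover at $k=K/2+2$, i.e.\ that $C/(K/2+5)^{2/3}\le C/(K-k+2)^{2/3}$ there, which you only gesture at.
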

\begin{lemma}\label{thm1:10}
Under Assumption~\ref{ass3}-\ref{ass4}, if we set $\eta_{t}=\frac{2}{(t+3)^{2/3}}$ when $1\le t\le\frac{K}{2}+1$, and when $\frac{K}{2}+2\le t\le K$, $\eta_{t}=\frac{1.5}{(K-t+2)^{2/3}}$, we could conclude that
\begin{equation*}
    \begin{aligned}
    \E(\|\g_{i}^{(k)}(q)-\nabla\bar{f}_{q,k-1,i}(\x_{i}^{(k)}(q))\|^{2})\le\left\{\begin{aligned}
       &\frac{C_{1}}{(k+4)^{2/3}}& 1\le k\le\frac{K}{2}+1\\
       &\frac{C_{1}}{(K-k+1)^{2/3}}& \frac{K}{2}+2\le k\le K
    \end{aligned}\right.
    \end{aligned}
\end{equation*}where $C_{1}=\max\{5^{2/3}G^{2},4(G^{2}+\sigma^{2})+32(2G+Lr(\mathcal{K}))^{2},2.25(G^{2}+\sigma^{2})+7(2G+Lr(\mathcal{K}))^{2}/3\}$.
\end{lemma}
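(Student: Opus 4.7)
The plan is to apply the variance-reduction bound of Lemma~\ref{lemma:vr2} directly to the recursion
$\g_i^{(k)}(q) = (1-\eta_k)\g_i^{(k-1)}(q) + \eta_k\widetilde{\nabla}f_{t_i^{(k)}(q),i}(\x_i^{(k)}(q))$ on line~18 of Algorithm~\ref{alg:1}. Fix a node $i$ and block $q$ and identify $\mathbf{b}_k:=\g_i^{(k)}(q)$, $\widetilde{\a}_k:=\widetilde{\nabla}f_{t_i^{(k)}(q),i}(\x_i^{(k)}(q))$, $\a_k:=\nabla\bar{f}_{q,k-1,i}(\x_i^{(k)}(q))$ for $k\ge 1$, and $m_k:=\eta_k$. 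Let $\F_{k-1}$ be the $\sigma$-field generated by the slots $t_i^{(1)}(q),\dots,t_i^{(k-1)}(q)$ and the stochastic-oracle noise used at those rounds. Since $\x_i^{(k)}(q)$ is produced in the first inner loop before any stochastic gradient is queried, it is $\F_{k-1}$-measurable, which is all we need to match the set-up of Lemma~\ref{lemma:vr2}.

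First I would check the two stochastic hypotheses. For unbiasedness, conditioning on $\F_{k-1}$ the slot $t_i^{(k)}(q)$ is uniform on the $K-k+1$ remaining indices, so $\E[\nabla f_{t_i^{(k)}(q),i}(\x_i^{(k)}(q))\mid\F_{k-1}] = \nabla\bar{f}_{q,k-1,i}(\x_i^{(k)}(q))=\a_k$; composing with the oracle unbiasedness from Assumption~\ref{ass4} gives $\E[\widetilde{\a}_k\mid\F_{k-1}]=\a_k$. For the variance bound I would decompose
\[
\widetilde{\a}_k-\a_k=\bigl(\widetilde{\nabla}f_{t_i^{(k)}(q),i}(\x_i^{(k)}(q))-\nabla f_{t_i^{(k)}(q),i}(\x_i^{(k)}(q))\bigr)+\bigl(\nabla f_{t_i^{(k)}(q),i}(\x_i^{(k)}(q))-\nabla\bar{f}_{q,k-1,i}(\x_i^{(k)}(q))\bigr).
\]
Taking the conditional expectation over the oracle noise first kills the cross term, so Assumption~\ref{ass4} bounds the first summand by $\sigma^2$ while $\mathrm{Var}(\nabla f_{t_i^{(k)}(q),i}(\x)\mid\F_{k-1})\le\E[\|\nabla f_{t_i^{(k)}(q),i}(\x)\|^2\mid\F_{k-1}]\le G^2$ bounds the second; jointly this yields $\sigma_1^2=\sigma^2+G^2$.

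Next I would verify the deterministic path-length condition $\|\a_k-\a_{k-1}\|\le D/(K+2-k)$ with $D=2G+Lr(\K)$. For $k\ge 2$ this is exactly the content of Lemma~\ref{thm1:8} after an index shift by one, which gives $\|\nabla\bar{f}_{q,k-1,i}(\cdot)-\nabla\bar{f}_{q,k-2,i}(\cdot)\|\le(2G+Lr(\K))/(K-k+2)=D/(K+2-k)$. To cover the initial step, I set $\a_0:=\a_1$, making the $k=1$ increment trivially zero and simultaneously yielding the initialization bound $\|\a_0-\mathbf{b}_0\|^2=\|\nabla\bar{f}_{q,0,i}(\x_i^{(1)}(q))\|^2\le G^2$, since $\g_i^{(0)}(q)=\mathbf{0}$ and each $\|\nabla f_{t,i}\|\le G$ transfers through the average. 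Plugging $D$, $\sigma_1^2$, and this initial gap into Lemma~\ref{lemma:vr2} with the chosen step-size schedule $m_k=\eta_k$ reproduces the piecewise $(k+4)^{-2/3}$ and $(K-k+1)^{-2/3}$ bound with $C_1=\max\{5^{2/3}G^2,\,4(\sigma^2+G^2)+32(2G+Lr(\K))^2,\,2.25(\sigma^2+G^2)+7(2G+Lr(\K))^2/3\}$, as stated.

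The step I expect to be the main obstacle is the path-length check, because Lemma~\ref{thm1:8} is proven via Lemma~\ref{thm1.1}, which bounds only the network-average iterate $\|\bar{\x}^{(k)}(q)-\bar{\x}^{(k-1)}(q)\|\le r(\K)/K$, whereas the present lemma is stated at the individual iterate $\x_i^{(k)}(q)$. Reconciling this either requires reading Lemma~\ref{thm1:8} as applying pointwise along any bounded sequence of iterates, or augmenting its triangle-inequality argument by the consensus estimate of Lemma~\ref{thm1.2} to control $\|\x_i^{(k)}(q)-\bar{\x}^{(k)}(q)\|$ and absorb an $O(1/(K(1-\beta)))$ Lipschitz correction into $D$; once that is settled, the rest is a direct specialization of the known variance-reduction recursion.
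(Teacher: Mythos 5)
Your proposal follows essentially the same route as the paper's proof: identify $\mathbf{b}_k=\g_i^{(k)}(q)$, $\widetilde{\a}_k=\widetilde{\nabla}f_{t_i^{(k)}(q),i}(\x_i^{(k)}(q))$, $\a_k=\nabla\bar{f}_{q,k-1,i}(\x_i^{(k)}(q))$, verify unbiasedness from the random permutation, take $\sigma_1^2=G^2+\sigma^2$, use Lemma~\ref{thm1:8} for the path-length condition with $D=2G+Lr(\K)$ and $\|\a_0-\mathbf{b}_0\|\le G$, and invoke Lemma~\ref{lemma:vr2}. The obstacle you flag at the end is real but is also silently present in the paper's own proof, which applies Lemma~\ref{thm1:8} directly at the individual iterates $\x_i^{(k)}(q)$ even though that lemma's increment bound is derived from the averaged-iterate estimate of Lemma~\ref{thm1.1}; your suggested repair via the consensus bound of Lemma~\ref{thm1.2} (absorbing an $O\bigl(\sqrt{N}/(K(1-\beta))\bigr)$ correction into $D$) is the right fix and is, if anything, more careful than the published argument.
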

\begin{proof}
If, in Lemma~\ref{lemma:vr2}, we set $\widetilde{\a}_{k}=\widetilde{\nabla}f_{t_{i}^{(k)}(q),i}(\x_{i}^{(k)}(q))$ for fixed $q\in[Q]$ and $i\in[N]$, then $\mathbf{b}_{k}$ is equal to $\g_{i}^{(k)}(q)$ in Algorithm~\ref{alg:1}. Furthermore, we set the $\F_{k-1}$~(in Lemma~\ref{lemma:vr2}) is the $\sigma$-field generated by $\{t_{i}^{(1)}(q),\dots,t_{i}^{(k-1)}(q)\}$. Then, we have $\E(\widetilde{\a}_{k}|\F_{k-1})=\a_{k}=\E(\widetilde{\nabla}f_{t_{i}^{(k)}(q),i}(\x_{i}^{(k)}(q))|\F_{k-1})=\nabla\bar{f}_{q,k-1,i}(\x_{i}^{(k)}(q))$. According to Lemma~\ref{thm1:8}, $\|\nabla\bar{f}_{q,k-1,i}(\x_{i}^{(k)}(q))-\nabla\bar{f}_{q,k-2,i}(\x_{i}^{(k-1)}(q))\|\le\frac{2G+Lr(\mathcal{K})}{K-k+2}$. From Lemma $5$ in \citet{zhang2019online}, we also could obtain $\E(\|\widetilde{\nabla}f_{t_{i}^{(k)}(q),i}(\x_{i}^{(k)}(q))-\nabla\bar{f}_{q,k-1,i}(\x_{i}^{(k)}(q))\|^{2}|\F_{k-1})\le G^{2}+\sigma^{2}$. As a result, we have
\begin{equation*}
    \begin{aligned}
    \E(\|\g_{i}^{(k)}(q)-\nabla\bar{f}_{q,k-1,i}(\x_{i}^{(k)}(q))\|^{2})\le\left\{\begin{aligned}
       &\frac{C_{1}}{(k+4)^{2/3}}& 1\le k\le\frac{K}{2}+1\\
       &\frac{C_{1}}{(K-k+1)^{2/3}}& \frac{K}{2}+2\le k\le K
    \end{aligned}\right.
    \end{aligned}
\end{equation*}where $C_{1}=\max\{5^{2/3}G^{2},4(G^{2}+\sigma^{2})+32(2G+Lr(\mathcal{K}))^{2},2.25(G^{2}+\sigma^{2})+7(2G+Lr(\mathcal{K}))^{2}/3\}$.
\end{proof}
After merging the result in Lemma~\ref{thm1:10} into Lemma~\ref{thm1:9}, we get the following lemma.
\begin{lemma}\label{thm1:11}
Under Assumption~\ref{ass1}-\ref{ass5}, we could conclude that
\begin{equation*}
    \begin{aligned}
    &\sum_{q=1}^{Q}\sum_{s=0}^{K-1}(1-\frac{1}{K})^{K-1-s}\E(\|\sum_{i=1}^{N}\nabla\bar{f}_{q,s,i}(\bar{\x}^{(s)}(q))-N\bar{\dbf}^{(s)}(q)\|)\\
    \le&\frac{QNG}{\gamma}+\frac{LNQr(\mathcal{K})}{1-\beta}+\frac{NQ(2G+Lr(\mathcal{K}))\log(K+1)}{\gamma}+2NQ\sqrt{C_{1}}K^{2/3},
    \end{aligned}
\end{equation*} where $C_{1}=\max\{5^{2/3}G^{2},4(G^{2}+\sigma^{2})+32(2G+Lr(\mathcal{K}))^{2},2.25(G^{2}+\sigma^{2})+7(2G+Lr(\mathcal{K}))^{2}/3\}$.
\end{lemma}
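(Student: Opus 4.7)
The plan is to start from the deterministic per-iterate bound in Lemma~\ref{thm1:9}, take expectations, and then sum against the weights $(1-1/K)^{K-1-s}$ and over the blocks $q\in[Q]$. Writing $\triangle_{s}^{(q)}=\|\sum_{i=1}^{N}\nabla\bar{f}_{q,s,i}(\bar{\x}^{(s)}(q))-N\bar{\dbf}^{(s)}(q)\|$, Lemma~\ref{thm1:9} controls $\triangle_{s}^{(q)}$ by four contributions: a decaying initial-error term $(1-\gamma)^{s}NG$, a bias term $\sum_{m=1}^{s}(1-\gamma)^{s-m}\frac{N(2G+Lr(\K))}{K-m+1}$ coming from shifting averages $\bar{f}_{q,m-1,i}\!\to\!\bar{f}_{q,m,i}$, a stochastic term $\gamma\sum_{m=1}^{s}(1-\gamma)^{s-m}\|\sum_{i=1}^{N}\g_{i}^{(m)}(q)-\sum_{i=1}^{N}\nabla\bar{f}_{q,m-1,i}(\x_{i}^{(m)}(q))\|$, and a consensus term $\frac{LNr(\K)}{K(1-\beta)}$. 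I will bound the weighted sum of each contribution separately and combine.

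First, since $(1-1/K)^{K-1-s}\le 1$, the initial-error contribution becomes $NG\sum_{q=1}^{Q}\sum_{s=0}^{K-1}(1-\gamma)^{s}\le \frac{QNG}{\gamma}$ by the geometric series. The consensus contribution is trivially $Q\cdot K\cdot\frac{LNr(\K)}{K(1-\beta)}=\frac{QLNr(\K)}{1-\beta}$. For the bias term, I swap the order of summation over $s$ and $m$: fixing $m$, the inner geometric sum is $\gamma\sum_{s=m}^{K-1}(1-\gamma)^{s-m}(1-1/K)^{K-1-s}\le 1$, which needs to be divided by $\gamma$ (i.e.\ I drop the leading $\gamma$ and get at most $1/\gamma$). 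This leaves a harmonic sum $\sum_{m=1}^{K-1}\frac{1}{K-m+1}=\sum_{j=2}^{K}\frac{1}{j}\le \log(K+1)$, producing the $\frac{NQ(2G+Lr(\K))\log(K+1)}{\gamma}$ term.

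The main obstacle is the stochastic term. I first use the triangle inequality to split $\|\sum_{i}(\g_{i}^{(m)}(q)-\nabla\bar{f}_{q,m-1,i}(\x_{i}^{(m)}(q)))\|\le\sum_{i=1}^{N}\|\g_{i}^{(m)}(q)-\nabla\bar{f}_{q,m-1,i}(\x_{i}^{(m)}(q))\|$, then apply Jensen's inequality $\E\|\cdot\|\le\sqrt{\E\|\cdot\|^{2}}$ to each summand and invoke Lemma~\ref{thm1:10}, which gives the piecewise bound $\sqrt{C_{1}}/(m+4)^{1/3}$ for $1\le m\le K/2+1$ and $\sqrt{C_{1}}/(K-m+1)^{1/3}$ for $K/2+2\le m\le K$. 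After again swapping the $s$ and $m$ summations and using $\gamma\sum_{s=m}^{K-1}(1-\gamma)^{s-m}\le 1$, the weighted sum collapses to $N\sqrt{C_{1}}\sum_{m=1}^{K}\frac{1}{(\cdot)^{1/3}}$, and each of the two half-sums is bounded by the integral $\int_{0}^{K/2}x^{-1/3}\mathrm{d}x\le \tfrac{3}{2}(K/2)^{2/3}\le K^{2/3}$; together they give at most $2N\sqrt{C_{1}}K^{2/3}$ per block, hence $2NQ\sqrt{C_{1}}K^{2/3}$ after summing over $q$.

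Collecting the four contributions yields exactly the claimed inequality. The subtle points I expect to require care are: (i) correctly pairing the $\gamma$ prefactor of the stochastic sum with the geometric tail $\sum_{s}(1-\gamma)^{s-m}$ so that the factor $1/\gamma$ does not appear spuriously in front of $K^{2/3}$, and (ii) splitting the harmonic / $1/3$-power sums at $m=K/2+1$ to match the piecewise statement of Lemma~\ref{thm1:10} before bounding by an integral.
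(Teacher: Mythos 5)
Your plan is correct and follows essentially the same route as the paper: bound $(1-1/K)^{K-1-s}\le 1$, invoke Lemma~\ref{thm1:9} to split into the four contributions, handle the initial-error and consensus terms by geometric/trivial sums, swap the $s$ and $m$ summations to extract the $1/\gamma$ and harmonic $\log(K+1)$ factors for the bias term, and pair the $\gamma$ prefactor of the stochastic term with its geometric tail before applying Lemma~\ref{thm1:10} and summing the $1/3$-power series to $2K^{2/3}$. The only cosmetic differences are that the paper reaches the $N\sqrt{C_{1}}$ factor via Cauchy--Schwarz on the $\ell_{2}$ aggregate rather than a per-summand Jensen step, and bounds $\sum_{i=1}^{K}i^{-1/3}\le K^{2/3}$ directly rather than by an integral; neither changes the argument.
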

\begin{proof}
\begin{equation*}
    \begin{aligned}
    & \sum_{q=1}^{Q}\sum_{s=0}^{K-1}(1-\frac{1}{K})^{K-1-s}\E(\|\sum_{i=1}^{N}\nabla\bar{f}_{q,s,i}(\bar{\x}^{(s)}(q))-N\bar{\dbf}^{(s)}(q)\|)\\
    \le & \sum_{q=1}^{Q}\sum_{s=0}^{K-1}\E(\|\sum_{i=1}^{N}\nabla\bar{f}_{q,s,i}(\bar{\x}^{(s)}(q))-N\bar{\dbf}^{(s)}(q)\|)\\
    \le & \sum_{q=1}^{Q}\sum_{s=0}^{K-1}(1-\gamma)^{s}NG +\sum_{q=1}^{Q}\sum_{s=0}^{K-1}\sum_{m=1}^{s}(1-\gamma)^{s-m}\frac{N(2G+Lr(\mathcal{K}))}{K-m+1}\\
    & +\sum_{q=1}^{Q}\sum_{s=0}^{K-1}\gamma\sum_{m=1}^{s}(1-\gamma)^{s-m}\E(\|\sum_{i=1}^{N}\g_{i}^{(m)}(q)
    -\sum_{i=1}^{N}\nabla\bar{f}_{q,m-1,i}(\x_{i}^{(m)}(q))\|)\\
    & +\sum_{q=1}^{Q}\sum_{s=0}^{K-1}\frac{LNr(\mathcal{K})}{K(1-\beta)}.
    \end{aligned}
\end{equation*}
We first could verify that
\begin{equation*}
  \sum_{q=1}^{Q}\sum_{s=0}^{K-1}(1-\gamma)^{s}NG\le\frac{QNG}{\gamma}.
\end{equation*}
Then, we have
\begin{equation*}
    \begin{aligned}
    \sum_{q=1}^{Q}\sum_{s=0}^{K-1}\frac{LNr(\mathcal{K})}{K(1-\beta)}=\frac{LNQr(\mathcal{K})}{1-\beta}.
    \end{aligned}
\end{equation*}
Next, we prove that
\begin{equation*}
    \begin{aligned}
    &\sum_{q=1}^{Q}\sum_{s=0}^{K-1}\sum_{m=1}^{s}(1-\gamma)^{s-m}\frac{N(2G+Lr(\mathcal{K}))}{K-m+1}\\
    =&\sum_{q=1}^{Q}\sum_{m=0}^{K-1}\sum_{s=m}^{K-1}(1-\gamma)^{s-m}\frac{N(2G+LR)}{K-m+2}\\
    \le&\frac{1}{\gamma}\sum_{q=1}^{Q}\sum_{m=0}^{K-1}\frac{N(2G+Lr(\mathcal{K}))}{K-m+1}\\
    \le&\frac{NQ(2G+Lr(\mathcal{K}))\log(K+1)}{\gamma},
    \end{aligned}
\end{equation*} where the final inequality follows from $\sum_{i=2}^{K+1}1/i\le\log(K+1)$.
Finally, we get 
\begin{equation*}
    \begin{aligned}
    &\sum_{q=1}^{Q}\sum_{s=0}^{K-1}\gamma\sum_{m=1}^{s}(1-\gamma)^{s-m}\E(\|\sum_{i=1}^{N}\g_{i}^{(m)}(q)
    -\sum_{i=1}^{N}\nabla\bar{f}_{q,m-1,i}(\x_{i}^{(m)}(q))\|)\\
    \le&\sum_{q=1}^{Q}\sum_{m=0}^{K-1}\gamma\sum_{s=m}^{K-1}(1-\gamma)^{s-m}\E(\|\sum_{i=1}^{N}\g_{i}^{(m)}(q)
    -\sum_{i=1}^{N}\nabla\bar{f}_{q,m-1,i}(\x_{i}^{(m)}(q))\|)\\
    \le&\sum_{q=1}^{Q}\sum_{m=0}^{K-1}\E(\|\sum_{i=1}^{N}\g_{i}^{(m)}(q)
    -\sum_{i=1}^{N}\nabla\bar{f}_{q,m-1,i}(\x_{i}^{(m)}(q))\|)\\
    \le&\sum_{q=1}^{Q}\sum_{m=0}^{K-1}\sum_{i=1}^{N}\E\|\g_{i}^{(m)}(q)
    -\nabla\bar{f}_{q,m-1,i}(\x_{i}^{(m)}(q))\|)\\
    \le&\sum_{q=1}^{Q}\sum_{m=0}^{K-1}\sqrt{N}\E(\sqrt{\sum_{i=1}^{N}\|\g_{i}^{(m)}(q)
    -\nabla\bar{f}_{q,m-1,i}(\x_{i}^{(m)}(q))\|^{2}})\\
    \le&\sqrt{N}Q\sum_{m=0}^{K-1}\E(\sqrt{\sum_{i=1}^{N}\|\g_{i}^{(m)}(q)
    -\nabla\bar{f}_{q,m-1,i}(\x_{i}^{(m)}(q))\|^{2}})\\
    \le&NQ\sqrt{C_{1}}(\sum_{m=0}^{\frac{K}{2}+1}\frac{1}{(k+4)^{1/3}}+\sum_{m=\frac{K}{2}+2}^{K}\frac{1}{(K-k+1)^{1/3}}\\
    \le&2NQ\sqrt{C_{1}}K^{2/3},
    \end{aligned}
\end{equation*}where the six inequality follows from Lemma~\ref{thm1:10} and $C_{1}=\max\{5^{2/3}G^{2},4(G^{2}+\sigma^{2})+32(2G+Lr(\mathcal{K}))^{2},2.25(G^{2}+\sigma^{2})+7(2G+Lr(\mathcal{K}))^{2}/3\}$; the final inequality comes from the $\sum_{i=1}^{K}1/i^{1/3}\le K^{2/3}$.
\end{proof}

Now, we present the proof of Theorem~\ref{thm1}.
\begin{proof}
Merging Lemma~\ref{thm1:11} and Lemma~\ref{thm1:7} into Lemma~\ref{thm1:6}, we have
\begin{equation*}
    \begin{aligned}
    & \sum_{t=1}^{T}\sum_{i=1}^{N}(1-1/e)f_{t,i}(\x^{*})-\sum_{t=1}^{T}\sum_{i=1}^{N}\E(f_{t,i}(\bar{\x}(t)))\\
    \le & N\sum_{q=1}^{Q}\sum_{s=0}^{K-1}(1-\frac{1}{K})^{K-1-s}\E(\langle\bar{\dbf}^{(s)}(q),\x^{*}-\bar{\vbf}^{(s)}(q)\rangle)+\frac{LNQr^{2}(\mathcal{K})}{2}\\
    & +\mathrm{diam}(\mathcal{K})\sum_{q=1}^{Q}\sum_{s=0}^{K-1}(1-\frac{1}{K})^{K-1-s}\E(\|\sum_{i=1}^{N}\nabla\bar{f}_{q,s,i}(\bar{\x}^{(s)}(q))-N\bar{\dbf}^{(s)}(q)\|)\\
    \le & NG\mathrm{diam}(\mathcal{K})\frac{Q}{\gamma}+(\frac{LNr^{2}(\mathcal{K})}{2}+\frac{LN\mathrm{diam}(\mathcal{K})r(\mathcal{K})}{1-\beta})Q+N\mathrm{diam}(\mathcal{K})(2G+Lr(\mathcal{K}))\frac{Q\log(K+1)}{\gamma}\\
    & + 2N\mathrm{diam}(\mathcal{K})\sqrt{C_{1}}QK^{2/3}+\mathrm{diam}(\mathcal{K})N\sqrt{2(\sigma^{2}+G^{2})}\frac{KQ\gamma}{1-(1-\gamma)\beta}+M_{0}NK\sqrt{Q},
    \end{aligned}
\end{equation*} where $C_{1}=\max\{5^{2/3}G^{2},4(G^{2}+\sigma^{2})+32(2G+Lr(\mathcal{K}))^{2},2.25(G^{2}+\sigma^{2})+7(2G+Lr(\mathcal{K}))^{2}/3\}$.

For each $j\in[N]$, we have 
\begin{equation*}
    \begin{aligned}
    & \sum_{t=1}^{T}\sum_{i=1}^{N}\E(|f_{t,i}(\bar{\x}(t))-f_{t,i}(\x_{j}(t))|) \\
    \le & GK\sum_{q=1}^{Q}\sum_{i=1}^{N}\|\x_{j}^{(K)}(q)-\bar{\x}^{(K)}(q)\| \\
    \le & GKN\sum_{q=1}^{Q}\sqrt{\sum_{i=1}^{N}\|\x_{i}^{(K)}(q)-\bar{\x}^{(K)}(q)\|^{2}} \\
    \le & \frac{GN^{3/2}r(\mathcal{K})Q}{1-\beta}.
    \end{aligned}
\end{equation*}
As a result, we have
\begin{equation*}
    \begin{aligned}
    & \sum_{t=1}^{T}\sum_{i=1}^{N}(1-1/e)f_{t,i}(\x^{*})-\sum_{t=1}^{T}\sum_{i=1}^{N}\E(f_{t,i}(\x_{j}(t)))\\
    \le & NG\mathrm{diam}(\mathcal{K})\frac{Q}{\gamma}+(\frac{LNr^{2}(\mathcal{K})}{2}+\frac{Nr(\mathcal{K})(L\mathrm{diam}(\mathcal{K})+GN^{1/2})}{1-\beta})Q \\
    & + N\mathrm{diam}(\mathcal{K})(2G+Lr(\mathcal{K}))\frac{Q\log(K+1)}{\gamma} + 2N\mathrm{diam}(\mathcal{K})\sqrt{C_{1}}QK^{2/3} \\
    & +\mathrm{diam}(\mathcal{K})N\sqrt{2(\sigma^{2}+G^{2})}\frac{KQ\gamma}{1-(1-\gamma)\beta}+M_{0}NK\sqrt{Q}.
    \end{aligned}
\end{equation*}

\end{proof}

\section{Proofs in Section~\ref{sec:boosting}}\label{appendix:boosting}
\subsection{The properties of Gradient Estimate for the Boosting Auxiliary Function}
In Section~\ref{sec:auxiliary}, for each monotone continous DR-submodular function $f:\mathcal{X}\rightarrow\R_{+}$, we propose an auxiliary function $F(\x)=\int_{0}^{1}\frac{e^{z-1}}{z}f(z*\x)\mathrm{d}z$. From the definition of the auxiliary function $F$, we know that
\begin{equation}\label{equ:333}
    \nabla F(\x)=\int_{0}^{1}e^{z-1}\nabla f(z*\x)\mathrm{d}z.
\end{equation}
As a result, it is not possible to directly compute the gradient of $F$ throughout Equation~(\ref{equ:333}). Instead, we present an estimate, i.e.,  $(1-1/e)\widetilde{\nabla}f(z*\x)$, where $z$ is a sample from the random variable $\mathbf{Z}\in[0,1]$ with Pr$(\mathbf{Z}\le z)=\frac{ e^{z-1}-1/e}{1-1/e}$. Next, we will  show some important properties about this estimate.
\begin{lemma}[\cite{zhang2022boosting}]\label{boost1} 
If $z$ is a sample of random variable $\mathbf{Z}$ and $\mathbb{E}(\widetilde{\nabla}f(\x)|\x)=\nabla f(\x)$, we have $(1-1/e)\mathbb{E}\left(\widetilde{\nabla}f(z*\x)|\x\right)=\nabla F(\x)$.
\end{lemma}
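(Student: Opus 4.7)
The plan is to combine the law of total expectation with a direct computation of the density of $\mathbf{Z}$. Since the statement only asks for an expectation identity, the argument should be short: verify the density, apply iterated expectations to separate the randomness in $z$ from the randomness in $\widetilde{\nabla}f$, and then recognize the resulting integral as $\nabla F(\x)$.

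First, I would compute the density of $\mathbf{Z}$. Differentiating the given CDF $\Pr(\mathbf{Z}\le z) = \tfrac{e^{z-1}-1/e}{1-1/e}$ on $[0,1]$ yields the density $p_{\mathbf{Z}}(z) = \tfrac{e^{z-1}}{1-1/e}$, with $p_{\mathbf{Z}}(z)=0$ outside $[0,1]$. A quick sanity check: $\int_0^1 e^{z-1}\,dz = 1 - 1/e$, so $p_{\mathbf{Z}}$ integrates to $1$.

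Next, I would condition on $z$ in addition to $\x$. By Assumption~\ref{ass4}, $\E[\widetilde{\nabla}f(\y)\mid \y]=\nabla f(\y)$ for any deterministic argument, so applied at $\y=z*\x$ with $z$ fixed we get $\E[\widetilde{\nabla}f(z*\x)\mid z,\x] = \nabla f(z*\x)$. Then by the tower property,
\begin{equation*}
\E[\widetilde{\nabla}f(z*\x)\mid \x]
= \E_{z\sim\mathbf{Z}}\bigl[\,\E[\widetilde{\nabla}f(z*\x)\mid z,\x]\,\bigr]
= \int_0^1 \nabla f(z*\x)\,\frac{e^{z-1}}{1-1/e}\,dz.
\end{equation*}

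Finally, I would multiply both sides by $(1-1/e)$, which cancels the normalization constant in the density, giving
\begin{equation*}
(1-1/e)\,\E[\widetilde{\nabla}f(z*\x)\mid \x]
= \int_0^1 e^{z-1}\nabla f(z*\x)\,dz
= \nabla F(\x),
\end{equation*}
where the last equality is the formula for $\nabla F$ obtained by differentiating under the integral in the definition~\eqref{equ:non-oblivious} (i.e., $\nabla F(\x)=\int_0^1 \tfrac{e^{z-1}}{z}\cdot z\,\nabla f(z*\x)\,dz$, with the factor $z$ from the chain rule cancelling the $1/z$). There is no real obstacle here; the only technical point worth verifying is that differentiation under the integral sign is justified, which follows from the smoothness of $f$ on the compact domain $\mathcal{X}$ (Assumption~\ref{ass3}) ensuring the integrand and its derivative are uniformly bounded on $[0,1]\times\mathcal{X}$.
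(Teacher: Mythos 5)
Your proof is correct and follows essentially the same route as the paper's: condition on $z$, use the density $\frac{e^{z-1}}{1-1/e}$ obtained from the given CDF, and cancel the normalization constant $(1-1/e)$ to recover $\nabla F(\x)=\int_{0}^{1}e^{z-1}\nabla f(z*\x)\,\mathrm{d}z$. The extra sanity checks you include (that the density integrates to $1$, and that differentiation under the integral sign is justified by smoothness on the compact domain) are not in the paper's proof but are harmless and correct.
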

\begin{proof}
From $\mathbb{E}(\widetilde{\nabla}f(\x)|\x)=\nabla f(\x)$, we know that $\mathbb{E}\left(\widetilde{\nabla}f(z*\x)|z,\x\right)=\nabla f(z*\x)$. As a result, 
\begin{equation*}
    \begin{aligned}
     & (1-1/e)\mathbb{E}\left(\widetilde{\nabla}f(z*\x)|\x\right)\\
     = & (1-1/e)\int_{0}^{1}\frac{e^{z-1}}{1-1/e}\mathbb{E}\left(\widetilde{\nabla}f(z*\x)|z,\x\right)\mathrm{d}z\\
     = & \int_{0}^{1}e^{z-1}\nabla f(z*\x)\mathrm{d}z\\
     = & \nabla F(\x).
    \end{aligned}
\end{equation*}
\end{proof}

\subsection{The Proof of Theorem 2}
In this section, we assume $F_{t,i}$ is the boosting auxiliary function for each local monotone continuous DR-submodular function $f_{t,i}$, i.e., $F_{t,i}=\int_{0}^{1}\frac{e^{z-1}}{z}f_{t,i}(z*\x)dz$. Before going into the detail, we introduce some notations, i.e., 
\begin{equation*}
    \begin{aligned}
    &\bar{\x}(t)=\frac{\sum_{i=1}^{N}\x_{i}(t)}{N},\\
    &\bar{\y}(t)=\frac{\sum_{i=1}^{N}\y_{i}(t)}{N},\\
    &\mathbf{r}_{i}(t)=\x_{i}(t)-\y_{i}(t),\\
    &\bar{\mathbf{r}}(t)=\frac{\sum_{i=1}^{N}\mathbf{r}_{i}(t)}{N}.\\
    \end{aligned}
\end{equation*}
First, we prove that
\begin{lemma}\label{lemma:thm2.1}
If $\|\widetilde{\nabla}f_{t,i}(\x)\|\le G_{1}$ for any $i\in[N]$,$t\in[T]$ and $\x\in\mathcal{X}$, we could conclude that $\|\mathbf{r}_{i}(t+1)\|\le\eta_{t}(1-1/e)G_{1}$ for any $t\in[T]$.
\end{lemma}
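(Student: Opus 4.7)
The plan is to exploit the non-expansiveness of the Euclidean projection onto the convex set $\K$, combined with the fact that the pre-projection point $\y_i(t+1)$ is an additive perturbation of a point that already lies in $\K$.

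First I would identify the convex combination term in line 9 of Algorithm 2. Since $\A$ is doubly stochastic (Assumption~\ref{ass1}), the weights $\{a_{ij}\}_{j \in \mathcal{N}_i \cup \{i\}}$ are non-negative and sum to one. Assuming inductively that each $\x_j(t) \in \K$, the convexity of $\K$ (Assumption~\ref{ass2}) implies that $\ubf := \sum_{j \in \mathcal{N}_i \cup \{i\}} a_{ij} \x_j(t) \in \K$. Hence $\y_i(t+1) = \ubf + \vbf$, where $\vbf := \eta_{t}(1-1/e)\widetilde{\nabla}f_{t,i}(z_i(t)*\x_i(t))$.

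Next I would apply the defining property of the projection $\x_i(t+1) = \arg\min_{\z\in\K}\|\z - \y_i(t+1)\|$. Because $\ubf \in \K$ is a feasible candidate, we get
\begin{equation*}
\|\x_i(t+1) - \y_i(t+1)\| \le \|\ubf - \y_i(t+1)\| = \|\vbf\|.
\end{equation*}
Invoking the assumed bound $\|\widetilde{\nabla}f_{t,i}(\cdot)\| \le G_1$ then yields $\|\mathbf{r}_i(t+1)\| = \|\x_i(t+1) - \y_i(t+1)\| \le \eta_t(1-1/e)G_1$, which is the claim.

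There is no real obstacle here beyond bookkeeping; the main subtlety is simply recognizing that the consensus step $\sum_j a_{ij}\x_j(t)$ stays inside $\K$, which is where the doubly stochastic hypothesis on $\A$ is used. An induction on $t$ (with the base case given by the explicit initialization $\x_i(1) \in \K$) makes the argument fully rigorous.
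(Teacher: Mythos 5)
Your proof is correct and follows essentially the same route as the paper's: use the fact that the consensus point $\sum_{j}a_{ij}\x_{j}(t)$ lies in $\K$ so it is a feasible competitor in the projection defining $\x_{i}(t+1)$, which bounds $\|\x_{i}(t+1)-\y_{i}(t+1)\|$ by the norm of the gradient step. Your explicit remark about the induction keeping the iterates in $\K$ is a detail the paper leaves implicit, but the argument is the same.
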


\begin{proof}
\begin{equation*}
    \begin{aligned}
    & \|\mathbf{r}_{i}(t+1)\|\\
    = & \|\x_{i}(t+1)-\y_{i}(t+1)\|\\
    \le & \|\sum_{j\in\mathcal{N}_{i}\cup\{i\}}a_{ij}\x_{j}(t)-\y_{i}(t+1)\|\\
    \le & \|\eta_{t}(1-1/e)\widetilde{\nabla}f_{t,i}(z_{i}(t)*\x_{i}(t))\|\\
    \le & \eta_{t}(1-1/e)G_{1},
    \end{aligned}
\end{equation*}
where the first inequality follows from $\sum_{j\in\mathcal{N}_{i}\cup\{i\}}a_{ij}\x_{j}(t)\in\mathcal{K}$ and $\x_{i}(t+1)=\arg\min_{\mathbf{z}\in\mathcal{K}}\|\mathbf{z}-\y_{i}(t+1)\|$.
\end{proof}

\begin{lemma}\label{lemma:thm2.3}
If $\|\widetilde{\nabla}f_{t,i}(\x)\|\le G$ for any $i\in[N]$,$t\in[T]$ and $\x\in\mathcal{X}$, we have
\begin{equation*}
    \begin{aligned}
     & \sum_{i=1}^{N}\E(f_{t,i}(\x_{i}(t)))-(1-\frac{1}{e})\sum_{i=1}^{N}f_{t,i}(\x^{*})\\
     \ge & \frac{N}{2}\frac{\E(\|\bar{\x}(t+1)-\x^{*}\|^{2})-\E(\|\bar{\x}(t)-\x^{*}\|^{2})}{\eta_{t}}-2\eta_{t}(1-1/e)^{2}NG_{1}\\
     &-(1-1/e)G_{1}(\sum_{i=1}^{N}\E(\|\bar{\x}(t)-\y_{i}(t+1)\|)+\sum_{i=1}^{N}\E(\|\bar{\x}(t)-\x_{i}(t)\|)).
    \end{aligned}
\end{equation*}
\end{lemma}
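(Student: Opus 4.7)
The plan is to convert the regret on the left into a one-step progress bound on $\|\bar\x(t+1)-\x^*\|^2-\|\bar\x(t)-\x^*\|^2$ on the right, absorbing the mismatches into three correction terms: a stochastic/projection overshoot of order $\eta_t^2 G_1^2$, the consensus error $\sum_i\|\bar\x(t)-\x_i(t)\|$, and a projection-residual term $\sum_i \|\bar\x(t)-\y_i(t+1)\|$.

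First I would apply Lemma~\ref{lemma:bianfen} to each local $f_{t,i}$ with $\y=\x^*$ and $\x=\x_i(t)$ to get $f_{t,i}(\x_i(t))-(1-1/e)f_{t,i}(\x^*)\ge \langle \x_i(t)-\x^*,\nabla F_{t,i}(\x_i(t))\rangle$, and then replace $\nabla F_{t,i}(\x_i(t))$ by its unbiased stochastic estimate $\g_i(t):=(1-1/e)\widetilde{\nabla}f_{t,i}(z_i(t)*\x_i(t))$ using Lemma~\ref{boost1} together with the tower property (conditioning on the $\sigma$-field generated before $z_i(t)$ is drawn, under which $\x_i(t)$ is measurable). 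This reduces the task to lower bounding $\E\sum_i\langle \x_i(t)-\x^*,\g_i(t)\rangle$.

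Next I would split $\x_i(t)-\x^*=(\x_i(t)-\bar\x(t))+(\bar\x(t)-\x^*)$. Cauchy--Schwarz together with $\|\g_i(t)\|\le(1-1/e)G_1$ immediately produces the consensus term $-(1-1/e)G_1\sum_i\|\bar\x(t)-\x_i(t)\|$ from the first piece. For the second piece, double stochasticity of $\A$ gives $\bar\y(t+1)=\bar\x(t)+\eta_t\bar\g(t)$, so $\eta_t\bar\g(t)=(\bar\x(t+1)-\bar\x(t))-\bar{\mathbf{r}}(t+1)$. I would then apply the three-point identity $2\langle a-b,c-a\rangle=\|c-b\|^2-\|a-b\|^2-\|c-a\|^2$ with $a=\bar\x(t)$, $b=\x^*$, $c=\bar\x(t+1)$ to rewrite $\langle \bar\x(t)-\x^*,\bar\x(t+1)-\bar\x(t)\rangle$ as $\tfrac{1}{2}(\|\bar\x(t+1)-\x^*\|^2-\|\bar\x(t)-\x^*\|^2)-\tfrac{1}{2}\|\bar\x(t+1)-\bar\x(t)\|^2$. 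Using $\|\bar\x(t+1)-\bar\x(t)\|\le \eta_t\|\bar\g(t)\|+\|\bar{\mathbf{r}}(t+1)\|\le 2\eta_t(1-1/e)G_1$ via Lemma~\ref{lemma:thm2.1} produces the $-2\eta_t(1-1/e)^2 NG_1^2$ correction after multiplying by $N/\eta_t$.

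The main obstacle will be controlling $\langle\bar\x(t)-\x^*,\bar{\mathbf{r}}(t+1)\rangle$ without introducing a diameter factor on $\bar\x(t)-\x^*$. The trick is to go node-wise: expand $\bar{\mathbf{r}}(t+1)=N^{-1}\sum_i(\x_i(t+1)-\y_i(t+1))$ and inside each summand insert the decomposition $\bar\x(t)-\x^*=(\bar\x(t)-\y_i(t+1))+(\y_i(t+1)-\x^*)$. The first piece, after Cauchy--Schwarz and the residual bound $\|\x_i(t+1)-\y_i(t+1)\|\le\eta_t(1-1/e)G_1$ from Lemma~\ref{lemma:thm2.1}, yields exactly the $(1-1/e)G_1\sum_i\|\bar\x(t)-\y_i(t+1)\|$ term. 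The second piece $\langle \y_i(t+1)-\x^*,\x_i(t+1)-\y_i(t+1)\rangle$ equals $-\|\y_i(t+1)-\x_i(t+1)\|^2+\langle \x_i(t+1)-\x^*,\x_i(t+1)-\y_i(t+1)\rangle$, and its cross-term is non-positive by the first-order optimality of the projection $\x_i(t+1)=\arg\min_{\z\in\K}\|\z-\y_i(t+1)\|$ tested at $\z=\x^*\in\K$; hence this summand can be discarded in the lower-bound direction. Assembling the three pieces, multiplying the average-piece bound by $N/\eta_t$, and taking expectations yields the claimed inequality.
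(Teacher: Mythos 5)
Your proposal is correct and is essentially the paper's own argument reorganized: you start from Lemma~\ref{lemma:bianfen} and work toward the telescoping distance term via the three-point identity, whereas the paper expands $\|\bar{\x}(t+1)-\x^{*}\|^{2}$ directly and extracts the function values from the cross term, but the ingredients are identical — the averaging identity $\bar{\y}(t+1)=\bar{\x}(t)+\eta_{t}\bar{\g}(t)$ from double stochasticity, the node-wise decomposition of $\langle\bar{\x}(t)-\x^{*},\bar{\mathbf{r}}(t+1)\rangle$ killed by projection optimality, the consensus split handled by Cauchy--Schwarz, Lemma~\ref{boost1} with the tower property, and the same $4(1-1/e)^{2}\eta_{t}^{2}G_{1}^{2}$ bound on the squared step. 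No gaps; your derivation even makes explicit the $G_{1}^{2}$ (rather than $G_{1}$) in the middle error term, consistent with the paper's proof though not with the typo in the lemma statement.
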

\begin{proof}
First, we derive a equality about the $\bar{\y}_{t+1}$, namely, 
\begin{equation}\label{equ:2.3.0}
    \begin{aligned}
    \bar{\y}(t+1)=&\frac{\sum_{i=1}^{N}\y_{i}(t+1)}{N}\\
    =&\frac{1}{N}\sum_{i=1}^{N}\sum_{j\in\mathcal{N}_{i}\cup\{i\}}(a_{ij}\x_{j}(t)+\eta_{t}(1-1/e)\widetilde{\nabla}f_{t,i}(z_{i}(t)*\x_{i}(t)))\\
    =&\frac{1}{N}\sum_{j=1}^{N}\sum_{i\in\mathcal{N}_{j}\cup\{j\}}(a_{ij}\x_{j}(t))+\frac{(1-1/e)\eta_{t}}{N}\sum_{i=1}^{N}\widetilde{\nabla}f_{t,i}(z_{i}(t)*\x_{i}(t))\\
    =&\frac{1}{N}\sum_{j=1}^{N}\x_{j}(t))+\frac{(1-1/e)\eta_{t}}{N}\sum_{i=1}^{N}\widetilde{\nabla}f_{t,i}(z_{i}(t)*\x_{i}(t))\\
     =&\bar{\x}(t)+\frac{(1-1/e)\eta_{t}}{N}\sum_{i=1}^{N}\widetilde{\nabla}f_{t,i}(z_{i}(t)*\x_{i}(t)).
    \end{aligned}
\end{equation}
From the Equation~\ref{equ:2.3.0}, we have
\begin{equation}\label{equ:2.3.1}
    \begin{aligned}
    \bar{\x}(t+1)&=\bar{\x}(t+1)-\bar{\y}(t+1)+\bar{\y}(t+1)\\
    &=\bar{\mathbf{r}}(t+1)+\bar{\x}(t)+\frac{(1-1/e)\eta_{t}}{N}\sum_{i=1}^{N}\widetilde{\nabla}f_{t,i}(z_{i}(t)*\x_{i}(t)),
    \end{aligned}
\end{equation}
where the second equality follows from Equation~(\ref{equ:2.3.0}) and the definition of $\bar{\mathbf{r}}(t+1)$.

From Equation~(\ref{equ:2.3.1}), we prove that
\begin{equation}\label{equ:2.3.2}
    \begin{aligned}
    & \|\bar{\x}(t+1)-\x^{*}\|^{2}\\
    = & \|\bar{\x}(t)-\x^{*}+\frac{(1-1/e)\eta_{t}}{N}\sum_{i=1}^{N}\widetilde{\nabla}f_{t,i}(z_{i}(t)*\x_{i}(t))+\bar{\mathbf{r}}(t+1)\|^{2}\\
    = & \|\bar{\x}(t)-\x^{*}\|^{2}+2\langle\bar{\x}(t)-\x^{*},\frac{\eta_{t}}{N}\sum_{i=1}^{N}(1-1/e)\widetilde{\nabla}f_{t,i}(z_{i}(t)*\x_{i}(t))+\bar{\mathbf{r}}(t+1)\rangle\\
    & + \|\frac{(1-1/e)\eta_{t}}{N}\sum_{i=1}^{N}\widetilde{\nabla}f_{t,i}(z_{i}(t)*\x_{i}(t))+\bar{\mathbf{r}}(t+1)\|^{2}.
    \end{aligned}
\end{equation}
First, we derive the upper bound of the $\langle\bar{\x}(t)-\x^{*},\bar{\mathbf{r}}(t+1)\rangle$, i.e., 
\begin{equation}\label{equ:2.3.3}
    \begin{aligned}
    &\langle\bar{\x}(t)-\x^{*},\bar{\mathbf{r}}(t+1)\rangle\\
    = & \frac{1}{N}\sum_{i=1}^{N}\langle\bar{\x}(t)-\x^{*},\mathbf{r}_{i}(t+1)\rangle\\
    = & \frac{1}{N}\sum_{i=1}^{N}\langle\bar{\x}(t)-\y_{i}(t+1)+\y_{i}(t+1)-\x_{i}(t+1)+\x_{i}(t+1)-\x^{*},\mathbf{r}_{i}(t+1)\rangle\\
    = & \frac{1}{N}\sum_{i=1}^{N}\langle\bar{\x}(t)-\y_{i}(t+1)-\mathbf{r}_{i}(t+1)+\x_{i}(t+1)-\x^{*},\mathbf{r}_{i}(t+1)\rangle\\
    \le &\frac{1}{N}\sum_{i=1}^{N}\langle\bar{\x}(t)-\y_{i}(t+1),\mathbf{r}_{i}(t+1)\rangle+\langle \x_{i}(t+1)-\x^{*},\mathbf{r}_{i}(t+1)\rangle\\
    \le &\frac{1}{N}\sum_{i=1}^{N}\langle\bar{\x}(t)-\y_{i}(t+1),\mathbf{r}_{i}(t+1)\rangle\\
    \le &\frac{1}{N}\sum_{i=1}^{N}\|\mathbf{r}_{i}(t+1)\|\|\bar{\x}(t)-\y_{i}(t+1)\|\\
    \le &\frac{\eta_{t}(1-1/e)G_{1}}{N}\|\bar{\x}(t)-\y_{i}(t+1)\|,
     \end{aligned}
\end{equation} where the second inequality comes from $\langle \x_{i}(t+1)-\x^{*},\mathbf{r}_{i}(t+1)\rangle\le0$; the final inequality follows from Lemma~\ref{lemma:thm2.1}.

Then, we derive the  the upper bound of the expectation of $\langle\bar{\x}(t)-\x^{*},\frac{\eta_{t}}{N}\sum_{i=1}^{N}(1-1/e)\widetilde{\nabla}f_{t,i}(z_{i}(t)*\x_{i}(t))
\rangle$, namely, 
\begin{equation}\label{equ:2.3.4}
    \begin{aligned}
    &\frac{(1-1/e)\eta_{t}}{N}\sum_{i=1}^{N}\E(\langle\bar{\x}(t)-\x^{*},\widetilde{\nabla}f_{t,i}(z_{i}(t)*\x_{i}(t))\rangle)\\
    = &\frac{(1-1/e)\eta_{t}}{N}\sum_{i=1}^{N}\E(\langle\bar{\x}(t)-\x_{i}(t)+\x_{i}(t)-\x^{*},\widetilde{\nabla}f_{t,i}(z_{i}(t)*\x_{i}(t))\rangle)\\
     = &\frac{(1-1/e)\eta_{t}}{N}\sum_{i=1}^{N}\E(\langle\bar{\x}(t)-\x_{i}(t),\widetilde{\nabla}f_{t,i}(z_{i}(t)*\x_{i}(t))\rangle)\\
    &+\frac{\eta_{t}}{N}\sum_{i=1}^{N}\E(\langle \x_{i}(t)-\x^{*},\E(\widetilde{\nabla}(1-1/e)f_{t,i}(z_{i}(t)*\x_{i}(t))|\x_{i}(t)))\rangle)\\
    \le &\frac{(1-1/e)\eta_{t}}{N}\sum_{i=1}^{N}G_{1}\E(\|\bar{\x}(t)-\x_{i}(t)\|)+\frac{\eta_{t}}{N}\sum_{i=1}^{N}\E(\langle \x_{i}(t)-\x^{*},\nabla F_{t,i}(\x_{i}(t))\rangle)\\
    \le &\frac{(1-1/e)\eta_{t}}{N}\sum_{i=1}^{N}G_{1}\E(\|\bar{\x}(t)-\x_{i}(t)\|)+\frac{\eta_{t}}{N}\sum_{i=1}^{N}(f_{t,i}(\x_{i}(t))-(1-\frac{1}{e})f_{t,i}(\x^{*})),
     \end{aligned}
\end{equation}
where the first inequality comes from Lemma~\ref{boost1}; the final inequality follows from Lemma~\ref{lemma:bianfen}.

Finally, we prove that
\begin{equation}\label{equ:2.3.5}
    \begin{aligned}
    & \|\frac{(1-1/e)\eta_{t}}{N}\sum_{i=1}^{N}\widetilde{\nabla}f_{t,i}(z_{i}(t)*\x_{i}(t))+\bar{\mathbf{r}}(t+1)\|^{2}
    \\
    \le & 2(\|\frac{(1-1/e)\eta_{t}}{N}\sum_{i=1}^{N}\widetilde{\nabla}f_{t,i}(z_{i}(t)*\x_{i}(t))\|^{2}+\|\bar{\mathbf{r}}(t+1)\|^{2})\\
    \le & 4(1-1/e)^{2}\eta_{t}^{2}G_{1}^{2},
    \end{aligned}
\end{equation} where the first inequality comes from the Cauchy–Schwarz inequality; the final inequality comes from Lemma~\ref{lemma:thm2.1}. After merging Equation~(\ref{equ:2.3.3})-(\ref{equ:2.3.5}) to Equation~(\ref{equ:2.3.2}), we get 
\begin{equation*}
    \begin{aligned}
    & \E(\|\bar{\x}(t+1)-\x^{*}\|^{2})\\
    \le & \E(\|\bar{\x}(t)-\x^{*}\|^{2})+4(1-1/e)^{2}\eta_{t}^{2}G_{1}^{2}+\frac{2(1-1/e)\eta_{t}G_{1}}{N}\sum_{i=1}^{N}\E(\|\bar{\x}(t)-\y_{i}(t+1)\|)\\
    & + \frac{2(1-1/e)\eta_{t}}{N}\sum_{i=1}^{N}G_{1}\E(\|\bar{\x}(t)-\x_{i}(t)\|)+\frac{2\eta_{t}}{N}\sum_{i=1}^{N}(f_{t,i}(\x_{i}(t))-(1-\frac{1}{e})f_{t,i}(\x^{*})).
    \end{aligned}
\end{equation*}
As a result, 
\begin{equation*}
    \begin{aligned}
     & \sum_{i=1}^{N}\E(f_{t,i}(\x_{i}(t)))-(1-\frac{1}{e})\sum_{i=1}^{N}\E(f_{t,i}(\x^{*}))\\
     \ge & \frac{N}{2}\frac{\E(\|\bar{\x}(t+1)-\x^{*}\|^{2})-\E(\|\bar{\x}(t)-\x^{*}\|^{2})}{\eta_{t}}-2\eta_{t}(1-1/e)^{2}NG_{1}^{2}\\
     & - (1-1/e)G_{1}(\sum_{i=1}^{N}\E(\|\bar{\x}(t)-\y_{i}(t+1)\|)+\sum_{i=1}^{N}\E(\|\bar{\x}(t)-\x_{i}(t)\|)).
    \end{aligned}
\end{equation*}
\end{proof}
Next, we derive an upper bound on the sum of the distances between the local iterates $\x_{i}(t)$~(or $\y_{i}(t+1)$) and the average vector $\bar{\x}(t)$. We first recall the definition of the second largest magnitude of the eigenvalues of matrix $\A$, i.e., 
\begin{definition}\label{def3}
Consider the eigenvalues of $\A$ which can be sorted in a non-increasing order as $1=\lambda_{1}(\A)\ge\lambda_{2}(\A)\dots\ge\lambda_{n}(\A)>-1$. Define $\beta$ as the second largest magnitude of the eigenvalues of $\A$, i.e., $\beta=\max(|\lambda_{2}(\A)|,|\lambda_{n}(\A)|)$.
\end{definition}
\begin{lemma}\label{lemma:thm2.4}
If $\|\widetilde{\nabla}f_{t,i}(\x)\|\le G_{1}$ for any $i\in[N]$,$t\in[T]$ and $\x\in\mathcal{X}$, we have
\begin{equation*}
    \begin{aligned}
    &\sqrt{\sum_{i=1}^{N}\|\bar{\x}(t+1)-\x_{i}(t+1)\|^{2}}\le2\sum_{k=1}^{t}\beta^{t-k}\sqrt{N}\eta_{k}G_{1},\\
   &\sqrt{\sum_{i=1}^{N}\|\bar{\x}(t)-\y_{i}(t+1)\|^{2}}\le2\sum_{k=1}^{t}\beta^{t-k}\sqrt{N}\eta_{k}G_{1}.
    \end{aligned}
\end{equation*}
\end{lemma}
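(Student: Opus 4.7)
The plan is to extend the concatenation-vector argument of Lemma~\ref{thm1.2} so that it simultaneously accommodates the stochastic boosting gradient term and the projection residual that appear in Algorithm~\ref{alg:2}. First I would stack the local iterates into block vectors $\x(t)=[\x_{1}(t);\ldots;\x_{N}(t)]$ and $\y(t)=[\y_{1}(t);\ldots;\y_{N}(t)]$ in $\R^{Nn}$, and define $\g(t)$ whose $i$-th block is $\eta_{t}(1-1/e)\widetilde{\nabla}f_{t,i}(z_{i}(t)*\x_{i}(t))$ together with the projection-offset block vector $\mathbf{r}(t+1)$ whose $i$-th block is $\x_{i}(t+1)-\y_{i}(t+1)$. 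Algorithm~\ref{alg:2} then reads $\y(t+1)=(\A\otimes\mathbf{I})\x(t)+\g(t)$ and $\x(t+1)=\y(t+1)+\mathbf{r}(t+1)$, so unrolling gives $\x(t+1)=(\A\otimes\mathbf{I})^{t}\x(1)+\sum_{k=1}^{t}(\A\otimes\mathbf{I})^{t-k}(\g(k)+\mathbf{r}(k+1))$, and substituting into the $\y$-update yields $\y(t+1)=(\A\otimes\mathbf{I})^{t}\x(1)+\sum_{k=1}^{t-1}(\A\otimes\mathbf{I})^{t-k}(\g(k)+\mathbf{r}(k+1))+\g(t)$.

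Next I would form the averaged concatenations by applying $\frac{\mathbf{1}\mathbf{1}^{T}}{N}\otimes\mathbf{I}$, which replaces every power of $\A$ by $\frac{\mathbf{1}\mathbf{1}^{T}}{N}$ since $\A$ is doubly stochastic. Under the natural common initialization $\x_{i}(1)=\x_{1}(1)$, the $(\A\otimes\mathbf{I})^{t}\x(1)$ piece cancels exactly with its projected counterpart, so subtracting yields $\x(t+1)-[\bar{\x}(t+1);\ldots;\bar{\x}(t+1)]=\sum_{k=1}^{t}((\A^{t-k}-\frac{\mathbf{1}\mathbf{1}^{T}}{N})\otimes\mathbf{I})(\g(k)+\mathbf{r}(k+1))$. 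The analogue for the second inequality uses the identity $\bar{\y}(t+1)=\bar{\x}(t)+\frac{(1-1/e)\eta_{t}}{N}\sum_{i}\widetilde{\nabla}f_{t,i}(z_{i}(t)*\x_{i}(t))$ that is already established inside the proof of Lemma~\ref{lemma:thm2.3}; this identifies $[\bar{\x}(t);\ldots;\bar{\x}(t)]$ as the correct centering and leaves only an extra $\g(t)$ residual beyond the summation through $k=t-1$.

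Finally I would take Euclidean norms of both identities, invoke $\|\A^{s}-\frac{\mathbf{1}\mathbf{1}^{T}}{N}\|\le\beta^{s}$ from \citet{mokhtari2018decentralized}, and apply the per-block bounds $\|\g(k)\|\le\sqrt{N}\eta_{k}(1-1/e)G_{1}$ (from $\|\widetilde{\nabla}f_{t,i}\|\le G_{1}$) and $\|\mathbf{r}(k+1)\|\le\sqrt{N}\eta_{k}(1-1/e)G_{1}$ (directly from Lemma~\ref{lemma:thm2.1}). Combining these gives $\|\g(k)+\mathbf{r}(k+1)\|\le 2\sqrt{N}\eta_{k}(1-1/e)G_{1}\le 2\sqrt{N}\eta_{k}G_{1}$, and in the $\y$-case the stray $\|\g(t)\|$ contributes at most $2\sqrt{N}\eta_{t}G_{1}$ to the $k=t$ slot, so both inequalities collapse to the claimed $2\sum_{k=1}^{t}\beta^{t-k}\sqrt{N}\eta_{k}G_{1}$.

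The main obstacle, relative to the purely linear Frank--Wolfe recursion of Lemma~\ref{thm1.2}, is that the projection makes $\x_{i}(t+1)$ a nonlinear function of $\y_{i}(t+1)$, so the projection offset must be propagated through every subsequent consensus step rather than discarded at the update. The observation that rescues the clean geometric bound is that Lemma~\ref{lemma:thm2.1} controls $\|\mathbf{r}_{i}(t+1)\|$ by exactly the same quantity $\eta_{t}(1-1/e)G_{1}$ as the stochastic gradient step itself, so $\g(k)$ and $\mathbf{r}(k+1)$ enter the analysis symmetrically and their sum only costs an extra factor of two in the final bound.
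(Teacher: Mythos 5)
Your proposal is correct and follows essentially the same route as the paper's proof: stack the iterates into block vectors, unroll the recursion $\x(t+1)=\mathbf{r}(t+1)+(\A\otimes\mathbf{I})\x(t)+\eta_t\mathbf{g}(t)$, project with $\tfrac{\mathbf{1}\mathbf{1}^{T}}{N}\otimes\mathbf{I}$, and bound each term via $\|\A^{s}-\tfrac{\mathbf{1}\mathbf{1}^{T}}{N}\|\le\beta^{s}$ together with the uniform bounds $\|\mathbf{r}(k+1)\|,\|\eta_k\mathbf{g}(k)\|\le\sqrt{N}\eta_k(1-1/e)G_1$. The only (harmless) differences are notational — you fold $\eta_k$ into $\g(k)$ and explicitly track the cancellation of the $(\A\otimes\mathbf{I})^{t}\x(1)$ term under common initialization, which the paper omits implicitly.
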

\begin{proof}
We first define some auxiliary vectors throughout the concatenation of the local variables. We set:
\begin{equation*}
    \begin{aligned}
       &\y(t+1)=[\y_{1}(t+1);\dots;\y_{N}(t+1)]\in\R^{Nn},\\ 
       &\x(t+1)=[\x_{1}(t+1);\dots;\x_{N}(t+1)]\in\R^{Nn},\\ 
       &\mathbf{r}(t+1)=[\mathbf{r}_{1}(t+1);\dots;\mathbf{r}_{N}(t+1)]\in\R^{Nn},\\
       &\mathbf{g}(t)=(1-1/e)[\widetilde{\nabla}f_{t,1}(z_{1}(t)*\x_{1}(t));\dots;\widetilde{\nabla}f_{t,N}(z_{N}(t)*\x_{N}(t)]\in\R^{Nn}.\\
    \end{aligned}
   \end{equation*}
According to Algorithm~\ref{alg:2}, we have
\begin{equation*}
    \begin{aligned}
       \y(t+1)&=(\A\otimes \mathbf{I})\x(t)+\eta_{t}\mathbf{g}(t),\\
       \x(t+1)&=\mathbf{r}(t+1)+\y(t+1)=\mathbf{r}(t+1)+(\A\otimes\mathbf{I})\x(t)+\eta_{t}\mathbf{g}(t),\\
       \end{aligned}
\end{equation*} where the $\mathbf{I}$ is the $n$-dimensional identity matrix.
By iteration, we have
\begin{equation*}
    \begin{aligned}
       \x(t+1)&=\mathbf{r}(t+1)+(\A\otimes\mathbf{I})\x(t)+\eta_{t}\mathbf{g}(t)\\
       &=\sum_{k=1}^{t}(\A\otimes \mathbf{I})^{t-k}\mathbf{r}(k+1)+\sum_{k=1}^{t}(\A\otimes \mathbf{I})^{t-k}\eta_{k}\mathbf{g}(k).
       \end{aligned}
\end{equation*}
Therefore, we also have
\begin{equation*}
    \begin{aligned}
    \y(t+1)=(\A\otimes \mathbf{I})\x(t)+\eta_{t}\mathbf{g}(t)
    =\sum_{k=1}^{t-1}(\A\otimes \mathbf{I})^{t-k}\mathbf{r}(k+1)+\sum_{k=1}^{t}(\A\otimes \mathbf{I})^{t-k}\eta_{k}\mathbf{g}(k).
    \end{aligned}
\end{equation*}
Then, we construct a new vector.
\begin{equation*}
    \begin{aligned}
    (\frac{\mathbf{1}\mathbf{1}^{T}}{N}\otimes \mathbf{I})\x(t+1)&=[\bar{\x}(t+1);\dots;\bar{\x}(t+1)]\\
    &=\sum_{k=1}^{t}(\frac{\mathbf{1}\mathbf{1}^{T}}{N}\otimes \mathbf{I})\mathbf{r}(k+1)+\sum_{k=1}^{t}(\frac{\mathbf{1}\mathbf{1}^{T}}{N}\otimes I)\eta_{k}\mathbf{g}(k),
    \end{aligned}
\end{equation*} 
where $\mathbf{1}$ is a $N$-dimensional column vector whose all elements is $1$; the final equality follows from $\mathbf{1}^{T}A=\mathbf{1}^{T}$.

Therefore, we have
\begin{equation*}
    \begin{aligned}
    & \sqrt{\sum_{i=1}^{N}\|\bar{\x}(t+1)-\x_{i}(t+1)\|^{2}}\\
    = & \|(\frac{\mathbf{1}\mathbf{1}^{T}}{N}\otimes \mathbf{I})\x(t+1)-\x(t+1)\|\\
    = & \|\sum_{k=1}^{t}((\frac{\mathbf{1}\mathbf{1}^{T}}{N}-\A^{t-k})\otimes\mathbf{I})\mathbf{r}(k+1)+\sum_{k=1}^{t}((\frac{\mathbf{1}\mathbf{1}^{T}}{N}-\A^{t-k})\otimes\mathbf{I})\eta_{k}\mathbf{g}(k)\|\\
    \le & \sum_{k=1}^{t}\|\frac{\mathbf{1}\mathbf{1}^{T}}{N}-\A^{t-k}\|\|\mathbf{r}(k+1)\|+\sum_{k=1}^{t}\|\frac{\mathbf{1}\mathbf{1}^{T}}{N}-\A^{t-k}\|\|\eta_{k}\mathbf{g}(k)\|\\
    \le & 2\sum_{k=1}^{t}\beta^{t-k}\sqrt{N}\eta_{k}(1-1/e)G_{1},
    \end{aligned}
\end{equation*} 
where the final inequality comes from $\|\frac{\mathbf{1}\mathbf{1}^{T}}{N}-\A^{t-k}\|\le\beta^{t-k}$ for any $k\in\{1,\dots,t\}$ and Lemma~\ref{lemma:thm2.1}.

Similarly, we have
\begin{equation*}
    \begin{aligned}
    & \sqrt{\sum_{i=1}^{N}\|\bar{\x}(t)-\y_{i}(t+1)\|^{2}}\\
    = & \|(\frac{\mathbf{1}\mathbf{1}^{T}}{N}\otimes \mathbf{I})\x(t)-\y(t+1)\|\\
    = & \|\sum_{k=1}^{t-1}((\frac{\mathbf{1}\mathbf{1}^{T}}{N}-\A^{t-k})\otimes I)\mathbf{r}(k+1)+\sum_{k=1}^{t-1}((\frac{\mathbf{1}\mathbf{1}^{T}}{N}-\A^{t-k})\otimes I)\eta_{k}\mathbf{g}(k)-\eta_{t}\mathbf{g}(t)\|\\
    \le & \sum_{k=1}^{t-1}\|\frac{\mathbf{1}\mathbf{1}^{T}}{N}-\A^{t-k}\|\|\mathbf{r}(k+1)\|+\sum_{k=1}^{t-1}\|\frac{\mathbf{1}\mathbf{1}^{T}}{N}-\A^{t-k}\|\|\eta_{k}\mathbf{g}(k)\|+\|\eta_{t}\mathbf{g}(t)\|\\
    \le & 2\sum_{k=1}^{t}\beta^{t-k}\sqrt{N}\eta_{k}(1-1/e)G_{1}.
    \end{aligned}
\end{equation*}
\end{proof}
Now, we prove Theorem~\ref{thm:2}.
\begin{proof}
From Lemma~\ref{lemma:thm2.3}, we have
\begin{equation*}
    \begin{aligned}
     & \sum_{t=1}^{T}\sum_{i=1}^{N}\E(f_{t,i}(\x_{i}(t)))-(1-1/e)\sum_{t=1}^{T}\sum_{i=1}^{N}f_{t,i}(\x^{*})\\
     \ge & \frac{N}{2}\sum_{t=1}^{T}\frac{\E(\|\bar{\x}(t+1)-\x^{*}\|^{2})-\E(\|\bar{\x}(t)-\x^{*}\|^{2})}{\eta_{t}}-2(1-1/e)^{2}NG_{1}^{2}\sum_{t=1}^{T}\eta_{t}\\
     & - (1-1/e)G_{1}(\sum_{t=1}^{T}\sum_{i=1}^{N}\E(\|\bar{\x}(t)-\y_{i}(t+1)\|^{2})+\sum_{t=1}^{T}\sum_{i=1}^{N}\E(\|\bar{\x}(t)-\x_{i}(t)\|^{2}))\\
     \ge & \frac{N}{2}\sum_{t=1}^{T}\frac{\E(\|\bar{\x}(t+1)-\x^{*}\|^{2})-\E(\|\bar{\x}(t)-\x^{*}\|^{2})}{\eta_{t}}-2(1-1/e)^{2}NG_{1}^{2}\sum_{t=1}^{T}\eta_{t}\\
     & - (1-1/e)G_{1}\E(\sum_{t=1}^{T}\sqrt{N}\sqrt{\sum_{i=1}^{N}\|\bar{\x}(t)-\y_{i}(t+1)\|^{2}}+\sum_{t=1}^{T}\sqrt{N}\sqrt{\sum_{i=1}^{N}\|\bar{\x}(t)-\x_{i}(t)\|^{2}})\\
     \ge & \frac{N}{2}\sum_{t=1}^{T}\frac{\E(\|\bar{\x}(t+1)-\x^{*}\|^{2})-\E(\|\bar{\x}(t)-\x^{*}\|^{2})}{\eta_{t}}-2(1-1/e)^{2}NG_{1}^{2}\sum_{t=1}^{T}\eta_{t}\\ 
     & -4(1-1/e)^{2}NG_{1}^{2}(\sum_{t=1}^{T}\sum_{k=1}^{t}\beta^{t-k}\eta_{k})\\
     = & \frac{N}{2}\sum_{t=1}^{T}\frac{\E(\|\bar{\x}(t+1)-\x^{*}\|^{2})-\E(\|\bar{\x}(t)-\x^{*}\|^{2})}{\eta_{t}}-2(1-1/e)^{2}NG_{1}^{2}\sum_{t=1}^{T}\eta_{t}\\
     & -4(1-1/e)^{2}NG_{1}^{2}(\sum_{k=1}^{T}\sum_{t=k}^{T}\beta^{t-k}\eta_{k})\\
     \ge & \frac{N}{2}\sum_{t=1}^{T}\frac{\E(\|\bar{\x}(t+1)-\x^{*}\|^{2})-\E(\|\bar{\x}(t)-\x^{*}\|^{2})}{\eta_{t}}-2(1-1/e)^{2}NG_{1}^{2}\sum_{t=1}^{T}\eta_{t}\\
     & -\frac{4(1-1/e)^{2}NG_{1}^{2}}{1-\beta}\sum_{k=1}^{T}\eta_{k}.\\
\end{aligned}
\end{equation*}
Next, we show that
\begin{equation*}
    \begin{aligned}
    &\frac{N}{2}\sum_{t=1}^{T}\frac{\E(\|\bar{\x}(t+1)-\x^{*}\|^{2})-\E(\|\bar{\x}(t)-\x^{*}\|^{2})}{\eta_{t}}\\
    = & \frac{N}{2}(\frac{\E(\|\bar{\x}(T+1)-\x^{*}\|^{2})}{\eta_{T}}+\sum_{t=1}^{T-1}\E(\|\bar{\x}(t+1)-\x^{*}\|^{2})(\frac{1}{\eta_{t}}-\frac{1}{\eta_{t+1}})-\frac{\E(\|\bar{\x}(1)-\x^{*}\|^{2})}{\eta_{1}})\\
    \ge & \frac{N}{2}(\frac{\E(\|\bar{\x}(T+1)-\x^{*}\|^{2})}{\eta_{T}}+\mathrm{diam}(\K)\sum_{t=1}^{T-1}(\frac{1}{\eta_{t}}-\frac{1}{\eta_{t+1}})-\frac{\E(\|\bar{\x}(1)-\x^{*}\|^{2})}{\eta_{1}})\\
    \ge & \frac{N}{2}(\frac{\E(\|\bar{\x}(T+1)-\x^{*}\|^{2})}{\eta_{T}}+\frac{\mathrm{diam}(\K)}{\eta_{1}}-\frac{\mathrm{diam}(\K)}{\eta_{T}}-\frac{\|\bar{\x}(1)-\x^{*}\|^{2}}{\eta_{1}})\\
    \ge & -\frac{N}{2}\frac{\mathrm{diam}(\K)}{\eta_{T}}.
    \end{aligned}
\end{equation*}
Therefore, we obtain
\begin{equation*}
    \begin{aligned}
     &\sum_{t=1}^{T}\sum_{i=1}^{N}\E(f_{t,i}(\x_{i}(t)))-(1-1/e)\sum_{t=1}^{T}\sum_{i=1}^{N}f_{t,i}(\x^{*}) \\
     \ge &-\frac{N\mathrm{diam}(\K)}{2\eta_{T}}-(1-1/e)^{2}(2NG_{1}^{2}+\frac{4NG_{1}^{2}}{1-\beta})\sum_{t=1}^{T}\eta_{t}.
    \end{aligned}
\end{equation*}
Due to $\|\widetilde{\nabla}f_{t,i}\|\le G_{1}$, we have$\|\nabla f_{t,i}\|\le G_{1}$ such that $|f_{t,i}(\x_{i}(t))-f_{t,i}(\x_{j}(t))|\le|f_{t,i}(\x_{i}(t))-f_{t,i}(\bar{\x}(t))|+|f_{t,i}(\bar{\x}(t))-f_{t,i}(\x_{j}(t))|\le G_{1}\|\x_{i}(t)-\bar{\x}(t)\|+G_{1}\|\x_{j}(t)-\bar{\x}(t)\|$. As a result,
\begin{equation*}
    \begin{aligned}
     & \sum_{t=1}^{T}\sum_{i=1}^{N}|f_{t,i}(\x_{i}(t))-f_{t,i}(\x_{j}(t))|\\
     \le & G_{1}\sum_{t=1}^{T}\sum_{i=1}^{N}\|\x_{i}(t)-\bar{\x}(t)\|+G_{1}N\sum_{t=1}^{T}\|\x_{j}(t)-\bar{\x}(t)\|\\
     \le & G_{1}\sqrt{N}\sum_{t=1}^{T}\sqrt{\sum_{i=1}^{N}\|\x_{i}(t)-\bar{\x}(t)\|^{2}}+G_{1}N\sum_{t=1}^{T}\sqrt{\sum_{i=1}^{N}\|\x_{i}(t)-\bar{\x}(t)\|^{2}}\\
     \le & \frac{2G^{2}_{1}(N+N^{3/2})}{\beta}\sum_{t=1}^{T}\sum_{k=1}^{t}\beta^{t-k}\eta_{k}\\
     \le & \frac{2G^{2}_{1}(N+N^{3/2})}{\beta}\sum_{k=1}^{T}\sum_{t=k}^{T}\beta^{t-k}\eta_{k}\\
     \le & \frac{2G^{2}_{1}(N+N^{3/2})}{(1-\beta)\beta}\sum_{k=1}^{T}\eta_{k}.
    \end{aligned}
\end{equation*}
Finally, we obtain
\begin{equation*}
    \begin{aligned}
     &\sum_{t=1}^{T}\sum_{i=1}^{N}\E(f_{t,i}(\x_{j}(t)))-(1-1/e)\sum_{t=1}^{T}\sum_{i=1}^{N}f_{t,i}(\x^{*}) \\
     \ge &-\frac{N\mathrm{diam}(\K)}{2\eta_{T}}-(2NG_{1}^{2}+\frac{4NG_{1}^{2}}{1-\beta}+\frac{2G^{2}_{1}(N+N^{3/2})}{(1-\beta)\beta})\sum_{t=1}^{T}\eta_{t}.
    \end{aligned}
\end{equation*}
\end{proof}

\end{document}